\def\eqref#1{equation~\ref{#1}}
\def\1{\bm{1}}
\DeclareMathAlphabet{\mathsfit}{\encodingdefault}{\sfdefault}{m}{sl}
\SetMathAlphabet{\mathsfit}{bold}{\encodingdefault}{\sfdefault}{bx}{n}
\def\sP{{\mathbb{P}}}
\newcommand{\E}{\mathbb{E}}
\newcommand{\R}{\mathbb{R}}
\DeclareMathOperator*{\argmax}{arg\,max}
\definecolor{cool-blue}{HTML}{1D6996}
\definecolor{cool-purple}{HTML}{5F4690}
\definecolor{cool-teal}{HTML}{38A6A5}
\definecolor{cool-green}{HTML}{0F8554}
\definecolor{dark-green}{rgb}{0.0,0.5,0.0}
\newcommand{\algname}[1]{{\small\sf#1}}
\renewcommand*{\backrefalt}[4]{%
    \ifcase #1 \footnotesize{(Not cited.)}%
    \or        \footnotesize{(Cited on page~#2)}%
    \else      \footnotesize{(Cited on pages~#2)}%
    \fi}
\newcommand{\set}[1]{\mathcal{#1}}
\newcommand{\SM}{\ensuremath{%
  \stackon[-8.5pt]{\mathcal{M}}{\widetilde{\phantom{\mathcal{M}}}}%
}}
\newcommand{\CM}{\set{M}_c}
\newcommand{\D}{\set{D}}
\newcommand{\model}{\set{F}}
\newcommand{\opi}{\pi^*}
\newcommand{\sopi}{\bar \pi^*}
\newcommand{\ocpi}{\pi_c^*}
\newcommand{\socpi}{\bar{\pi}^{*}_{c,n}}
\newcommand{\sppi}{\bar{\pi}_n}
\newcommand{\ppi}{\pi_n}
\newcommand{\bpi}{\hat{\pi}}
\newcommand{\oca}{a_c^*}
\newcommand{\soca}{\bar{a}^{*}_{c,n}}
\newcommand{\pVc}{\bar V_c^{\bpi}}
\newcommand{\pVr}{\underline{V}^{\bpi}_r}
\newcommand{\Vc}{V_c^{\bpi}}
\newcommand{\tVc}{\tilde{V}_c^{\bpi}}
\newcommand{\tVtc}{\tilde{V}_{\tilde c}^{\bpi}}
\newcommand{\Vcn}{V_{c,n}^{\bpi}}
\newcommand{\Qcn}{Q_{c,n}^{\bpi}}
\newcommand{\Qtcn}{Q_{\tilde c,n}^{\bpi}}
\newcommand{\Qcz}{Q_{c,0}^{\bpi}}
\newcommand{\Vr}{V_r^{\bpi}}
\newcommand{\Jr}{J_r}
\newcommand{\Jtr}{J_{\tilde r}}
\newcommand{\Jc}{J_c}
\newcommand{\ct}{c_{<t}}
\newcommand{\PERa}{\Delta_n^1}
\newcommand{\PERb}{\Delta_n^2}
\newcommand{\eqdef}{\coloneqq}
\DeclareMathOperator{\st}{\,\mathrm{s.t. }\,}
\DeclareMathOperator{\ones}{\mathbf{1}}
\newtheorem{definition}{Definition}
\crefname{definition}{definition}{definitions}
\newtheorem{assumption}{Assumption}
\crefname{assumption}{assumption}{assumptions}
\crefname{theorem}{theorem}{theorems}
\crefname{corollary}{corollary}{corollaries}
\newtheorem{lemma}{Lemma}
\crefname{lemma}{lemma}{lemmas}
\theoremstyle{remark}
\newtheorem{remark}{Remark}
\crefname{remark}{remark}{remarks}
\newcounter{relctr} 
\everydisplay\expandafter{\the\everydisplay\setcounter{relctr}{0}} 
\newcommand\labelrel[2]{%
  \begingroup
    \refstepcounter{relctr}%
    \stackrel{\textnormal{(\text{\roman{relctr})}}}{\mathstrut{#1}}%
    \originallabel{#2}%
  \endgroup
}
\title{Safe Exploration via Policy Priors}
\author{%
Manuel Wendl, Yarden As\thanks{Equal contribution. Corresponding authors: \texttt{\{mwendl,yardas\}@ethz.ch}} \\
ETH Zurich \\
\And
Manish Prajapat 
\\
ETH Zurich \\
\And
Anton Pollak 
\\
ETH Zurich \\
\AND
Stelian Coros
\\
ETH Zurich \\
\And
Andreas Krause
\\
ETH Zurich \\
}
\begin{document}

\maketitle

\begin{abstract}
\looseness=-1
Safe exploration is a key requirement for reinforcement learning (RL) agents to learn and adapt online, beyond controlled (e.g. simulated) environments. In this work, we tackle this challenge by utilizing suboptimal yet conservative policies (e.g., obtained from offline data or simulators) as priors. Our approach, \algname{SOOPER}, uses probabilistic dynamics models to optimistically explore, yet pessimistically fall back to the conservative policy prior if needed. We prove that \algname{SOOPER} guarantees safety throughout learning, and establish convergence to an optimal policy by bounding its cumulative regret. Extensive experiments on key safe RL benchmarks and real-world hardware demonstrate that \algname{SOOPER} is scalable, outperforms the state-of-the-art and validate our theoretical guarantees in practice.
\end{abstract}

\section{Introduction}
\label{sec:introduction}
\looseness=-1A defining feature of intelligence is the ability to utilize online streams of information to learn and adapt over time \citep{silver2025welcome}.
Reinforcement learning (RL) provides a framework for online learning without supervision, driving several notable real-world applications \citep{silver2017mastering,ouyang2022traininglanguagemodelsfollow}.
Safety is key to unlocking RL in the physical world---RL agents simply cannot risk taking actions that lead to catastrophic outcomes~\citep{dalrymple2024towards}, even \emph{while} learning, when they lack complete knowledge about their environments \citep{legg2023system}.
Such agents must \emph{explore safely} \citep{amodei2016concrete}, gradually discovering new safe behaviors to solve their tasks.

\looseness=-1
Safe exploration is a major challenge in practice. Methods with theoretical safety guarantees often struggle to scale to complex, general-purpose tasks, whereas scalable approaches typically fail to ensure safety during learning (cf. \Cref{sec:related-works}). A key reason lies in how they incorporate prior knowledge about the environment---a crucial component for anticipating danger without learning through harmful trial-and-error. Such prior knowledge can be instantiated via \emph{policies}. %
Beyond primarily maintaining safety, as done in much of prior work, we argue that prior policies can also \emph{guide} exploration toward promising regions of the environment, enabling agents to provably learn \emph{near-optimal} policies.

Motivated by this insight, we propose \textbf{S}afe \textbf{O}nline \textbf{O}ptimism for \textbf{P}essimistic \textbf{E}xpansion in \textbf{R}L---\algname{SOOPER}---a model-based RL algorithm for safe exploration in constrained Markov decision processes \citep[CMDP,][]{altman}. \algname{SOOPER} uses prior policies that can be derived from scarce offline data or simulation under distribution shifts. Such policies are invoked \emph{pessimistically} during online rollouts to maintain safety. These rollouts are collected \emph{optimistically} so as to maximize information about a world model of the environment. Using this model, \algname{SOOPER} constructs a simulated RL environment, used for planning and exploration \citep{amodei2016concrete}, whose trajectories \emph{terminate} once the prior policy is invoked. The key idea is that early terminations incentivize the agent to avoid the prior policy when trajectories with higher returns can be obtained safely. This design allows \algname{SOOPER} to leverage standard RL methods, bypassing the complexity of directly solving CMDPs. More fundamentally, this enables a novel bound on the agent's cumulative regret.

\paragraph{Our contribution.}
\begin{itemize}[leftmargin=0.5cm, parsep=0.3em]
    \item First, we propose \algname{SOOPER}, a scalable model-based RL algorithm for safe exploration in continuous state-action spaces that leverages offline- and simulation-trained policies as safe priors.
    \item \looseness=-1Next, under standard regularity assumptions, we prove that \algname{SOOPER} guarantees constraint satisfaction with high probability throughout learning. We additionally show that \algname{SOOPER} gradually expands an implicit set of safe policies until it converges to the feasible set defined by the true CMDP. This is accomplished by reformulating the problem as an unconstrained MDP, making the approach naturally compatible with standard deep RL methods, while avoiding min-max formulations typical for CMDPs. Using this formulation, we further establish a novel upper bound on the cumulative regret, improving over prior works that provide optimality guarantees only at the end of training with arbitrarily poor performance during exploration~\citep{Wagener2021SafeRL,as2025actsafe}.
    \item \looseness=-1Finally, we conduct extensive empirical evaluation of \algname{SOOPER}, comparing it with other state-of-the-art safe exploration algorithms when initialized from offline- and simulation-trained policies in RWRL benchmark \citep{challengesrealworld} and SafetyGym \citep{Ray2019}. In all these experiments, \algname{SOOPER} substantially outperforms the baselines in terms of safety and performance. We further validate \algname{SOOPER} for safe online learning on \emph{hardware}, giving empirical evidence that our theory translates to the real world.
\end{itemize}

\section{Related Works}
\label{sec:related-works}
Prior works on safe exploration fall broadly into two categories: theoretically principled methods with guarantees on safety and optimality and those prioritizing practicality and scalability.

\paragraph{Provable safety.}\looseness=-1 
A canonical line of work leverages safe Bayesian optimization \citep{pmlr-v37-sui15}, which has seen real-world success \citep{Berkenkamp2016BayesianOW,kirschner19a,Bhavya}, but typically ignores the Markovian structure of sequential decision problems and is limited to a few parameters. 
\citet{Berkenkamp2017SafeMR} extend to continuous MDPs and provide safety and optimality guarantees via Lyapunov stability, though they rely on manual resets of the system at each timestep and state-space discretization for safe set computation.
In contrast, \citet{manish} leverage model predictive control (MPC) to implicitly compute these sets and ensure that the agent can always return to a safe state.
Other MPC-based methods ensure safety either through ``backup'' policies~\citep{Koller2018LearningBasedMP,prajapat2024towards,prajapat2025finite} or via low-fidelity models with reachability analysis \citep{hewing2019cautious} and safety filters \citep{wabersich2021predictive,curi2022safe}. However, these approaches lack optimality guarantees.
A substantial body of work addresses safe exploration in \emph{discrete} CMDPs~\citep{moldovan2012safe,turchetta2016safe,Wachi_Sui_Yue_Ono_2018,wachi2020safereinforcementlearning,ding2021provably,simao2021alwayssafe,prajapat2022near,bura2022dope}. 
Akin to our work, \citet{bura2022dope} propose a model-based approach that uses optimism and pessimism to provide optimality and safety guarantees for small-scale tabular problems. 
Although the above methods guarantee safety throughout learning, scaling them to high-dimensional problems is computationally hard.

\paragraph{Scalability.}
\looseness=-1
The works of \citet{dalal2018safe,eysenbach2018leave,srinivasan2020learning,RecoveryRL,bharadhwaj2021conservative,thomas2021safe,sun2021safe,liu2022constrained,as2022constrained,sootla2022saute,huang2024safedreamer} use scalable tools from deep RL for safe learning, with different levels of theoretical guarantees and effectiveness in maintaining safety throughout learning in practice. Safe exploration has also been studied from an \emph{optimization} perspective, with methods aiming to guarantee feasibility of all optimization iterates. These techniques span trust regions~\citep{cpo,milosevic2024embedding}, Lyapunov stability~\citep{chow2018lyapunov}, interior-point \citep{liu2020ipo,JMLR:v25:22-0878,ni2025safe}, and primal-dual optimization \citep{usmanova2025safe}. While these methods scale well, they often overlook the exploration aspect of learning, namely how to actively sample data that maximizes information about the environment. As a result, they often lack optimality except under assumptions on the initial state distribution.

\paragraph{Optimality.} 
\looseness=-1
\citet{as2025actsafe} propose \algname{ActSafe}, which guarantees safety and optimality for \emph{simple} regret, relying on reward-free exploration, and therefore might perform poorly during exploration. \citet{NEURIPS2023_5d4cd12e} introduce \algname{MASE}, which enforces safety under high-probability constraints on safe states but relies on a restrictive ``emergency stop'' action that can effectively `stop time' at any state. Lastly, \citet{Wagener2021SafeRL} propose \algname{SAILR}, a state-of-the-art algorithm that like \algname{SOOPER}, reformulates safety through terminations whenever their ``backup'' policy is invoked. However, its guarantee on simple regret hinges on the probability of such resets vanishing over time, yet no formal conditions are provided to ensure this occurs. In this work, we relax several of these assumptions and establish a novel bound on the cumulative regret, while also demonstrating improved empirical performance.

\section{Problem Setting}
\label{sec:problem-setting}
\paragraph{Constrained Markov decision processes.}
\looseness=-1
Safety in RL has been conceptualized in several ways across the literature \citep{JMLR:v16:garcia15a,brunke2022safe,gu2024review}. In this work, we focus on constrained Markov decision processes \citep[CMDP, ][]{altman}, due to their flexibility and capacity to generalize several notable formulations of safety \citep[cf.][]{Wagener2021SafeRL,curi2022safe,NEURIPS2023_5d4cd12e}. A discounted infinite-horizon CMDP is defined by the tuple \(\CM \eqdef (\set{S},\set{A},p,r,c,\gamma,\rho_0)\), where \(\set{S} \subseteq \R^{d_{\set{S}}}\) and \(\set{A}\subset \R^{d_{\set{A}}}\) are the continuous state and compact action spaces, with the transition probability \(p(s_{t+1}|s_t,a_t)\) of reaching \(s_{t+1}\in \set{S}\) by applying \(a_t\in \set{A}\) in \(s_t \in \set{S}\). We consider the transition probabilities \(p\) to be governed by the \emph{unknown} stochastic dynamics 
\begin{equation}\label{eq:dynamics}
    s_{t+1} = f(s_t,a_t) + \omega_t, \quad (s_t,a_t) \in \set{S}\times\set{A}, \quad s_0\sim\rho_0,
\end{equation}
with the initial state distribution \(\rho_0\) and additive noise \(\omega_t \in \set{W}\subseteq \R^{d_{\set{S}}}\). Throughout our theoretical analysis, we consider the setting of additive zero-mean Gaussian noise. 
\begin{assumption}[Gaussian noise \citep{KakadeInfo}]\label{assume:noise}
    The additive noise \(\omega_t\) is independent and identically distributed (i.i.d.) zero-mean Gaussian noise with variance \(\sigma^2\).
\end{assumption}
We note that this assumption can be relaxed to a richer class of sub-Gaussian distributions.
The reward and the cost are given by \(r:\set{S}\times \set{A} \to [0,R_{\text{max}}]\) and \(c: \set{S} \times \set{A} \to [0,C_{\text{max}}]\) with discount factor \(\gamma \in (0,1)\). We assume the following regularity conditions for the dynamics, reward, and cost.
\begin{assumption}[Lipschitz-continuity \citep{Berkenkamp2017SafeMR}]\label{assume:regularity}
    The dynamics \(f\), reward \(r\), cost \(c\) and epistemic uncertainty $\sigma_n, \forall n \geq0$  are Lipschitz continuous with known constants \(L_f, L_r, L_c, L_\sigma\).
\end{assumption}
\begin{assumption}[Regularity \citep{Berkenkamp2017SafeMR}]\label{assume:RKHS}
    The unknown dynamics \(f\) lie in an RKHS associated with a kernel \(k\) and have a component-wise bounded norm \(||f_i||_k \leq B\) with a known constant \(B\). Moreover, we assume the kernel \(k\) to be bounded by \(k((s,a),(s,a)) \leq k_{max}, \forall (s,a) \in \set{S}\times \set{A}\).
\end{assumption}
\looseness=-1Intuitively, this assumption means that the complexity of $f$ is controlled by the kernel, which in turn enables generalization from limited data. 
Our analysis uses \Cref{assume:RKHS} to relate the statistical complexity of learning the dynamics to the number of online episodes required for convergence. 
In \Cref{sec:experiments}, we model it using neural networks that can  learn features to represent $f$ accurately.

The goal in our setting is to find a stationary, stochastic policy \(\ocpi(a_t|s_t)\) for the true dynamics \(f\) that maximizes value function \(\smash{V_r^{\pi}(s) \eqdef \E_{\pi}[\sum_{t=0}^\infty \gamma^t r(s_t,a_t) \mid s_0=s]}\), while ensuring that the cost value function \(\smash{V_c^{\pi}(s) \eqdef \E_{\pi}[\sum_{t=0}^\infty \gamma^t c(s_t,a_t) \mid s_0=s]}\) remains below a safety budget \(d\in \R_{\geq 0}\):
\begin{align} \label{eq:objective}
\begin{aligned}
    \ocpi \in \argmax_\pi \Jr(\pi,f) \eqdef& \E_{s_0\sim\rho_0}\left[V_r^{\pi}(s_0)\right] \quad \st \quad \Jc(\pi,f) \eqdef \E_{s_0\sim\rho_0} \left[V_c^{\pi}(s_0)\right] \leq d.
\end{aligned}
\end{align}
We hereafter consider in our theoretical analysis strict feasibility of \(\ocpi\); namely that an optimal policy of the constrained problem does not lie on the boundary of the feasible set but may be arbitrarily close, such that \(\Jc(\ocpi,f)<d\).
\Cref{eq:objective} naturally decouples the objective from safety, so each can be solved independently. This explicit separation prevents agents from exploiting the reward function to achieve high performance at the expense of harmful behaviors, a problem commonly referred to as \emph{reward hacking} in the AI safety literature \citep{amodei2016concrete}.
 
\paragraph{Safe online learning.}
Since knowledge of the dynamics is limited a priori, the agent must learn them through online interaction. We consider learning in an \emph{episodic} setting, where in each episode \(n=1,\dots N\), infinite-horizon trajectories are truncated after $T_n$ steps and the agent is reset to a new initial state~\citep{puterman2014markov,sutton}. The agent deploys a policy \(\ppi\), collecting transitions \(\smash{\D_n = \{(s_{t},a_{t},s_{t+1})_{t=1,\dots T_n}\}}\). The optimality gap due to incomplete knowledge over episodes is quantified by cumulative regret, namely, the difference in performance between $\ppi$ and the optimal policy given full access to $\CM$:
\begin{equation}\label{eq:objective2}
    R(N) = \sum_{n=1}^N \big( \Jr(\ocpi,f) - \Jr(\ppi,f) \big)
    \quad \st \quad 
    \Jc(\ppi,f) \leq d, \ \forall n \in \{1,\dots,N\}.
\end{equation}
The crux of safe exploration is that unlike performance, where suboptimality during learning is tolerable, safety must be maintained throughout all \(n\) episodes.

\paragraph{Probabilistic world models.}
Given full knowledge of the CMDP $\CM$, the agent can solve \Cref{eq:objective2} with no regret. In practice however, the agent has only limited knowledge about $\CM$. We capture this uncertainty by defining a \emph{set of plausible models} of the dynamics $\model_n$ for each iteration 
$n$.
\begin{definition}\label{def:callib}
We define the set of plausible models in episode \(n'\) as \(\smash{\model_{n'} \eqdef \{\tilde f\mid |\tilde f-\mu_{n'}|\leq \beta_{n'} \sigma_{n'}\}}\), described by a nominal model \(\smash{\mu_{n'}: \set{S}\times \set{A} \to \R^{d_\set{S}}}\) and the uncertainty \(\smash{\sigma_{n'}: \set{S}\times\set{A} \to \R^{d_\set{S}}}\) given data \(\D_{1:n'}\). This implies the model of \(f\) is always well-calibrated \citep{HURCL}, if there exist \(\smash{\forall n' \leq n: \beta_{n'} \in \R_{>0}}\) such that, with probability of at least \(1-\delta\) it holds \(\forall (s,a) \in \set{S}\times\set{A}\) that \(f(s,a) \in \model_n := \bigcap_{n'=0}^n \model_{n'}\).
\end{definition}
\Cref{def:callib} formalizes the agent's \emph{epistemic} uncertainty about $\CM$. At each iteration $n$, the agent maintains a confidence set that is statistically consistent with the observed data. This is done by learning a model with nominal dynamics \(\mu_n:\set{S}\times\set{A}\to\R^{d_{\set{S}}}\) and epistemic uncertainty \(\sigma_n:\set{S}\times\set{A}\to\R^{d_{\set{S}}}\) for every \(n=1,\dots,N\), using transitions from all previously collected trajectories \(\D_{1:n}\). This type of statistical models can be learned in practice via various (approximate) Bayesian inference methods~\citep[e.g., ][]{GP,lakshminarayanan2017simple,liu2016stein}. \looseness=-1Additionally, calibration can be achieved empirically with post-hoc recalibration techniques \citep{pmlr-v80-kuleshov18a}.

\paragraph{Pessimistic policy priors.} 
\looseness=-1
The set $\model_0$ has a unique interpretation: it formally represents the agent's prior knowledge about the environment.
For instance, in fully data-driven settings, given an offline dataset $\D_0$, a model $\model_0$ can be learned from $\D_0$ and then used to construct a conservative MDP that penalizes the agent when it goes outside of the data distribution \citep{levine2020offline,yu2020mopo}. Alternatively, in simulator-driven workflows, one has access to a nominal model \(\mu_0:\set{S}\times\set{A} \to \R^{d_{\set{S}}} \), and a bounded `sim-to-real gap' \(\smash{u:\set{S}\times\set{A}\to \R^{d_\set{S}}}\), giving \(\smash{f\in\model_0 = \{\tilde f \mid |\tilde f-\mu_0|\leq u\}}\). Probabilistic models of this sort formalize the conservatism needed for safe prior policies.
\begin{assumption}[Safe policy prior]\label{assume:safeGap}
    We assume access to a pessimistic policy prior \(\bpi:\set{S}\to\set{A}\) that satisfies the constraint in \Cref{eq:objective} for all dynamics \(f \in \model_0 \) such that under the true dynamics we have \(\smash{\Vc(s) \leq V_c^{\ocpi}(s)}\) for all states with probability at least \(1-\delta\).
\end{assumption}
Learning such ``worst-case'' policies builds on the principle of pessimism in the face of uncertainty and has been extensively studied in the robust RL literature \citep{sota-rcmdp,zhangdistributionally,shi2022pessimistic,shi2024distributionally}, offline safe RL \citep{jin2021pessimism,liu2023datasets,zheng2024safe,wachi2025provable}, safe learning from demonstrations ~\citep{schlaginhaufen2023identifiability,lindner2024learning} and when transferring from simulators \citep{ramu,as2025spidrsimpleapproachzeroshot}. In our experiments in \Cref{sec:experiments}, we leverage a few of these methods and demonstrate the effectiveness of combining them with safe online learning.
\section{{\sf SOOPER}: \textbf{S}afe \textbf{O}nline \textbf{O}ptimism for \textbf{P}essimistic \textbf{E}xpansion in \textbf{R}L}
In a nutshell, \algname{SOOPER} can be explained by two modes of operation:
\begin{enumerate*}[label=\textbf{(\roman*)}]
\item safe online data collection by pessimistically invoking a safe prior policy early enough so as to ensure constraint satisfaction;
\item optimistic planning in ``simulation'' via model-based rollouts, driving exploration of the unknown dynamics through intrinsic rewards.
\end{enumerate*}

\paragraph{Quantifying pessimism.}
We define the pessimistic cost value function \(\pVc\) for the policy prior \(\bpi\): \begin{equation}\label{eq:pessValue}
     \pVc(s) \eqdef \max_{\tilde{f} \in \model_n} \E_{\bpi} \left[\sum_{t=0}^\infty \gamma^t c(s_t,a_t) ~\bigg|~ s_0=s\right], \quad s_{t+1} = \tilde f(s_t,a_t) + \omega_t.
\end{equation}
By definition, \Cref{eq:pessValue} considers the worst-case model $\tilde{f}$ in the well-calibrated set $\model_n$ (per \Cref{def:callib}), which implies that \(\pVc(s)\) upper-bounds \(\Vc(s)\) for all \(s\in\set{S}\) with probability \(1-\delta\).
We later formally show that, as the number of episodes \(n\) increases, the agent refines its model \(\model_n\) and tightens the pessimistic upper-confidence bound \(\smash{\pVc}\) on the expected accumulated cost.

\paragraph{A tractable upper bound.}
Computing the pessimistic cost-value \(\pVc\) in \Cref{eq:pessValue} is intractable due to the maximization over $\model_n$. Instead, we upper-bound $\pVc$ by augmenting the cost function with an uncertainty penalty such that
\begin{align}
    \label{eq:pessimistic-cost-value-approx}
    \begin{aligned}
    &\Qcn(s, a) \eqdef \E_{\bpi} \left[\sum_{t=0}^\infty\gamma^t (c(s_t,a_t) + \lambda_{\text{pessimism}} \|\sigma_n(s_t,a_t)\|) ~\bigg|~ s_0=s, a_0=a\right], \\
    &\pVc(s) \leq \Vcn(s) = \E_{a \sim \bpi} \left[\Qcn(s, a)\right], ~ s_{t+1} = \mu_n(s_t, a_t) + \omega_t.
    \end{aligned}
\end{align}
Crucially, directly penalizing the cost function avoids finding the worst-case model within $\model_n$, and lends itself gracefully to existing temporal difference techniques for value function learning. The derivation of \(\lambda_{\text{pessimism}}\) and the proof for this upper bound are deferred to \Cref{slemma:intrinsicBound} in \Cref{sec:inter}.

\paragraph{Online cost tracking.} 
\algname{SOOPER} operates in an online fashion, where for each of the trajectory rollouts, it uses $\Qcn$ to predict when \(a_t \sim \ppi(\cdot|s_t)\) is potentially unsafe. Specifically, this may occur if the sum of the accumulated cost until \(t\), i.e., $\smash{c_{<t} = \sum_{\tau=0}^{t-1} \gamma^\tau c(s_\tau,a_\tau)}$ and the pessimistic cost value \(\gamma^t\Qcn(s_t, a_t)\) exceeds the safety budget \(d\). We exploit this idea in our design of \Cref{algo:safeRollout}. In \Cref{theo:safety} we show that this design guarantees safety throughout all episodes $n = 1, \dots, N$.
\begin{algorithm}
\caption{Safe Rollout via Online Cost Tracking}\label{algo:safeRollout}
    \begin{algorithmic}[1]
        \Require Pessimistic cost value $\Qcn$, policy \(\ppi\), safe policy prior \(\bpi\), and safety budget \(d\)
        \State Initialize accumulated cost \(c_{<0} = 0\) and experience buffer \(\D_n = \emptyset\)
        \For{step \(t = 1, \dots, T_n\)}
            \State Execute \(a_t \sim \sppi(a_t|s_t,c_{<t},\Qcn)\), observe $s_t,a_t,s_{t + 1}$ \Comment{\Cref{theo:safety}}  \label{line:shield_safe}
            \State Append the tuple \((s_{t},a_{t},s_{t+1})\) to \(\D_n\)
            \State Update accumulated cost: \(c_{<t+1} = c_{<t} + \gamma^t c_t\)
        \EndFor
        \State \Return \(\D_n\)
    \end{algorithmic}
\end{algorithm}
\begin{restatable}[Safety guarantee]{theorem}{sthrm} \label{theo:safety}
    Suppose \Cref{assume:noise,assume:regularity,assume:safeGap,assume:RKHS} hold and \(\model_n\) is well-calibrated \(\forall n = 1,\dots,N\) according to \Cref{def:callib}. If actions are executed for all timesteps \(t\) according to
    \begin{equation*}
        \sppi(a_t|s_t, \ct, \Qcn) \eqdef \begin{cases}
            \pi_n(\cdot|s_t) &  \text{if }\, \Phi(s_t,a_t,\ct, \Qcn) < d\\
            \bpi(s_t) & \text{otherwise,}
        \end{cases}
    \end{equation*}
    where \(\Phi\) is the discounted sum of the realized accumulated cost \(\smash{\ct \eqdef \sum_{\tau=0}^{t-1} \gamma^\tau c(s_\tau,a_\tau)}\) until \(t - 1\) and the pessimistic cost value $\smash{\Qcn}$ such that $\smash{\Phi(s_t,a_t,\ct,\Qcn) \eqdef \ct + \gamma^t\Qcn(s_t,a_t)}$
    with factor \(\lambda_{\text{pessimism}} = \bar C\frac{\gamma}{1-\gamma}\frac{\left(1+\sqrt{d_{\set{S}}}\right)\beta_N(\delta)}{\sigma}\), where \(\bar C =\max\{C_{\text{max}},k_{\text{max}}\}\). Then, \Cref{algo:safeRollout} satisfies the safety constraint with probability \(1-\delta\) on \(f\) for every episode \(n=1,\dots,N\).
\end{restatable}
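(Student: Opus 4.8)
The plan is to condition throughout on the high-probability calibration event of \Cref{def:callib} (namely $f\in\model_n$ for all $n$, which holds with probability at least $1-\delta$) and then prove the expected-cost bound $\Jc(\sppi,f)\le d$ deterministically on that event, for each episode $n$. The whole argument rests on two pessimism facts. First, since $f\in\model_n$, the worst-case value upper-bounds the true one, $\Vc(s)\le\pVc(s)$, and by the deferred \Cref{slemma:intrinsicBound} the tractable penalized quantity dominates it, $\pVc(s)\le\Vcn(s)=\E_{a\sim\bpi}[\Qcn(s,a)]$; more generally $\Qcn(s,a)$ upper-bounds the true cost-to-go of playing $a$ once and then reverting to $\bpi$. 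Second, I would record the one-step inequality that is really the engine of the proof:
\begin{equation*}
 c(s,a)+\gamma\,\E_{s'\sim f(s,a)+\omega}\!\left[\Vcn(s')\right]\ \le\ \Qcn(s,a)\qquad\forall (s,a).
\end{equation*}
This is where the explicit constant $\lambda_{\text{pessimism}}=\bar C\frac{\gamma}{1-\gamma}\frac{(1+\sqrt{d_{\set S}})\beta_N(\delta)}{\sigma}$ enters: the penalty $\lambda_{\text{pessimism}}\|\sigma_n(s,a)\|$ built into $\Qcn$ must dominate $\gamma\,(\E_{f}[\Vcn]-\E_{\mu_n}[\Vcn])$, the gap between taking the expectation of $\Vcn$ under the true transition $\mathcal N(f(s,a),\sigma^2\mathrm I)$ and under the nominal transition $\mathcal N(\mu_n(s,a),\sigma^2\mathrm I)$ used to define $\Qcn$. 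Bounding this gap uses $L_c,L_\sigma$-Lipschitzness (\Cref{assume:regularity}), the boundedness $\|\Vcn\|_\infty\lesssim\bar C/(1-\gamma)$, the calibration bound $|f-\mu_n|\le\beta_n\sigma_n$, and a mean-shift / total-variation estimate between the two Gaussians, which is exactly what produces the $(1+\sqrt{d_{\set S}})/\sigma$ and $\gamma/(1-\gamma)$ factors. I expect establishing this inequality with the stated constant to be the main technical obstacle, and it is precisely the content deferred to \Cref{slemma:intrinsicBound}.

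Granting the one-step inequality, the safety argument proceeds by a stopping-time decomposition. Let $\tau$ be the first timestep at which the shield of \Cref{algo:safeRollout} overrides $\ppi$ with $\bpi$ (set $\tau=\infty$ if this never happens), and write $M_t\eqdef\ct+\gamma^t\Vcn(s_t)$ for the realized cost so far plus the pessimistic cost of reverting to the prior. Because $\Qcn(s,a)\ge c(s,a)\ge 0$, every action that passes the shield keeps the realized accumulated cost strictly below budget, $\ct<d$ for all $t\le\tau$; hence on $\{\tau=\infty\}$ the total discounted cost $C$ is at most $d$ pathwise. On $\{\tau<\infty\}$, once $\bpi$ is invoked the trajectory continues (equivalently, in the terminated surrogate MDP, is charged) with the prior, so the conditional expected tail cost from $s_\tau$ is at most $\Vc(s_\tau)\le\Vcn(s_\tau)$, giving $\E[\,C\mid s_\tau,\tau\,]\le M_\tau$. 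Collecting the two cases yields $\Jc(\sppi,f)=\E[C]\le\E[M_\tau\mathbf{1}_{\{\tau<\infty\}}]+d\,\Pr(\tau=\infty)$.

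It then remains to show $\E[M_\tau\mathbf{1}_{\{\tau<\infty\}}]\le d\,\Pr(\tau<\infty)$. Here I would combine the shield's pass-condition with the one-step inequality: whenever step $t$ passes, the shield guarantees $\ct+\gamma^t\Qcn(s_t,a_t)<d$, and the displayed inequality then yields $\E[M_{t+1}\mid\mathcal H_t,\,\text{pass at }t]<d$. Rephrasing this through the discounted budget-headroom $Y_t\eqdef\Vcn(s_t)-(d-\ct)/\gamma^t$ (so that $M_t<d\iff Y_t<0$), each pass forces $\E[Y_{t+1}\mid\mathcal H_t,\,\text{pass}]<0$, which is the supermartingale-type control one needs up to the random time $\tau$; an optional-stopping argument then gives $\E[\gamma^{\tau}Y_\tau\mathbf{1}_{\{\tau<\infty\}}]\le 0$, i.e. $\E[M_\tau\mathbf{1}_{\{\tau<\infty\}}]\le d\,\Pr(\tau<\infty)$. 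The base case $\E_{s_0}[\Vcn(s_0)]\le d$, needed when $\tau=0$, follows from \Cref{assume:safeGap} together with $\model_n\subseteq\model_0$, since $\bpi$ is feasible for every model in $\model_0$. Substituting back gives $\Jc(\sppi,f)\le d$ on the calibration event, hence with probability $1-\delta$, for every episode $n=1,\dots,N$.

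The delicate point in this final step is the bookkeeping at the random time $\tau$: since the shield only caps the per-step pessimistic certificate $\ct+\gamma^t\Qcn(s_t,a_t)$ at $d$ rather than forcing it to decrease, $\ppi$ may legitimately take actions whose pessimistic cost-to-go exceeds that of the prior, so none of the natural potentials ($M_t$, or the certificate evaluated at the chosen action) is a supermartingale outright. The headroom reformulation $Y_t$ is what makes optional stopping applicable, and isolating this is the subtle part of the safety proof itself, complementary to the constant-derivation obstacle in \Cref{slemma:intrinsicBound}.
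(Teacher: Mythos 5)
Your high-level ingredients match the paper's: you condition on calibration, you chain $\Vc\le\pVc\le\Vcn$, and you correctly locate the constant derivation in \Cref{slemma:intrinsicBound}. However, your actual safety argument deviates from the paper's (which is a plain induction on the exact worst-case certificate, \Cref{slemma:safetyPessValue}) and contains genuine gaps. First, you treat the first override time $\tau$ as absorbing (``once $\bpi$ is invoked the trajectory continues with the prior''), but \Cref{algo:safeRollout} re-samples $a_t\sim\ppi$ and re-evaluates $\Phi$ at \emph{every} step, so the agent may return to $\ppi$ after an override; your bound $\E[C\mid s_\tau,\tau]\le c_{<\tau}+\gamma^\tau\Vcn(s_\tau)$ then no longer describes the tail. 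The paper's induction avoids this by maintaining the invariant ``certificate $<d$'' across \emph{both} branches at every timestep, using that the certificate is non-increasing under one Bellman backup of $\bpi$. Second, your base case claims $\Vcn(s_0)\le d$ from \Cref{assume:safeGap}. That assumption only controls the \emph{unpenalized} worst-case value (it gives $\pVc(s_0)\le d$ over $\model_0\supseteq\model_n$); since $\Vcn=\pVc+{}$(uncertainty penalty) can strictly exceed $\pVc$, the penalized value can exceed $d$ at $s_0$ even when the prior is perfectly safe. The paper runs its induction on the exact certificate $\bar\Phi$ built from $\pVc$ and uses the penalized $\Qcn$ only as an upper bound making the pass-condition \emph{sufficient} — overestimation there is harmless, whereas in your potential $M_t$ it is fatal to the base case.

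Third, the optional-stopping step does not close. From the pass-condition and your one-step inequality you get $\E[M_{t+1}\mid\mathcal H_t,\text{pass at }t]<d$, but you need $\E[M_\tau\1_{\{\tau<\infty\}}]\le d\,\Pr(\tau<\infty)$, i.e., the same bound restricted to the sub-event $\{\tau=t+1\}\subseteq\{\text{pass at }t\}$. Conditional-mean bounds do not survive restriction to sub-events, and here the correlation goes the wrong way: $M_{t+1}$ is large precisely when $\Qcn(s_{t+1},a_{t+1})$ is large, which is exactly when the next check fails and $\tau=t+1$. Nor is $M_t$ (equivalently $\gamma^tY_t$) shown to be a supermartingale — you only have a one-sided bound by the constant $d$ on pass steps, not a decrease relative to $M_t$. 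The paper needs none of this machinery because its pass-branch guarantee is pathwise ($\Phi<d$ directly implies $\bar\Phi<d$ for the realized state-action) and its override-branch guarantee follows from the worst-case Bellman recursion of $\pVc$, so the invariant telescopes deterministically (in expectation over the noise) without any stopping-time analysis. To repair your proof you would essentially have to (i) replace $\Vcn$ by $\pVc$ in the override charge, (ii) add the override-step monotonicity of the exact certificate, and (iii) convert the expectation-at-$\tau$ step into a per-step invariant — at which point you have reconstructed the paper's induction.
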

We refer to \Cref{proof:sthrm} for the formal proof. Monitoring costs online allows the agent to execute $\ppi$ until $\bpi$ is required. This enables the agent to visit state-actions that $\bpi$ would avoid, since those would be deemed unsafe by $\pVc(s_0)$.

\paragraph{Simulated exploration.}
While \Cref{theo:safety} guarantees safety during \emph{online} rollouts, the policy $\ppi$ may explore potentially unsafe behaviors by using simulated rollouts on the learned world model.
This allows \algname{SOOPER} to train \(\ppi\) on an \emph{unconstrained} ``planning MDP'' \(\SM\), that differs from the real CMDP $\CM$ in that it reaches a terminal state \(s_\dagger\) after those state-actions that would trigger $\bpi$ during online deployment:
\begin{equation} \label{eq:terminalTransition}
    \tilde{p}(s_{t+1}\mid s_t,a_t) \eqdef \begin{cases} 
        \ones\{s_{t+1} = s_\dagger\}  & \text{if } \Phi(a_t,s_t,c_{<t}, \Qcn) \geq d \text{ or } s_t = s_\dagger, \\
        p(s_{t+1}\mid s_t,a_t) & \text{otherwise}.
    \end{cases} 
\end{equation}
The resulting termination signal encourages the agent to find policies that outperform $\bpi$ if possible. This is done by assigning the terminal reward on $\SM$ to the pessimistic value \(\pVr\) of $\bpi$ upon termination:
\begin{equation} \label{eq:terminalReward}
    \tilde{r}(s_t,a_t) \eqdef \begin{cases}
        \pVr(s_t) & \text{if } \Phi(a_t,s_t,c_{<t},\Qcn) \geq d,\\
        0 & \text{if } s_t =  s_\dagger, \\
        r(s_t,a_t) & \text{otherwise},
    \end{cases}
\end{equation}
whereby,
\begin{equation}\label{eq:pessValueR}
    \pVr(s) \eqdef \min_{\tilde f \in \model_n} \E_{\bpi}\left[\sum_{t=0}^\infty \gamma^tr(s_t,a_t) ~\bigg|~ s_0 = s\right], \quad s_{t+1} = \tilde f(s_t,a_t) + \omega_t.
\end{equation}
This choice of the terminal reward keeps the reward structure consistent with the rewards encountered during online rollouts of \(\bpi\) on \(f\), unlike prior works that treat the terminal reward as a hyperparameter~\citep{thomas2021safe,Wagener2021SafeRL}. \Cref{fig:planning-mdp} illustrates a concrete example of how this design encourages the agent to learn to outperform the prior policy.
\definecolor{sooper}{HTML}{5F4690}
\definecolor{crpo}{HTML}{38A6A5}
\begin{figure}
\vspace{-0.5cm}
\centering
    \begin{tikzpicture}
        \node[anchor=south west,inner sep=0] (image) at (0,0) {
            \includegraphics[trim=0.87cm 0.78cm 1.51cm 0.95cm, clip]{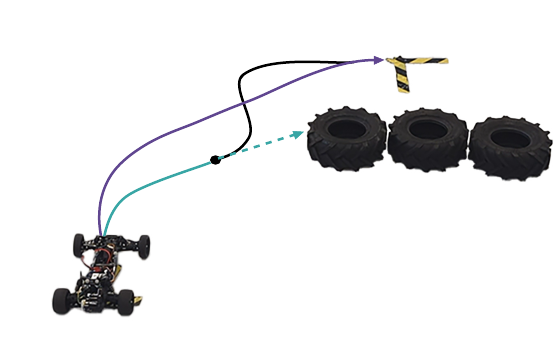}
        };
        \begin{scope}[x={(image.south east)},y={(image.north west)}]
            \node at (0.52, 0.49) {\textcolor{black}{$\Phi(s_t,\dots,Q^{\hat \pi}_{c,n})>d$}};
            \node at (0.41, 0.66) {\textcolor{black}{$s_\dagger$}};
            \node at (0.16, 0.73) {\textcolor{sooper}{$V_r^{\pi_N}(s_0)$}};
            \node at (0.29, 0.38) {\textcolor{crpo}{$V_r^{\ppi}(s_0)$}};
            \node at (0.37, 0.91) {\textcolor{black}{$\underline{V}^{\bpi}_r(s_t)$}};
            \node[font=\bfseries] at (0.45, 1.15) {
                $
                \begin{aligned}
                \textcolor{crpo}{\E_{\ppi}\!\left[\sum_{\tau=0}^{t - 1} \gamma^\tau   r(s_\tau,a_\tau)\right]} +
                \gamma^t\textcolor{black}{\pVr(s_t)}
                \leq \textcolor{sooper}{V_r^{\pi_N}(s_0)}
                \end{aligned}
                $
            };
        \end{scope}
        \node at (10.5cm,2.6cm) {\parbox{{0.45\textwidth}}{\centering
        \captionof{figure}{
        The agent's goal is to reach the cross marker while avoiding obstacles (tires). The agent deploys a policy $\ppi$, however at time $t$, it switches to the prior policy $\bpi$, to maintain the safety criterion of \Cref{algo:safeRollout}.
        The prior policy $\bpi$ ensures safety by following a conservative route, but sacrifices performance, resulting in a lower return $\underline{V}^{\bpi}_r(s_t)$. The trajectory of iteration $n$ is recorded to improve models of subsequent iterations. After $N$ iterations, as more data is gathered, the agent learns a more rewarding trajectory via model-generated rollouts in $\SM$. \href{https://yardenas.github.io/sooper/}{See provided video.}
        }
        \vspace{-0.5cm}
        \label{fig:planning-mdp}
        }};
    \end{tikzpicture}
\end{figure}

\paragraph{Exploration-exploitation and expansion.}
\looseness=-1
Probabilistic models have proven particularly effective in the unconstrained setting. Optimistic planning with these models enables a principled balance between exploration and exploitation, forming the basis of the celebrated UCRL algorithm \citep{NIPS2006_c1b70d96,HURCL}.
However, when safety must hold at every iteration $n$, exploration-exploitation is restricted to the set of policies that are safe with high probability. 
In particular, this set consists only those policies that satisfy the constraint under the \emph{worst-case} model in $\model_n$. Crucially, due to pessimism, an optimal policy $\ocpi$ may not lie in this set initially, as shown in \Cref{fig:expansion}. Nonetheless, this set can be \emph{expanded} by updating the model using the collected measurements from the environment, even in those regions where rewards are predicted to be low under the agent's model at iteration $n$. Prior works typically follow an expand-then-explore-exploit strategy \citep{pmlr-v37-sui15,Berkenkamp2017SafeMR,bura2022dope,as2025actsafe}, which begins by learning the dynamics with reward-free trajectory sampling to expand the safe set, followed by exploration-exploitation to find an optimal and safe policy.
This approach has two limitations:
\begin{enumerate*}[label=\textbf{(\roman*)}]
    \item it requires agents to first learn an auxiliary policy unrelated to the task, leading to wasted compute and exploration;
    \item it restricts the theoretical analysis to simple regret, allowing arbitrarily poor performance during exploration as long as the final policy is near-optimal.
\end{enumerate*}
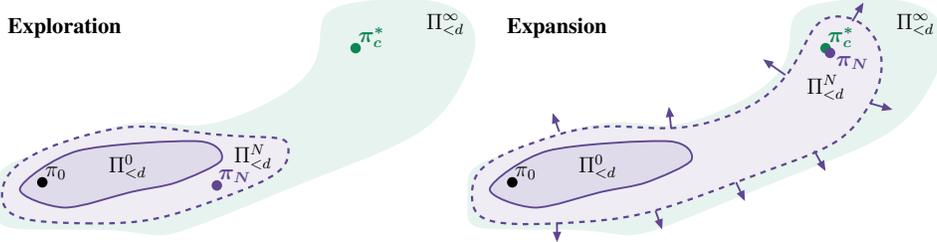
\begin{figure}
    \centering
    \scalebox{0.7}{
\definecolor{color1}{HTML}{5F4690}
\definecolor{color2}{HTML}{5F4690}
\definecolor{color3}{HTML}{5F4690}
\definecolor{color4}{HTML}{5F4690}
\definecolor{reward1}{HTML}{EDAD08} 
\definecolor{reward2}{HTML}{CC503E} 
\definecolor{reward3}{HTML}{0F8554} 

\begin{tikzpicture}[yscale=0.6, xscale=0.85]

\useasboundingbox (0.5,-1.5) rectangle (21.5,6.5);

\pgfmathsetmacro{\xoffset}{10.5}
\tikzset{every node/.style={font=\large}}

\fill[reward3, fill opacity=0.1]
  plot[smooth cycle, tension=1.3] coordinates {
    (0,0) (2,-1) (6,0) (10.5,4) (8,6.2) (6,3) (2,2) 
  };

\fill[white]
  plot[smooth cycle, tension=1] coordinates {
    (0.1,0.1) (2,-0.8) (5,0) (6.5,1.5) (5,2) (2,2)
  };
\draw[color3, line width=1.2pt, dashed, fill=color3, fill opacity=0.1]
  plot[smooth cycle, tension=1] coordinates {
    (0.1,0.1) (2,-0.8) (5,0) (6.5,1.5) (5,2) (2,2)
  };

\draw[color1, line width=1pt, fill=color1, fill opacity=0.1]
  plot[smooth cycle, tension=0.5] coordinates {
    (0.5,0.3) (1.5,-0.3) (3,0.2) (4,0.5) (5,1.2) (4.5,1.8) (1.8,1.5)
  };




\node[circle, fill=reward3, inner sep=2pt] at (8,4.75) {};
\node[color=reward3] at (8.35,5.1) {$\boldsymbol{\pi_c^*}$};

\node[circle, fill=color3, inner sep=2pt] at (4.9,0.4) {};
\node[color=color3] at (5.3,0.7) {$\boldsymbol{\pi_N}$};

\node at (10,5.5) {$\Pi_{<d}^\infty$};
\node at (5.7,1.3) {$\Pi_{<d}^N$};
\node at (2.9,1) {$\Pi_{<d}^0$};

\node[font=\large] at (1.5,5.4) {\textbf{Exploration}};

\node[circle, fill=black, inner sep=2pt] at (1,0.5) {};
\node at (1.3,0.8) {$\pi_0$};


\fill[reward3, fill opacity=0.1]
  plot[smooth cycle, tension=1.3] coordinates {
    (\xoffset,0) (\xoffset+2,-1) (\xoffset+6,0) (\xoffset+10.5,4) (\xoffset+8,6.2) (\xoffset+6,3) (\xoffset+2,2) 
  };

\fill[white]
  plot[smooth cycle, tension=1] coordinates {
    (\xoffset+0.1,0.1) (\xoffset+2,-0.8) (\xoffset+6,0.5) (\xoffset+9,3) (\xoffset+8.2,5.75) (\xoffset+6,2.7) (\xoffset+2,2)
  };
\draw[color3, line width=1.2pt, dashed, fill=color3, fill opacity=0.1]
  plot[smooth cycle, tension=1] coordinates {
    (\xoffset+0.1,0.1) (\xoffset+2,-0.8) (\xoffset+6,0.5) (\xoffset+9,3) (\xoffset+8.2,5.75) (\xoffset+6,2.7) (\xoffset+2,2)
  };

\draw[color1, line width=1pt, fill=color1, fill opacity=0.1]
  plot[smooth cycle, tension=0.5] coordinates {
    (\xoffset+0.5,0.3) (\xoffset+1.5,-0.3) (\xoffset+3,0.2) (\xoffset+4,0.5) (\xoffset+5,1.2) (\xoffset+4.5,1.8) (\xoffset+1.8,1.5)
  };

\draw[color3, -{Triangle[fill=color3,length=2mm,width=1.5mm]}, line width=1pt] (\xoffset+2,-0.8) -- (\xoffset+2,-1.4);
\draw[color3, -{Triangle[fill=color3,length=2mm,width=1.5mm]}, line width=1pt] (\xoffset+4.2,-0.4)-- (\xoffset+4.35,-1);
\draw[color3, -{Triangle[fill=color3,length=2mm,width=1.5mm]}, line width=1pt] (\xoffset+6,0.5) -- (\xoffset+6.2,-0.1);
\draw[color3, -{Triangle[fill=color3,length=2mm,width=1.5mm]}, line width=1pt] (\xoffset+7.75,1.5) -- (\xoffset+8,0.85);
\draw[color3, -{Triangle[fill=color3,length=2mm,width=1.5mm]}, line width=1pt] (\xoffset+9,3) -- (\xoffset+9.5,2.8);
\draw[color3, -{Triangle[fill=color3,length=2mm,width=1.5mm]}, line width=1pt] (\xoffset+8.6,5.6) -- (\xoffset+8.85,6.2);
\draw[color3, -{Triangle[fill=color3,length=2mm,width=1.5mm]}, line width=1pt] (\xoffset+7.1,3.9) -- (\xoffset+6.6,4.4);
\draw[color3, -{Triangle[fill=color3,length=2mm,width=1.5mm]}, line width=1pt] (\xoffset+4.55,2.2) -- (\xoffset+4.5,2.9);
\draw[color3, -{Triangle[fill=color3,length=2mm,width=1.5mm]}, line width=1pt] (\xoffset+2.05,2.1) -- (\xoffset+1.9,2.7);




\node[circle, fill=reward3, inner sep=2pt] at (\xoffset+8,4.75) {};
\node[color=reward3] at (\xoffset+8.34,5.1) {$\boldsymbol{\pi_c^*}$};

\node[circle, fill=color3, inner sep=2pt] at (\xoffset+8.1,4.6) {};
\node[color=color3] at (\xoffset+8.6,4.3) {$\boldsymbol{\pi_N}$};

\node at (\xoffset+10,5.5) {$\Pi_{<d}^\infty$};
\node at (\xoffset+8,3.5) {$\Pi_{<d}^N$};
\node at (\xoffset+2.9,1) {$\Pi_{<d}^0$};

\node[circle, fill=black, inner sep=2pt] at (\xoffset+1,0.5) {};
\node at (\xoffset+1.3,0.8) {$\pi_0$};

\node[font=\large] at (\xoffset+2,5.4) {\textbf{Expansion}};

\end{tikzpicture}}
    \caption{We denote the implicit set of safe policies in iteration \(n\) by \(\Pi_{<d}^n\), based on the learned model \(\model_n\). \textbf{Left:} exploration-exploitation in constrained tasks may not find an optimal policy because search is limited to $\Pi_{<d}^n$. \textbf{Right:} expansion proactively enlarges the safe set and reaches the optimum.
    \vspace{-0.7cm} 
    }
    \label{fig:expansion}
\end{figure}
\paragraph{A unified objective.}
\looseness=-1
In contrast, \algname{SOOPER} addresses the exploration-exploitation-expansion dilemma through \emph{intrinsic rewards} \citep{sukhija2025optimism} with a single tractable objective:
\begin{align} \label{eq:optiPlanning}
\begin{aligned}
        \ppi \in & \arg\max_\pi \E_{\pi}\left[\sum_{t=0}^\infty \gamma^t\tilde{r}(s_t,a_t) + (\gamma^t\lambda_{\text{explore}}+\sqrt{\gamma^t}\lambda_{\text{expand}})\|\sigma_n(s_t,a_t)\|\right],\\
    &~s_{t+1} = \mu_n(s_t,a_t ) + \omega_t, \quad a_t \sim \pi(\cdot | s_t), \quad s_0 \sim \rho_0(\cdot),
\end{aligned}
\end{align}
where \(\lambda_{\text{explore}}\) and \(\lambda_{\text{expand}}\) are weighting factors for the respective components of the intrinsic reward bonus, encouraging exploration and expansion. In \Cref{lemma:optPlanningBound} in \Cref{sec:regret-analysis}, we derive these quantities in closed form and prove that \Cref{eq:optiPlanning} indeed enables expansion. Importantly, since we only augment the reward with exploration bonuses, an approximate solution to \Cref{eq:optiPlanning} can be computed efficiently.

\paragraph{Converging to the optimum.}
By iteratively updating the model \(\model_n\) and planning according to \Cref{eq:optiPlanning} in each episode, \algname{SOOPER} achieves \emph{sublinear cumulative regret} and therefore converges for a sufficient number of episodes \(N\) in \Cref{algo:modelBasedRL} according to \Cref{theo:regret}.
\begin{algorithm}
\caption{Safe Online Optimism for Pessimistic Expansion in RL (\algname{SOOPER})} \label{algo:modelBasedRL}
    \begin{algorithmic}[1]
        \Require Initial dynamics set \(\model_0\), pessimistic cost value \(\Qcz\), policy prior \(\bpi\), safety budget \(d\)
        \State Initialize experience buffer \(\D\)
        \For{episode \(n = 1,\dots, N\)}
            \State Execute \Cref{algo:safeRollout} to collect $\D_n$ \Comment{On the real environment}
            \State Update \(\model_n \gets \D_{1:n}\) \label{line:mb_rl_update_model} \Comment{Refine model}
            \State Obtain \(\ppi, \Qcn\) via \Cref{eq:optiPlanning,eq:pessimistic-cost-value-approx}\Comment{Model-based exploration and planning}\label{line:mb_rl_determine}
        \EndFor\\
        \Return \(\ppi\)
    \end{algorithmic}
\end{algorithm}
\begin{restatable}[Sublinear cumulative regret]{theorem}{rthrm}\label{theo:regret}
Suppose  \Cref{assume:noise,assume:regularity,assume:safeGap,assume:RKHS} hold and the model \(\model_n\) is well calibrated according to \Cref{def:callib}. Then, \Cref{algo:modelBasedRL} guarantees with probability \(1-\delta\)
\begin{align*}
    R(N) &= \sum_{n=1}^N \big( \Jr(\ocpi,f) - \Jr(\sppi,f) \big) \leq \mathcal{O}\left(\Gamma_{N\log(N)}^{7/2}\sqrt{N}\right),
    \\
    \text{and} ~ & \Jc(\sppi,f) \leq d, \ \forall n \in \{1,\dots,N\},
\end{align*}
that is, the cumulative regret grows sub-linearly in \(N\), dependent on the maximal information gain \(\Gamma_{N\log (N)}\), while satisfying the constraint throughout learning by \Cref{theo:safety}.
\end{restatable}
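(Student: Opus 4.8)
The plan is to bound the cumulative regret $R(N) = \sum_{n=1}^N (\Jr(\ocpi,f) - \Jr(\sppi,f))$ by decomposing it through the planning MDP $\SM$. The central idea is that $\sppi$, the policy actually deployed on the true environment via \Cref{algo:safeRollout}, is exactly the policy $\ppi$ optimized on $\SM$ with early terminations whenever $\bpi$ would be invoked. The first step is to relate $\Jr(\sppi,f)$ — the true return — to the return $\ppi$ earns on the \emph{optimistic} planning objective in \Cref{eq:optiPlanning}. Because $\ppi$ maximizes \Cref{eq:optiPlanning} over all policies, and because the model set $\model_n$ is well-calibrated (so $f \in \model_n$ with probability $1-\delta$), the optimistic value of $\ppi$ upper-bounds $\Jr(\ocpi,f)$: the true optimal policy is itself a feasible (safe) candidate inside the planning MDP once the terminal reward $\pVr$ correctly accounts for falling back to $\bpi$. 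This gives an optimism inequality of the form $\Jr(\ocpi,f) \leq \E_{\ppi}[\sum_t \gamma^t \tilde r + \text{bonus}_n]$ evaluated under the nominal dynamics $\mu_n$.

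The second step is to convert the gap between the optimistic planning value and the realized true value into a sum of per-step uncertainty terms. Here I would invoke the simulation/telescoping lemma: the difference between the value of $\ppi$ under the nominal model $\mu_n$ and under the true dynamics $f$ accumulates the one-step model errors $\|f - \mu_n\| \leq \beta_n \sigma_n$ along the trajectory distribution induced by $\ppi$, using the Lipschitz continuity of $r$, $c$, and $\SM$'s value functions from \Cref{assume:regularity}. The intrinsic-reward bonuses $(\gamma^t\lambda_{\text{explore}} + \sqrt{\gamma^t}\lambda_{\text{expand}})\|\sigma_n\|$ in \Cref{eq:optiPlanning} are designed precisely to dominate these accumulated errors (this is the content of \Cref{lemma:optPlanningBound}), so the regret in episode $n$ is bounded by a constant times the expected sum of posterior standard deviations $\sigma_n$ evaluated along the episode-$n$ rollout. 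Summing over $n=1,\dots,N$ then reduces the regret to $\sum_{n=1}^N \sum_{t} \|\sigma_n(s_t^n, a_t^n)\|$.

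The third step is the standard information-theoretic bound on this sum of predictive standard deviations. Using \Cref{assume:RKHS} together with the well-calibration of $\model_n$, the cumulative sum of posterior standard deviations over $N$ episodes (each of effectively $\Gamma$-bounded length due to the discount factor $\gamma$, giving the $N\log N$ horizon factor) is controlled by the maximal information gain: $\sum_{n,t} \|\sigma_n\| \leq \mathcal{O}(\sqrt{N \Gamma_{N\log N}})$ via Cauchy–Schwarz and the pigeonhole/elliptical-potential argument of \citet{KakadeInfo}. The extra powers of $\Gamma_{N\log N}$ and $\beta_N$ accumulate from the Lipschitz constants, the confidence width $\beta_N \sim \sqrt{\Gamma}$, and the dimension factor $(1+\sqrt{d_{\set{S}}})$ appearing in the pessimism and exploration coefficients, yielding the stated $\mathcal{O}(\Gamma_{N\log N}^{7/2}\sqrt{N})$ rate. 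Finally, constraint satisfaction $\Jc(\sppi,f)\leq d$ for every $n$ follows directly from \Cref{theo:safety}, which already guarantees safety of the deployed policy throughout learning, so this part requires no new argument.

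The main obstacle I anticipate is the first step — making the optimism inequality rigorous in the presence of the terminal-reward structure of $\SM$. Specifically, I must show that the terminal reward $\pVr$ assigned upon invoking $\bpi$ (a \emph{pessimistic}, min-over-$\model_n$ quantity) combines correctly with the \emph{optimistic} exploration bonuses so that $\ocpi$ remains a lower-bounded competitor inside the planning MDP. This requires carefully tracking that truncating a trajectory at the fallback point and substituting $\pVr$ does not lose reward relative to $\ocpi$'s true return, which hinges on $\bpi$ being safe for all $f\in\model_0$ (\Cref{assume:safeGap}) and on the pessimistic cost bound $\Vcn \geq \pVc \geq \Vc$ from \Cref{eq:pessimistic-cost-value-approx} ensuring $\ocpi$'s trajectories are never prematurely terminated with high probability. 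Reconciling the asymmetric optimism (reward) and pessimism (cost and terminal reward) within a single value decomposition is where the bookkeeping is most delicate.
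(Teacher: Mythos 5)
Your high-level architecture (optimism, simulation lemma, information gain, safety from \Cref{theo:safety}) matches the paper's, and you correctly identify where the difficulty lies. However, your proposed resolution of that difficulty contains a genuine gap. You argue that the optimism inequality goes through because ``$\ocpi$'s trajectories are never prematurely terminated with high probability.'' This is false early in training, and it is precisely the failure mode the paper's proof must handle. The switching criterion uses the \emph{pessimistic} value $\Qcn$, which inflates the cost by $\lambda_{\text{pessimism}}\sum_t \gamma^t\|\sigma_n(s_t,a_t)\|$; when model uncertainty is large, $\Phi(s_t,a_t,\ct,\Qcn)\geq d$ triggers the fallback to $\bpi$ even for actions drawn from the truly safe $\ocpi$ (this is exactly the ``$\ocpi\notin\Pi^0_{<d}$'' situation in \Cref{fig:expansion}). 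Upon such a spurious termination, $\ocpi$'s continuation value is replaced by the pessimistic $\pVr(s_k)\leq V_r^{\bpi}(s_k)$, which can be far below $\ocpi$'s true return, so $\ocpi$ is \emph{not} a competitive candidate inside $\SM$ without further argument. The paper isolates this loss as a separate regret term $\PERa = \Jr(\ocpi,f)-\Jr(\socpi,f)$ and bounds it by: (i) invoking strict feasibility of $\ocpi$ to obtain a positive margin $\delta_c$, (ii) applying Markov's inequality to bound the per-step probability of spurious termination by $\E[\nu_n\Sigma_t(a_t,f)]/\delta_c$, and (iii) relating the accumulated uncertainty along the hypothetical (never-executed) trajectory to that along the safely executed one via the Gaussian difference lemma (\Cref{lemma:fromPolicyToUncertainty,lemma:relatingUncertaintyTrajectories,lemma:firstRegretTerm}). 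None of this appears in your proposal, and without it the chain $\Jr(\ocpi,f)\leq J(\ppi)$ cannot be established.

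A second, related omission: you treat the intrinsic bonus in \Cref{eq:optiPlanning} as a single quantity designed to dominate the simulation-lemma error. In the paper the two components play distinct roles: $\lambda_{\text{explore}}\gamma^t\|\sigma_n\|$ covers the nominal-vs-true-dynamics gap ($\PERb$, via \Cref{lemma:regretPlanning,lemma:regretSecond}), while $\lambda_{\text{expand}}\sqrt{\gamma^t}\|\sigma_n\|$ --- with its slower $\sqrt{\gamma^t}$ decay arising from the Cauchy--Schwarz step in \Cref{lemma:relatingUncertaintyTrajectories} --- covers the termination-induced regret $\PERa$. Your accounting would justify only the first bonus and would not explain the $\Gamma^{7/2}$ exponent, which comes from $\eta_n\propto\beta_N^3$ in the $\PERa$ bound rather than from Lipschitz constants or the $(1+\sqrt{d_{\set{S}}})$ factor. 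The final step (Cauchy--Schwarz plus the information-gain/truncated-horizon argument) and the safety claim are handled correctly in your proposal.
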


\looseness=-1 
\paragraph{Proof sketch.} 
\looseness=-1
Previous works explicitly expand the safe set of policies via pure exploration. This restricts them to a simple regret analysis, since performance can be evaluated \emph{only after} the reward-free expansion phase. \algname{SOOPER} optimizes a single objective, thus expansion occurs \emph{implicitly, during learning the task}. This allows us to extend the analysis to cumulative regret, providing guarantees on the agent's performance throughout learning. We bound the cumulative regret in \Cref{eq:objective2} by decomposing the performance gap in each iteration $n$ into two terms (\Cref{lemma:regretDecomposition}).
\begin{enumerate*}[label=\textbf{(\roman*)}]
\item The first term corresponds to the regret on the planning MDP \(\SM\) (cf. \Cref{eq:terminalTransition,eq:terminalReward}). Specifically, we compare the performance difference between an optimal policy of $\SM$ and the learned policy $\ppi$. Since this setting is purely unconstrained, we bound this term following the regret analysis for unconstrained MDPs of \citet{KakadeInfo}.
\item Next, we analyze the performance gap due to invoking the prior policy $\bpi$~(\Cref{algo:safeRollout}). In particular, we study the performance of executing $\ocpi$, an optimal policy of the true CMDP $\CM$, however under the safety criterion of \Cref{algo:safeRollout}. Intuitively, due to the pessimism of $\Qcn$, instead of performing optimally with $\ocpi$, \Cref{algo:safeRollout} invokes $\bpi$, thus leading to suboptimal performance. The gap in performance between executing $\ocpi$ freely and under \Cref{algo:safeRollout} is bounded  by the model uncertainty at iteration $n$ (\Cref{lemma:fromPolicyToUncertainty,lemma:firstRegretTerm}). The key intuition is that since $\ocpi$ is in fact safe, as model uncertainty shrinks, \Cref{algo:safeRollout} needs to invoke $\bpi$ less, hence converging to optimal performance.
\end{enumerate*}
Finally, we show that model uncertainty shrinks as $n$ grows under \Cref{assume:RKHS} due to \citet{Chowdhury} and that the sum of the above two terms is upper-bounded by the objective in \Cref{eq:optiPlanning} in each episode $n$. The complete proof is detailed in \Cref{sec:regret-analysis}.

\paragraph{Practical implementation.}
\label{sec:practica-implementation}
\looseness=-1
\href{https://github.com/lasgroup/safe-learning}{Our practical implementation}
follows closely  \Cref{algo:modelBasedRL}. We instantiate a ``deep'' version of \Cref{algo:modelBasedRL} that scales to high-dimensional continuous control tasks, following a model-based actor-critic architecture similar to MBPO \citep{janner2019trust}. \Cref{line:mb_rl_update_model} is implemented via a probabilistic ensemble of neural-networks \citep{chua2018deep}. In our practical implementation we additionally learn the reward and cost functions (see \Cref{sec:extension-world-models}). 
The standard deviation across ensemble predictions is used to estimate the epistemic uncertainty $\sigma_n$ \citep{depeweg2018decomposition}. We found this approach to deliver reliable uncertainty estimates in practice (cf. \Cref{sec:experiments,sec:experiments-appendix}). In \Cref{line:mb_rl_determine}, we solve \Cref{eq:optiPlanning,eq:pessimistic-cost-value-approx} using model-generated rollouts and by performing a fixed number of actor-critic updates as described by \citet{janner2019trust}. 
Adapting MBPO to our setting is straightforward, as it only requires wrapping the model predictions within the MDP defined by $\SM$. This suggests that \algname{SOOPER} is readily extensible to future advancements in deep RL.
\section{Experiments}
\label{sec:experiments}
\looseness=-1We now present our empirical results, demonstrating \algname{SOOPER}'s safety and performance guarantees in practice. In the \Cref{sec:experiments-appendix}, we provide additional ablations, illustrating how each component of \algname{SOOPER} plays a key role in attaining good performance while maintaining safety throughout learning.
\paragraph{Setup.}
In all experiments, we use the model-based architecture described in \Cref{sec:practica-implementation}. We compare \algname{SOOPER} with the following baselines:
\begin{enumerate*}[label=\textbf{(\roman*)}]
    \item \algname{SAILR}~\citep{Wagener2021SafeRL}, a state-of-the-art RL algorithm for safe exploration. 
    \item \algname{CRPO}~\citep{Xu2020CRPOAN}, a CMDP solver without safety guarantees during learning.
    \item Similarly, \algname{Primal-Dual}~\citep{bertsekas2016nonlinear} is a constrained optimization algorithm, which is commonly used as a CMDP solver.
\end{enumerate*}
\algname{CRPO} and \algname{Primal-Dual} serve as natural baselines, as they represent the standard approaches practitioners would typically adopt when fine-tuning policies initialized from simulation or offline training.
Unless specified otherwise, we use a batch of 128 trajectories to obtain empirical estimates of the undiscounted cumulative rewards $\hat{J}_r(\pi)$ and costs $\hat{J}_c(\pi)$ and run each experiment for five random seeds. We slightly abuse notation, so that for \algname{SOOPER}, $\pi$ denotes the policy induced by \Cref{algo:safeRollout} and for \algname{SAILR} the policy induced by their ``advantage-based intervention''.
All baselines are initialized with the same task-specific prior policy. 

\paragraph{Safe transfer under dynamics mismatch.}
We study policies that are trained under a mismatch in the dynamics on five tasks from RWRL \citep{challengesrealworld}, SafetyGym \citep{Ray2019} and RaceCar~\citep{kabzan2020amz}. We provide additional details in \Cref{sec:safety-gym,sec:rwrl,sec:racecar} about how mismatches in the dynamics are implemented for each task. We refer to \Cref{sec:experiments-appendix} for additional details on how we train the pessimistic prior policies. We train prior policies under the shifted dynamics ($\mu_0$, cf. \Cref{sec:problem-setting}) and continue learning on the true dynamics. \Cref{fig:sim-to-sim} presents
the largest cumulative cost recorded on the true dynamics throughout training, along with the relative performance improvement at the end of training compared to the policy prior before online learning. As shown, \algname{SOOPER} satisfies the constraints throughout learning in all of the tasks. For each task, among all algorithms that satisfy the constraints, \algname{SOOPER} is consistently on par or outperforms the baselines. This demonstrates \algname{SOOPER}'s ability to transfer safely under distribution shifts.
\begin{figure}
    \centering
    \includegraphics{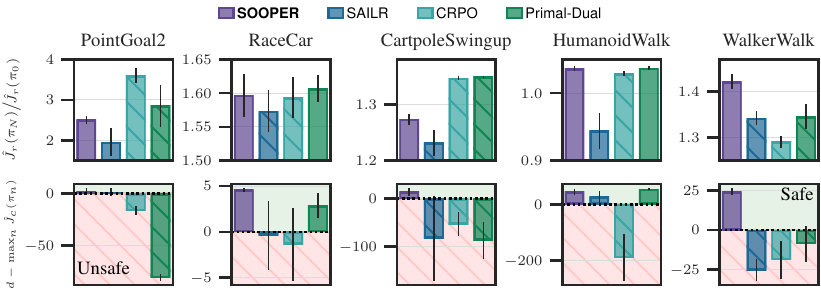}
    \caption{Performance improvement over the baseline policy and largest constraint recorded throughout learning. Among all methods, \algname{SOOPER} remains safe in all tasks while consistently outperforming or being on par with the other baselines when they satisfy the constraints.
    }
    \label{fig:sim-to-sim}
\end{figure}

\paragraph{Scaling to vision control.}
\begin{wrapfigure}[12]{r}{0.35\textwidth}
\vspace{-1.0\baselineskip}
  \centering
  \begin{tikzpicture}
    \node[anchor=south west, inner sep=0] (img) at (0,0)
      {\includegraphics[width=64pt, trim=0 4 4 0, clip]{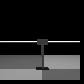}};
    \draw[red, thick] (22pt,26pt) -- (22pt,35pt);
    \draw[red, thick] (46pt,26pt) -- (46pt,35pt);
  \end{tikzpicture}
  \hspace{2pt}
  \begin{tikzpicture}
    \node[anchor=south west, inner sep=0] (img) at (0,0)
      {\includegraphics[width=64pt, trim=0 4 4 0, clip]{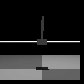}};
    \draw[red, thick] (22pt,26pt) -- (22pt,35pt);
    \draw[red, thick] (46pt,26pt) -- (46pt,35pt);
  \end{tikzpicture}
  \caption{\looseness=-1Image observations for the CartpoleSwingup task. The goal is to swing the pendulum to the upright position while avoiding the range outside the vertical red lines.}
  \label{fig:cartpole-pixels}
\end{wrapfigure}
\looseness=-1Next, we demonstrate that \algname{SOOPER} scales to control tasks with image-based observations. Specifically, we use the CartpoleSwingup task from \citet{challengesrealworld} and train policies that operate on three temporally-stacked 64$\times$64 grayscale images as depicted in \Cref{fig:cartpole-pixels}. As in \Cref{fig:sim-to-sim}, we introduce a distribution shift in the dynamics and pretrain a vision policy under the perturbed dynamics using DrQ~\citep{yarats2021image}. We then train \algname{SOOPER} by learning the dynamics model directly on the embeddings produced by the pretrained DrQ vision encoder. \Cref{fig:vision} shows that even when trained on these embeddings, \algname{SOOPER} satisfies the safety constraints and achieves near-optimal performance.

\paragraph{Safe offline-to-online.}
We evaluate \algname{SOOPER} when initialized with a conservative policy using only offline data. To this end, we collect a fixed offline data of 2M transitions from the PointGoal1 task of SafetyGym. We train a pessimistic prior policy offline by following MOPO \citep{yu2020mopo}, a well-established model-based offline RL algorithm. We use a \algname{Primal-Dual} optimizer to solve the constraint optimization problem with this offline data. We found this approach to yield policies that satisfy the constraint while still delivering nontrivial performance. We use these policies to initialize all the baselines for our experiments. \Cref{fig:offline} presents the objective and constraint during online learning. Notably, \algname{SOOPER} outperforms all the baselines while maintaining safety.

\begin{figure}
    \centering
    \includegraphics{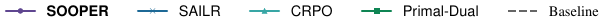}
    \begin{subfigure}[t]{0.49\textwidth}
        \centering
        \includegraphics{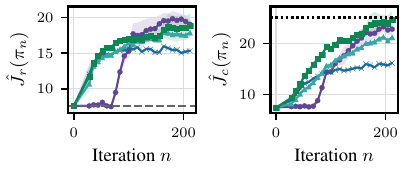}
        \caption{Safe online learning from an offline-trained policy.}
        \label{fig:offline}
    \end{subfigure}%
    \begin{subfigure}[t]{0.49\textwidth}
        \centering
        \includegraphics{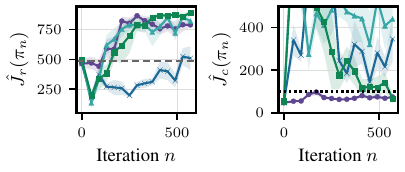}
        \caption{Vision control task.}
        \label{fig:vision}
    \end{subfigure}
    \caption{
    Learning curves of the objective and constraint when learning from offline and vision policies. 
    \algname{SOOPER} satisfies the constraints while significantly improving over the initial prior policy.
    }
\end{figure}

\paragraph{Safe online learning on real hardware.}
\looseness=-1
Finally, we evaluate \algname{SOOPER} when trained online \emph{directly on real-world robotic hardware}: a highly dynamic, remote-controlled race car operating at 60 Hz. The task is to reach a designated goal position located behind three obstacles (tires) that must be avoided~(illustrated in \Cref{fig:rccar-experiment}). This setting is particularly challenging due to stochasticity introduced by high-frequency control and delays in both actuation and motion-capture measurements. We initialize training with a prior policy derived from a first-principles simulator \citep{kabzan2020amz}, and then fine-tune it on the physical system using trajectories collected in real time. We repeat the experiment with five random seeds and report cumulative rewards and costs after each iteration in \Cref{fig:rccar-experiment}. As shown, after training, \algname{SOOPER} is roughly twice as good as the prior policy in terms of rewards while satisfying the safety constraint throughout learning. In \Cref{sec:experiments-appendix}, we extend this experiment to the offline setting, and report results when using a prior policy trained offline with data collected from the real system.
\newcommand{\elevation}{-10}
\newcommand{\anglerot}{77}
\begin{figure}
    \centering
    \begin{subfigure}[t]{0.36\textwidth}
       \vspace{0pt} 
        \centering
        \includegraphics{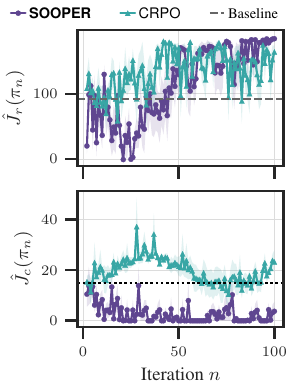} 
        \label{fig:hardware-results}
    \end{subfigure}%
    \hfill
    \begin{subfigure}[t]{0.63\textwidth}
    \vspace{12pt} 
    \centering
        \begin{tikzpicture}
            \node[anchor=south west, inner sep=0] (image2) at (0, 0) {\includegraphics[width=0.9\textwidth]{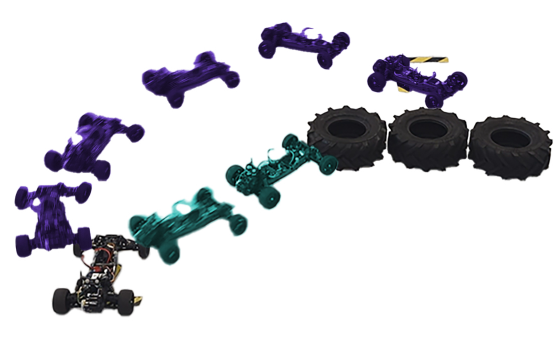}};
            \node[anchor=north east, fill=white, font=\small, text=black, rounded corners, xshift=-1.6cm, yshift=0.25cm] (goal) at (image2.north east) {Goal};
            \draw[-{Circle[open]}, thick] (goal.south) -- ++(0.0, -0.75);
            \node[anchor=south west, fill=white, font=\small, text=black, rounded corners, xshift=-2cm, yshift=-4.25cm] (obstacles) at (image2.north east) {Obstacles};
            \draw[-{Circle[open, color=white]}, thick] (obstacles.north) to[out=90, in=-90]  ++(-1.7, 2.04);
            \draw[-{Circle[open, color=white]}, thick] (obstacles.north) to[out=90, in=-90]  ++(-0.6, 2.01);
            \draw[-{Circle[open, color=white]}, thick] (obstacles.north) to[out=90, in=-90]  ++(0.6, 1.89);
            \node[fill=white, font=\small, text=sooper, rounded corners] (sooper) at (1.5,5) {\algname{SOOPER}};
            \draw[->, thick, sooper] (sooper.south)  to[out=-90, in=120] ++(1.1,-0.5);
            \node[fill=white, font=\small, text=crpo, rounded corners] (crpo) at (4.0,1.05) {\algname{CRPO}};
            \draw[->, thick, crpo] (crpo.north) to[out=90,in=-55] ++(0.4,1.15);
        \end{tikzpicture}
    \label{fig:rccar-demo}
    \end{subfigure}
    \vspace{-0.3cm}
    \caption{Safe exploration on real hardware with \algname{SOOPER}. We report the mean and standard error across five seeds of the objective and constraint measured on the real system. \algname{SOOPER} learns to improve over the prior policy while satisfying the constraints throughout learning. \href{https://anonymous.4open.science/r/sooper/docs/videos/SOOPER/timelapse.mp4}{Video of training.}
    }
    \vspace{-0.25cm}
    \label{fig:rccar-experiment}
\end{figure}
\section{Conclusion}
\looseness=-1
In this work, we address a critical challenge in systems that learn autonomously, namely, maintaining safety during learning. We present \algname{SOOPER}, a novel algorithm that provably improves performance over policies trained in simulation or offline, while satisfying safety constraints throughout learning. \algname{SOOPER} is simple to implement, achieves state-of-the-art performance and consistently maintains safety when deployed in practice. In addition, we establish a new theoretical result on the cumulative regret, enabling a guarantee of good performance during learning and not only at the last iteration.
These contributions advance our understanding of safe exploration and make an important step towards deployable RL. Nonetheless, many important challenges remain. In terms of theory and methodology, extending our work to high-probability constraints over particular states and developing new methods for the non-episodic setting, where agents learn from a single trajectory without resets, are two challenging yet crucial problems. On the practical side, considering more complex tasks with more powerful models is a promising direction for future work.

\subsubsection*{Acknowledgments}
We would like to thank Armin Lederer, Bruce D. Lee and Dongho Kang for insightful discussions during the development of this project. We thank the anonymous reviewers for their valuable comments and suggestions.
This project has received funding from a grant of the Hasler foundation~(grant no. 21039) and the ETH AI Center.

\bibliography{bibliography}
\bibliographystyle{iclr2026_conference}

\newpage
\appendix

\allowdisplaybreaks
\section{Safety Guarantee} \label{sec:inter}

First, we derive in \Cref{slemma:safetyPessValue} that the pessimistic safety-condition with online cost tracking (\Cref{theo:safety}) guarantees safety given the true pessimistic cost value function \(\pVc\) of \Cref{eq:pessValue}.

\begin{lemma}\label{slemma:safetyPessValue}
    Suppose \Cref{assume:noise,assume:regularity,assume:safeGap,assume:RKHS} hold and \(\model_n\)  is well-calibrated \(\forall n = 1,\dots,N\) according to \Cref{def:callib}. Given the safe prior \(\bpi\), the rollout of the policy \(\ppi\) satisfies the safety constraint with probability \(1-\delta\) on \(f\) for every episode \(n=1,\dots,N\), if the actions are executed for all timesteps \(t\) according to the pessimistic safety-condition:
    \begin{equation*}
        \tilde{\pi}_n(a_t|s_t,\ct,\model_n) = \begin{cases}
            \pi_n(\cdot|s_t) &  \text{if }\, \bar \Phi(s_t,a_t,\ct,\model_n) < d\\
            \bpi(s_t) & \text{otherwise,}
        \end{cases}
    \end{equation*}
    where 
    \begin{align}
        \bar \Phi(s_t,a_t,\ct,\model_n) & = \ct + \gamma^t c(s_t,a_t) + \gamma^{t+1} \max_{\tilde{f} \in \model_n} \E_\omega\left[\bar{V}_c^{\bpi}(\tilde{f}(s_{t},a_{t}) + \omega_t )\right],
    \end{align}
    indicates the sum of the accumulated costs \(\ct = \sum_{\tau=0}^{t-1} \gamma^t c(s_t,a_t)\) until \(t-1\), the immediate cost at \(t\), and the worst-case pessimistic cost value \(\pVc\) defined in \Cref{eq:pessimistic-cost-value-approx}.
\end{lemma}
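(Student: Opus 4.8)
The plan is to control the running quantity $Z_t \eqdef \ct + \gamma^t \Vc(s_t)$, which measures the cost already realized up to $t$ plus the \emph{true} discounted cost-to-go of falling back to $\bpi$ from the current state. I work on the event of probability at least $1-\delta$ on which the model is well-calibrated (so $f\in\model_n$ for every $n$, per \Cref{def:callib}) and \Cref{assume:safeGap} holds; on this event pessimism gives the pointwise bound $\Vc(s)\le\pVc(s)$ for all $s$, since $\pVc$ maximizes the $\bpi$-return over $\model_n\ni f$. Let $\mathcal H_t$ denote the history up to the decision at step $t$; I then analyze $\E[Z_{t+1}\mid\mathcal H_t]$ in the two regimes of $\tilde\pi_n$.

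When the safety check fails and the agent executes the backup action $\bpi(s_t)$, the exact Bellman equation for $\Vc$ under the true dynamics, $\Vc(s_t)=c(s_t,\bpi(s_t))+\gamma\,\E_{\omega}[\Vc(f(s_t,\bpi(s_t))+\omega)]$, yields $\E[Z_{t+1}\mid\mathcal H_t]=Z_t$, i.e.\ a backup step leaves $Z$ a martingale. When the check passes and the agent executes $a_t\sim\ppi$, I bound
\[
\E[Z_{t+1}\mid\mathcal H_t] = \ct+\gamma^t c(s_t,a_t)+\gamma^{t+1}\E_\omega[\Vc(f(s_t,a_t)+\omega)] \le \bar\Phi(s_t,a_t,\ct,\model_n) < d,
\]
where the first inequality uses $\Vc\le\pVc$ together with $f\in\model_n$ to pass to the worst-case model inside $\bar\Phi$, and the second is exactly the safety condition that permitted $\ppi$. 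Crucially this bound is absolute (independent of $Z_t$). Finally, the base case is $\E[Z_0]=\E_{\rho_0}[\Vc(s_0)]=\Jc(\bpi,f)\le d$ by \Cref{assume:safeGap}.

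Combining these, I aim to show $\E[Z_t]\le d$ for all $t$, and then let $t\to\infty$: since $0\le\Vc\le C_{\text{max}}/(1-\gamma)$ gives $\gamma^t\E[\Vc(s_t)]\to0$ and $\E[\ct]\uparrow\Jc(\tilde\pi_n,f)$ by monotone convergence ($c\ge0$), this yields $\Jc(\tilde\pi_n,f)\le d$ on the $1-\delta$ event, proving the claim. The main obstacle is precisely the passage from the per-step identities to $\E[Z_t]\le d$: the process $Z_t$ is \emph{not} a supermartingale, because a passing (exploration) step may raise $\E[Z_{t+1}\mid\mathcal H_t]$ up toward $d$, and the memoryless switching rule may alternate between $\ppi$ and $\bpi$ arbitrarily often, so there is no single stopping time after which the process is frozen. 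I would resolve this with an optional-stopping argument applied to each maximal ``backup run'': along such a run $Z$ is a martingale, and the run is pinned below $d$ at its initiating step---either the preceding exploration step (where $\E[Z_{t+1}\mid\mathcal H_t]<d$) or $t=0$ (where $\E[Z_0]\le d$)---so decomposing $\E[Z_t]$ according to the value of the last exploration step before $t$ gives $\E[Z_t]\le d$. A secondary subtlety worth flagging is that I deliberately build the potential from the \emph{true} value $\Vc$ rather than the pessimistic $\pVc$: because $\model_n$ is a norm-ball and not rectangular, $\pVc$ only satisfies the one-sided robust Bellman inequality $\pVc(s)\le c(s,\bpi(s))+\gamma\max_{\tilde f\in\model_n}\E_\omega[\pVc(\tilde f(s,\bpi(s))+\omega)]$, which points the wrong way for a backup-step martingale; using $\Vc$ restores the exact recursion, while pessimism is invoked only in the (inequality-friendly) exploration bound.
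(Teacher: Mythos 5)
Your route is genuinely different from the paper's. The paper proves the lemma by a pathwise induction on the very quantity the switching rule thresholds: it shows that the \emph{executed} action at every step satisfies $\bar\Phi(s_t,a_t,\ct,\model_n)<d$ --- at exploration steps by the check itself, and at backup steps by inheriting the bound from step $t-1$ through the Bellman recursion of the pessimistic value $\pVc$ (using $f\in\model_n$ to dominate the true transition by the worst case over $\model_n$). Since $\ct\le\bar\Phi(s_t,a_t,\ct,\model_n)$, this pathwise invariant immediately yields the constraint. You instead build a potential $Z_t=\ct+\gamma^t\Vc(s_t)$ from the \emph{true} value function and try to control $\E[Z_t]$. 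Your per-step analysis is correct (exact martingale on backup steps, $\E[Z_{t+1}\mid\mathcal H_t]\le\bar\Phi<d$ on exploration steps, $\E[Z_0]\le d$), and you rightly observe that $Z_t$ is not a supermartingale. Your ``secondary subtlety'' about rectangularity is, however, misplaced: $\model_n$ in \Cref{def:callib} is a pointwise band $|\tilde f(s,a)-\mu_n(s,a)|\le\beta_n\sigma_n(s,a)$, hence $(s,a)$-rectangular, and the recursion for $\pVc$ that the paper's backup step relies on does hold in the direction it needs.

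The genuine gap is in the aggregation step that you yourself flag as the main obstacle; the fix you sketch does not close it. Write $\{\tau(t)=s\}=E_s\cap B_{s+1}\cap\dots\cap B_{t-1}$, where $E_s$ and $B_u$ denote the events that step $s$ is an exploration step and step $u$ is a backup step. The martingale property does let you roll $\E[Z_t\,\1\{\tau(t)=s\}]$ back to $\E[Z_{t-1}\,\1\{\tau(t)=s\}]$, because all the indicators are measurable by time $t-1$. But to continue past step $t-1$ you must evaluate $\E[Z_{t-1}\1_{B_{t-1}}\mid\mathcal G_{t-2}]$, and $Z_{t-1}$ and $\1_{B_{t-1}}$ are correlated through the realized state $s_{t-1}$ (the switching decision at $t-1$ is a function of where the trajectory actually landed). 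The only generic move is to drop the indicator using $Z\ge0$, which after iterating gives $\E[Z_t\,\1\{\tau(t)=s\}]\le d\,\sP(E_s)$; summing over $s$ then yields $d\sum_s\sP(E_s)$, which can be arbitrarily larger than $d$ because the events $E_s$ are not disjoint. Put differently: the exploration-step guarantee bounds only the conditional \emph{mean} of $Z_{s+1}$; a realization of $Z_{s+1}$ in the upper tail (up to order $C_{\text{max}}/(1-\gamma)$, far above $d$) is then frozen there by the backup martingale, and nothing in your argument shows that such realizations are offset by low ones \emph{within the same cell of the partition} --- the offsetting low realizations may precisely be the paths on which the agent keeps exploring and that cell is never entered. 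This is the correlation between the value of the potential and the future switching decisions that the paper's proof avoids entirely by making the invariant a pathwise bound on the monitored quantity $\bar\Phi$ itself, so that each step's bound follows from the previous step's with no decomposition over future behaviour. To repair your argument you would essentially have to replace $Z_t$ by the pessimistic, monitored quantity and argue pathwise --- which is the paper's proof.
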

\begin{proof}
    We prove the safe rollout of \(\ppi\) for the unknown \(f\) by induction.
\begin{enumerate}[leftmargin=0.5cm, parsep=0.3em]
    \item Base Case \((t=0)\):
    By definition of \(\tilde{\pi}_n\), we only execute \(a_0 \sim \ppi(\cdot | s_0)\) if \(\bar\Phi(s_0,a_0,c_{<0},\model_n) < d\) with \(c_{<0} = 0\) and otherwise the safe prior \(\bpi\). 
    By definition of \(\bar \Phi\), either \(a_0\) is safe to execute with
    \begin{align}
        d & > c(s_0,a_0) + \gamma \max_{\tilde{f} \in \model_n} \E_{\omega}\left[\Vc(\tilde f(s_0,a_0)+\omega_0)\right],
    \end{align}
    and otherwise executing \(a_0=\bpi(s_0)\) is safe since \(\pVc(s_0) < d\) by \Cref{assume:safeGap}.\\
    \item Inductive Step:  
    Given the agent is safe up to timestep \(t-1\), i.e., \begin{equation}\label{eq:safePrevStep}
        c_{< t-1} + \gamma^{t-1}c(s_{t-1},a_{t-1}) + \gamma^t \max_{\tilde{f} \in \model_n} \E_{\omega}\left[\Vc(\tilde f(s_{t-1},a_{t-1})+\omega_{t-1})\right] < d,
    \end{equation}
    we show constraint satisfaction for time $t$, satisfying 
    \begin{align}\label{eq:safety}
        d &> \ct + \gamma^t c(s_t,a_t) + \gamma^{t+1}\E_{\omega}\left[\Vc(f(s_t,a_t)+\omega_t)\right]. 
    \end{align}
    Using \(\tilde{\pi}_n\), we either execute \(a_t\) if \Cref{eq:safety} is satisfied and therefore is safe by definition. Otherwise, we invoke the prior \(\hat a_t = \bpi(s_t)\).    
    Using  $c_{< t} = c_{< t-1} + \gamma^{t-1}c(s_{t-1},a_{t-1})$, we obtain 
    \begin{align}
        & c_{< t} + \gamma^{t}c(s_{t},\hat a_{t}) + \gamma^{t+1} \max_{\tilde{f} \in \model_n} \E_\omega\left[\bar{V}_c^{\bpi}(\tilde{f}(s_{t},\hat a_{t}) + \omega_{t})\right] \\ & = c_{< t-1} + \gamma^{t-1}c(s_{t-1},a_{t-1}) + \gamma^tc(s_{t},\hat a_{t}) + \gamma^{t+1} \max_{\tilde{f} \in \model_n} \E_\omega\left[\bar{V}_c^{\bpi}(\tilde{f}(s_{t},\hat a_{t}) + \omega_{t})\right]\\
        & \leq c_{< t-1} +  \gamma^{t-1}c(s_{t-1},a_{t-1}) + \gamma^t \max_{\tilde{f} \in \model_n} \E_{\omega}\left[\Vc(\tilde f(s_{t-1},a_{t-1})+\omega_{t-1})\right] < d.
    \end{align}
\end{enumerate}
\end{proof}

Next, we show that adding the cost penalties in \(\Qcn\) (\Cref{eq:pessimistic-cost-value-approx}) upper-bounds the true cost value, using \citet[Lemma~A.5]{sukhija2025optimism}:
\begin{lemma}\label{slemma:intrinsicBound}
    Suppose \Cref{assume:noise,assume:regularity,assume:RKHS} hold and \(\model_n\) is well-calibrated by \Cref{def:callib}. Given  
    \begin{align*}
        \gamma^{t+1} &\max_{\tilde{f} \in \model_n} \E_\omega\left[\bar{V}_c^{\bpi}(\tilde{f}(s_{t},a_{t}) + \omega_t )\right] =  \E_{\bpi}\left[\sum_{\tau=t+1}^\infty \gamma^\tau c(s_\tau,a_\tau)\right]\\
        & \st ~ s_{\tau+1} = \tilde f(s_\tau,a_\tau)+\omega_\tau, \quad a_\tau = \bpi(s_\tau),\quad s_{t+1} = \tilde f(s_t,a_t) + \omega_t ,\\
        & \lambda_{\text{pessimism}} =\bar C\frac{\gamma}{1-\gamma}\frac{\left(1+\sqrt{d_{\set{S}}}\right)\beta_{N}(\delta)}{\sigma},
    \end{align*}
    then for all episodes \(n=1,\dots,N\) with probability of \(1-\delta\):
    \begin{align*}
        \gamma^{t+1} & \max_{\tilde{f} \in \model_n} \E_\omega\left[\bar{V}_c^{\bpi}(\tilde{f}(s_{t},a_{t}) + \omega_t )\right] \leq \gamma^t\lambda_{\text{pessimism}}\|\sigma_n(s_t,a_t)\| \\ & + \E_{\bpi}\left[\sum_{\tau=t+1}^\infty \gamma^\tau\left(c(s_\tau,a_\tau) + \lambda_{\text{pessimism}} \|\sigma_n(s_\tau,a_\tau)\|\right)\right],\\
        & \st ~s_{\tau+1} = \mu_n(s_\tau,a_\tau)+\omega_\tau, \quad a_\tau = \bpi(s_\tau), \quad s_{t+1}=\mu_n(s_t,a_t) + \omega_t.
    \end{align*}
\end{lemma}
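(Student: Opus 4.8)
The plan is to read the left-hand side as a \emph{pessimistic} (worst-case) cost value over the calibrated set $\model_n$ and to convert it, one transition at a time, into a value under the \emph{nominal} model $\mu_n$ whose per-step cost is inflated by the uncertainty penalty $\lambda_{\text{pessimism}}\|\sigma_n\|$. Two ingredients make this possible: (i) every $\tilde f\in\model_n$ obeys the calibration bound $|\tilde f(s,a)-\mu_n(s,a)|\le\beta_n\sigma_n(s,a)\le\beta_N\sigma_n(s,a)$ from \Cref{def:callib} (using that $(\beta_{n'})$ is non-decreasing so $\beta_n\le\beta_N$); and (ii) the prior's cost value is uniformly bounded, $\|\Vc\|_\infty\le\bar C/(1-\gamma)$, which follows from $c\le C_{\text{max}}$ together with the kernel bound $k_{\text{max}}$ of \Cref{assume:RKHS}, both absorbed into $\bar C=\max\{C_{\text{max}},k_{\text{max}}\}$.

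First I would relax the single-model maximization in \Cref{eq:pessValue} to the per-step (rectangular) robust Bellman operator $\mathcal{T}_{\mathrm{rob}}$, whose fixed point upper-bounds the single-model worst case and hence preserves the desired inequality. Writing $P_f W(s,a)\eqdef\E_\omega[W(f(s,a)+\omega)]$, it then suffices, by monotonicity of $\mathcal{T}_{\mathrm{rob}}$ and a standard fixed-point comparison, to show that the penalized nominal value $\Vcn$ of \Cref{eq:pessimistic-cost-value-approx} is a super-solution, i.e. $\Vcn\ge\mathcal{T}_{\mathrm{rob}}\Vcn$, which reduces to the per-state-action inequality
\begin{equation*}
\gamma\big(\max_{\tilde f\in\model_n}P_{\tilde f}\Vcn(s,a)-P_{\mu_n}\Vcn(s,a)\big)\le\lambda_{\text{pessimism}}\|\sigma_n(s,a)\|.
\end{equation*}
Chaining this through the recursion yields $\pVc\le\Vcn$, with the transition at time $t$ contributing the separate leading term $\gamma^t\lambda_{\text{pessimism}}\|\sigma_n(s_t,a_t)\|$ and the later swaps assembling into the discounted penalty sum $\E_{\bpi}[\sum_{\tau\ge t+1}\gamma^\tau\lambda_{\text{pessimism}}\|\sigma_n(s_\tau,a_\tau)\|]$ taken under $\mu_n$.

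The crux is the per-step estimate, and it is exactly where \Cref{assume:noise} (Gaussian noise) enters and where \citet[Lemma~A.5]{sukhija2025optimism} is invoked. Although $\Vc$ is only bounded, its Gaussian convolution is smooth: the map $x\mapsto\E_\omega[\Vc(x+\omega)]$ with $\omega\sim\mathcal{N}(0,\sigma^2 I)$ is Lipschitz with constant at most $\tfrac{(1+\sqrt{d_{\set{S}}})}{\sigma}\|\Vc\|_\infty$, via the Stein-type identity $\nabla\E_\omega[\Vc(x+\omega)]=\E_\omega[\Vc(x+\omega)\,\omega/\sigma^2]$ and a sub-Gaussian norm bound on $\omega/\sigma$ in $d_{\set{S}}$ dimensions. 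Combining this constant with the mismatch bound $|\tilde f-\mu_n|\le\beta_N\sigma_n$ and the value bound $\bar C/(1-\gamma)$, then multiplying by $\gamma$, produces precisely $\lambda_{\text{pessimism}}=\bar C\tfrac{\gamma}{1-\gamma}\tfrac{(1+\sqrt{d_{\set{S}}})\beta_N(\delta)}{\sigma}$. The calibration event on $f$ holds with probability $1-\delta$, so the bound is valid with probability $1-\delta$ uniformly over $n=1,\dots,N$.

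The main obstacle I anticipate is the bookkeeping in the model-to-model conversion: the robust value is naturally expressed along trajectories generated by the worst-case $\tilde f$, whereas the target integrates the penalties along the $\mu_n$-trajectory. The super-solution argument circumvents this by swapping one model at a time and letting the penalty absorb each mismatch, so that the residual is expressed purely under $\mu_n$; checking that both the single-model-to-rectangular relaxation and this stepwise swapping point in the safe (upper-bounding) direction is the step that demands the most care.
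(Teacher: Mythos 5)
Your proposal is correct in substance and reaches the same inequality by the same underlying mechanism---the $(1+\sqrt{d_{\set{S}}})/\sigma$ Gaussian-smoothing constant, the calibration mismatch $|\tilde f-\mu_n|\le\beta_N\sigma_n$, the value bound $\bar C/(1-\gamma)$, and the final rearrangement that peels off the $\tau=t$ penalty as the leading $\gamma^t\lambda_{\text{pessimism}}\|\sigma_n(s_t,a_t)\|$ term---but it packages the key intermediate bound differently. The paper simply defines $\tVc$ (nominal-model value) and invokes \citet[Lemma~A.5]{sukhija2025optimism} as a black box to get $|\pVc(s)-\tVc(s)|\le\lambda_{\text{pessimism}}\E_{\bpi}[\sum_t\gamma^t\|\sigma_n(s_t,a_t)\|]$ along the $\mu_n$-trajectory, then rearranges; that cited lemma is a telescoping simulation-lemma argument in which each one-step difference $\E_\omega[V(\tilde f(s,a)+\omega)]-\E_\omega[V(\mu_n(s,a)+\omega)]$ is taken with the \emph{unpenalized} worst-case value $V$ inside, so the constant $\bar C/(1-\gamma)$ applies cleanly. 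Your rectangular-relaxation-plus-super-solution route is a legitimate alternative and arguably more self-contained, but it has one wrinkle you should not gloss over: the super-solution check $\Vcn\ge\mathcal{T}_{\mathrm{rob}}\Vcn$ requires bounding $\gamma(\max_{\tilde f}P_{\tilde f}\Vcn-P_{\mu_n}\Vcn)$ with the \emph{penalized} value $\Vcn$ inside the smoothed expectation, whereas the Lipschitz constant you compute is stated for $\Vc$. Since $\|\Vcn\|_\infty$ exceeds $\|\tVc\|_\infty$ by the accumulated penalty, the per-step inequality becomes implicit in $\lambda_{\text{pessimism}}$ (you would need $\lambda\ge\gamma\tfrac{(1+\sqrt{d_{\set{S}}})\beta_N}{\sigma}\cdot\tfrac{C_{\max}+\lambda\sup\|\sigma_n\|}{1-\gamma}$, yielding a strictly larger constant than the one stated in the lemma). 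To recover the paper's exact $\lambda_{\text{pessimism}}$ you should either telescope the difference $\pVc-\tVc$ directly (keeping the unpenalized $\pVc$ in the one-step mismatches, as the cited lemma does) or explicitly enlarge the constant; otherwise the argument is sound, and your identification of where \Cref{assume:noise} and the calibration event enter matches the paper.
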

\begin{proof}
    Let us denote the value following the nominal dynamics \(\mu_n\) of \(\model_n\) as
    \begin{align}
    \begin{aligned}\label{eq:cValueNominal}
        \tVc(s) & = \E_{\bpi}\left[\sum_{t=0}^\infty \gamma^t c(s_t,a_t)\mid s_0=s\right],\\
        \st~ s_{t+1} & = \mu_n(s_t,a_t)+\omega_t, \quad a_t = \bpi(s_t), 
    \end{aligned}
    \end{align}
    and the upper bound on the cost value following the worst dynamics \(\tilde f\in \model_n\) by
    \begin{align}
    \begin{aligned}
        \pVc(s) & = \max_{\tilde f \in \model_n}\Vc(s) = \E_{\bpi}\left[\sum_{t=0}^\infty \gamma^t c(s_t,a_t) \mid s_0=s\right],\\
        \st~ s_{t+1} & = \tilde f(s_t,a_t)+\omega_t, \quad a_t = \bpi(s_t).
    \end{aligned}
    \end{align}
    Following \citet[Lemma~A.5]{sukhija2025optimism}, the difference between these values is bounded by
    \begin{align}
    \begin{aligned}
        \left|\pVc(s) - \tVc(s)\right| &\leq \lambda_{\text{pessimism}} \E_{\bpi}\left[\sum_{t=0}^\infty \gamma^t \|\sigma_n(s_t,a_t)\| \mid s_0=s\right],\\
        \st ~ s_{t+1} &=\mu_n(s_t,a_t) + \omega_t, \quad a_t = \bpi(s_t).
    \end{aligned}
    \end{align}
    Bringing \(\tVc(s)\) to the other side and using the definition in \Cref{eq:cValueNominal} results in:
    \begin{align}\label{eq:upperBoundCVp1}
    \pVc(s) &\leq \tVc(s) + \lambda_{\text{pessimism}} \E_{\bpi}\left[\sum_{t=0}^\infty \gamma^t \|\sigma_n(s_t,a_t)\|\mid s_0=s \right],\\\label{eq:upperBoundCVp}
        &= \E_{\bpi}\left[\sum_{t=0}^\infty \gamma^t\left(c(s_t,a_t) + \lambda_{\text{pessimism}} \|\sigma_n(s_t,a_t)\|\right) \mid s_0=s \right],\\
        \st ~&  s_{t+1}=\mu_n(s_t,a_t) + \omega_t, \quad a_t = \bpi(s_t).
    \end{align}
    Further, we use this upper bound having \(c(s_t,a_t)\) already observed and rewriting
    \begin{align}
        &\gamma^{t+1} \max_{\tilde{f} \in \model_n} \E_\omega\left[\bar{V}_c^{\bpi}(\tilde{f}(s_{t},a_{t}) + \omega_t )\right] = \gamma^t \pVc(s_t) - \gamma^tc(s_t,a_t)\\
         &\labelrel\leq{step:use_pess_def} \E_{\bpi}\left[\sum_{\tau=t}^\infty \gamma^\tau\left(c(s_\tau,a_\tau) + \lambda_{\text{pessimism}} \|\sigma_n(s_\tau,a_\tau)\|\right)\right] - \gamma^tc(s_t,a_t)\\
         &\leq \E_{\bpi}\left[\sum_{\tau=t+1}^\infty \gamma^\tau\left(c(s_\tau,a_\tau) + \lambda_{\text{pessimism}} \|\sigma_n(s_\tau,a_\tau)\|\right)\right] + \gamma^t\lambda_{\text{pessimism}}\|\sigma_n(s_t,a_t)\|,\\
         \st& ~s_{\tau+1} = \mu_n(s_\tau,a_\tau)+\omega_\tau, \quad a_\tau = \bpi(s_\tau), \quad s_{t+1}=\mu_n(s_t,a_t) + \omega_t,
    \end{align}
    where step \ref{step:use_pess_def} follows from using the upper bound in \Cref{eq:upperBoundCVp}.
\end{proof}

\sthrm*
\begin{proof}\label{proof:sthrm}
    \Cref{theo:safety} is proven using \Cref{slemma:safetyPessValue,slemma:intrinsicBound}.
    Consider the accumulate cost \(\bar \Phi(s_t,a_t,\ct,\model_n)\) from \Cref{slemma:safetyPessValue}
    \begin{align}
        \bar \Phi(s_t,a_t,\ct,\model_n) = 
        & \ct + \gamma^t c(s_t,a_t) + \gamma^{t+1}\max_{\tilde{f} \in \model_n} \E_\omega\left[\bar{V}_c^{\bpi}(\tilde{f}(s_{t},a_{t}) + \omega_t)\right] \\
        \labelrel\leq{step:lem2_up_bound} & \ct + \gamma^t(c(s_t,a_t) + \lambda_{\text{pessimism}}\|\sigma_n(s_t,a_t)\|) \nonumber\\ &+ \E_{\bpi}\left[\sum_{\tau=t+1}^\infty \gamma^\tau \left(c(s_\tau,a_\tau) + \lambda_{\text{pessimism}}\|\sigma_n(s_\tau,a_\tau)\|\right)\right]\\
        \labelrel={step:def_qcn}& \ct + \gamma^t \Qcn(s_t,a_t)
        \labelrel\eqqcolon{step:lem1_def} \Phi(s_t,a_t,\ct,\Qcn) < d. \label{eq:Qcn-bound}
    \end{align}
Step~\ref{step:lem2_up_bound} follows from the upper bound on the cost penalties in \Cref{slemma:intrinsicBound}, and Step~\ref{step:def_qcn} and \ref{step:lem1_def} follow from the definitions of \(\Qcn\) and \(\Phi(s_t,a_t,\ct,\Qcn)\) from \Cref{eq:pessimistic-cost-value-approx,slemma:safetyPessValue}.
\end{proof}

\section{Regret Analysis}
\label{sec:regret-analysis}
\paragraph{Overview of proof.}
\Cref{theo:regret} states that the tractable objective in \Cref{eq:optiPlanning} achieves sublinear cumulative regret while satisfying the safety constraint. We start our proof by decomposing the cumulative regret into per-episode regret terms. Using \Cref{lemma:switchingRegret}, we lower-bound the performance of the true system \(\CM\) by the performance on the planning MDP \(\SM\). This bound allows us to decompose the per-episode regret into two components in \Cref{lemma:regretDecomposition} that can be bounded separately:      
\begin{enumerate*}[label=\textbf{(\roman*)}]
\item \textbf{Safe rollout vs.\ optimal rollout.}  
The first regret term captures the performance gap between the optimal policy \(\ocpi\) and the same policy executed via \Cref{algo:safeRollout} on \(\CM\). The difference arises from invoking the prior policy \(\bpi\) due to uncertainty in the learned dynamics. We express this gap in \Cref{lemma:regretPolicyDistance} in terms of action differences between \(\ocpi\) and the potentially used safe prior \(\bpi\). By \Cref{theo:safety}, these action differences can be related in \Cref{lemma:fromPolicyToUncertainty} via the safety criterion \(\Phi\) to the accumulated model uncertainty along potentially unsafe trajectories starting with \(\ocpi(a_t | s_t)\) at step \(t\), followed by \(\bpi\). Since it may be unsafe to execute them, we instead express their cumulative uncertainty in terms of the corresponding safe trajectories starting with \(\bpi(s_t)\) at step \(t\) (\Cref{lemma:relatingUncertaintyTrajectories}). This is achieved using the difference of Gaussians \cite[Lemma C.2]{KakadeInfo}. Consequently, we establish in \Cref{lemma:firstRegretTerm} an upper bound on the first regret term, which depends on the model uncertainty along the safe rollout of \(\ocpi\) under \Cref{algo:safeRollout}.
\item \textbf{Planning model vs.\ true dynamics.}  
The second regret term quantifies the value gap between an optimal policy for the planning MDP \(\SM\) (with true dynamics \(f\)) and the policy \(\ppi\) obtained by optimizing the nominal model \(\mu_n\) in episode \(n\). This gap can be bounded in terms of the model uncertainty evaluated along the optimal policy \(\opi\) when it is safely executed using \Cref{algo:safeRollout}.  
\end{enumerate*}
By \Cref{lemma:optPlanningBound}, the tractable objective in \Cref{eq:optiPlanning} is an optimistic upper bound on the best achievable performance across all models \(\model_n\). Consequently, \Cref{lemma:optPlanningUncertainty} provides a bound on the sum of the two per-episode regret components along the safely rolled-out behavioral policy \(\ppi\) using \Cref{algo:safeRollout}. Summing these bounds over all episodes yields \Cref{theo:regret}, which states that the objective in \Cref{eq:optiPlanning} attains sublinear cumulative regret.

To decompose \(R(N)\), we first establish a lower bound on the performance of \(\sppi\), obtained by safely rolling out \(\ppi\) on \(\CM\) with \Cref{algo:safeRollout} using \(\Phi\).

\paragraph{Planning at the lower performance bound.}
Given the policy \(\ppi\) on \(\SM\), we derive in \Cref{lemma:switchingRegret} the performance bound of the safe policy \(\sppi\) on \(\CM\) with dynamics \(f\) using the pessimistic value function \(\underline{V}_r^{\bpi}(s_0)\) defined in \Cref{eq:pessValueR}. 
\begin{lemma}\label{lemma:switchingRegret}
    Suppose \Cref{assume:noise,assume:RKHS,assume:regularity} hold and the model \(\model_n\) is well-calibrated according to \Cref{def:callib}. 
    Then the performance \(\Jr(\sppi,f)\) of the safe policy \(\sppi\) induced by \Cref{algo:safeRollout} on \(\CM\) is lower-bounded by the performance \(\Jtr(\ppi,f)\) of the policy \(\ppi\) on \(\SM\) following the true dynamics \(f\):
    \begin{equation*} \label{eq:perfIneq}
        \Jr(\sppi, f) \geq \Jtr(\ppi, f).
    \end{equation*}
\end{lemma}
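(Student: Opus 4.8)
The plan is to prove the inequality by a coupling argument that runs the safe rollout of $\sppi$ on $\CM$ and the rollout of $\ppi$ on the planning MDP $\SM$ side by side under shared randomness, namely the same initial state $s_0\sim\rho_0$, the same action samples drawn from $\ppi$, and the same process noise $\{\omega_t\}$. Since $\SM$ differs from $\CM$ only through its termination behaviour (\Cref{eq:terminalTransition,eq:terminalReward}), and both $\Jtr(\ppi,f)$ and $\Jr(\sppi,f)$ are evaluated under the true dynamics $f$, the two coupled trajectories evolve identically for as long as the safety criterion $\Phi(s_t,a_t,\ct,\Qcn)<d$ holds. I would therefore let $t^\star$ denote the first step at which $\Phi(s_{t^\star},a_{t^\star},\ct,\Qcn)\geq d$ with $a_{t^\star}\sim\ppi$. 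Because the states, actions, accumulated costs $\ct$, and the fixed function $\Qcn$ agree on both processes up to $t^\star$, this stopping time is the same for both, and it suffices to compare the discounted returns accrued from step $t^\star$ onward.

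Next I would split each return at $t^\star$. Up to $t^\star-1$ both processes follow $\ppi$ on $f$ and collect the same discounted reward $\sum_{t=0}^{t^\star-1}\gamma^t r(s_t,a_t)$, so these terms cancel exactly. At $t^\star$, the planning MDP transitions to the absorbing state $s_\dagger$ and collects the terminal reward $\gamma^{t^\star}\pVr(s_{t^\star})$ and nothing thereafter (\Cref{eq:terminalReward}), whereas the safe rollout invokes the prior $\bpi$ and continues to accumulate true rewards. By the backup-policy design of \Cref{algo:safeRollout}, from $t^\star$ onward $\sppi$ follows $\bpi$, so the tail contribution on $\CM$ equals $\gamma^{t^\star}\Vr(s_{t^\star})$, the true value of $\bpi$ started at $s_{t^\star}$.

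The crux is then a single pessimism step. Since $\pVr(s)=\min_{\tilde f\in\model_n}V_r^{\bpi,\tilde f}(s)$ by \Cref{eq:pessValueR}, and the true dynamics satisfy $f\in\model_n$ with probability $1-\delta$ by well-calibration (\Cref{def:callib}), we obtain $\pVr(s_{t^\star})\leq V_r^{\bpi,f}(s_{t^\star})=\Vr(s_{t^\star})$. Hence the $\CM$ tail $\gamma^{t^\star}\Vr(s_{t^\star})$ dominates the $\SM$ terminal reward $\gamma^{t^\star}\pVr(s_{t^\star})$, which yields the per-trajectory inequality. Taking the expectation over $s_0\sim\rho_0$ and the shared randomness then gives $\Jr(\sppi,f)\geq\Jtr(\ppi,f)$.

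I expect the main obstacle to be the bookkeeping around the stopping time $t^\star$: one must verify that the coupling makes both the pre-switch trajectory and the switch time identical, so that the pre-$t^\star$ rewards genuinely cancel, and that after the first invocation the tail reward on $\CM$ is at least $\gamma^{t^\star}\Vr(s_{t^\star})$. The latter is immediate if $\sppi$ commits to $\bpi$ after the first switch, matching the single terminal state of $\SM$; if instead $\sppi$ may later re-enable $\ppi$, I would invoke the nonnegativity of $r$ together with the fact that $\ppi$ is re-activated only when it is again deemed safe, arguing that the tail can only grow and thus leaving the bound intact. The pessimism inequality itself is routine once the event $f\in\model_n$ is conditioned upon.
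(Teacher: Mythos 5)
Your proof is correct and follows essentially the same route as the paper's: split at the first time the safety criterion $\Phi(s_t,a_t,\ct,\Qcn)\geq d$ is triggered, observe that the pre-switch rewards coincide since $\SM$ and $\CM$ agree until termination, and conclude via the pessimism inequality $\pVr(s_{t^\star})\leq \Vr(s_{t^\star})$, which holds because $f\in\model_n$ under well-calibration. Your explicit coupling under shared randomness and your remark about a possible later re-activation of $\ppi$ are refinements of bookkeeping that the paper's proof leaves implicit (it treats the first fallback as committing to $\bpi$ thereafter, matching the absorbing state $s_\dagger$ in $\SM$), but the substance of the argument is identical.
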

\begin{proof}\label{proof:rlemmaOne}
We consider for all timesteps \(k\) two cases:
\begin{enumerate}[leftmargin=0.5cm, parsep=0.3em]
    \item \emph{Never fall back to prior} (i.e. \(\ppi\) is always safe). Then, by definition of \(\SM\),
\begin{equation}
\Jtr(\ppi,f) = \E_{\ppi,s_0}\Big[\sum_{t=0}^\infty \gamma^t r(s_t,a_t)\Big] = \Jr(\sppi,f).
\end{equation}
This equality follows by definition from \(\SM\) and \(\CM\) being identical in case the agent never falls back to the safe prior \(\bpi\) during the rollout and therefore \(\ppi = \sppi\).
\item \emph{Agent falls back to safe prior} (i.e. at \(s_t\) action \(\hat a_t = \bpi(s_t)\) is executed instead of potentially unsafe \(a_t\)). For the performance \(\Jr(\sppi,f)\) of \(\sppi\) on \(\CM\) we get by combining the policy values of \(\ppi\) and \(\bpi\)
\begin{equation}
\Jr(\sppi,f) = \E_{\ppi, s_0}\Big[\sum_{t=0}^{k-1}\gamma^t r(s_t,a_t)\Big] + \gamma^k\,V_r^{\bpi}(s_k),
\end{equation}
whereas for the performance of \(\Jtr(\ppi,f)\) of \(\ppi\) on \(\SM\), we get
\begin{align}
    \Jtr(\ppi,f) &\labelrel={step:def_mod_trans} \E_{\ppi,s_0}\Big[\sum_{t=0}^{k-1}\gamma^t \tilde{r}(s_t,a_t) + \gamma^k\tilde{r}(s_k,\hat{a}_k) + \sum_{t=k+1}^{\infty} \gamma^t \tilde{r}(s_\dagger,\cdot) \Big] \label{eq:SMterminalTransition}\\
    &\labelrel={step:def_term_reward} \E_{\ppi,s_0}\Big[\sum_{t=0}^{k-1}\gamma^t r(s_t,a_t) + \gamma^k\,\pVr(s_k)\Big].\label{eq:SMterminalReward}
\end{align}
Step \ref{step:def_mod_trans} follows from the modified transitions in \Cref{eq:terminalTransition} to the absorbing state \(s_\dagger\) as soon as the agent would fall back to the safe prior \(\bpi\), and \ref{step:def_term_reward} follows from the terminal reward given in \Cref{eq:terminalReward} for transitioning to \(s_\dagger\).
Since \(\underline{V}_r^{\bpi}\) is defined pessimistically in \Cref{eq:pessValueR}, it holds that \(\pVr(s) \leq \Vr(s)\) and thereby follows
\begin{equation} \label{eq:ineqValueOneSwitch}
\Jtr(\ppi,f) \leq \E_{\ppi, s_0}\Big[\sum_{t=0}^{k-1}\gamma^t r(s_t,a_t)\Big] + \gamma^k\,\Vr(s_k) = \Jr(\sppi,f),
\end{equation}
which completes the proof.
\end{enumerate}
\end{proof}

\paragraph{Regret decomposition.}
\Cref{lemma:switchingRegret} enables us to decompose \(R(N)\) into two separate terms. The first term models the regret of executing \(\sppi\) on \(\CM\), caused by falling back to the safe prior \(\bpi\) when safely rolling out \(\ppi\) with \Cref{algo:safeRollout}. The second component determines the (constraint-free) regret of \(\ppi\) on \(\SM\).
\begin{lemma}\label{lemma:regretDecomposition}
Suppose \Cref{assume:noise,assume:RKHS,assume:regularity,assume:safeGap} hold and the model \(\model_n\) is well-calibrated according to \Cref{def:callib}. Let \(\ocpi\) be the optimal policy on \(\CM\) with \(\Jr(\ocpi,f) = \max_{\pi : \Jc(\pi,f) < d}\Jr(\pi,f)\) and applying \Cref{algo:safeRollout} to \(\ocpi\) yields \(\socpi\) with \(\Jr(\socpi,f)\) for the current learned model \(\model_n\) in episode \(n\). Further, let \(\opi\) be the optimal policy on \(\SM\) that has \(\Jtr(\opi,f) = \max_\pi\Jtr(\pi,f)\) for \(\Phi\) and \(\Qcn\) in episode \(n\). Given \(\Jtr(\ppi,f)\) denotes the performance of the behavioral policy \(\ppi\) on \(\SM\), the cumulative regret \(R(N) = \sum_{n=1}^N (\Jr(\ocpi,f) - \Jr(\sppi,f))\) of \(\sppi\) on \(\CM\) can be decomposed into:
\begin{align*}
R(N) \leq \sum_{n=1}^N\big(\underbrace{\Jr(\ocpi,f) - \Jr(\socpi,f)}_{\PERa \,:\,\text{P.E.R. of } \sopi_c\text{ with }\Phi}\big)
+ \sum_{n=1}^N \big(\underbrace{\Jtr(\opi,f) - \Jtr(\ppi,f)}_{\PERb \,:\, \text{P.E.R. of } \ppi \text{on }\tilde{\set{M}}}\big),
\end{align*}
with the per-episode regret (P.E.R.) terms \(\PERa\) and \(\PERb\) related to the safe prior and the regret on \(\SM\).
\end{lemma}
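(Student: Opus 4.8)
The plan is to start from the per-episode regret $\Jr(\ocpi,f)-\Jr(\sppi,f)$ and first replace the hard-to-analyze safe-rollout performance $\Jr(\sppi,f)$ on the true CMDP $\CM$ by the planning-MDP performance $\Jtr(\ppi,f)$. This is exactly the content of \Cref{lemma:switchingRegret}, which gives $\Jr(\sppi,f)\ge \Jtr(\ppi,f)$, hence
\[
R(N) = \sum_{n=1}^N\big(\Jr(\ocpi,f)-\Jr(\sppi,f)\big) \le \sum_{n=1}^N\big(\Jr(\ocpi,f)-\Jtr(\ppi,f)\big).
\]
This single step is what lets the rest of the argument pass through the \emph{unconstrained} planning MDP $\SM$ rather than the CMDP.

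Next I would insert the two intermediate quantities $\Jr(\socpi,f)$ and $\Jtr(\opi,f)$ to expose the two per-episode terms:
\begin{align*}
\Jr(\ocpi,f)-\Jtr(\ppi,f)
&= \underbrace{\big(\Jr(\ocpi,f)-\Jr(\socpi,f)\big)}_{\PERa} \\
&\quad + \big(\Jr(\socpi,f)-\Jtr(\opi,f)\big) + \underbrace{\big(\Jtr(\opi,f)-\Jtr(\ppi,f)\big)}_{\PERb}.
\end{align*}
It then remains only to discard the residual middle term by showing $\Jr(\socpi,f)-\Jtr(\opi,f)\le 0$, after which summing over $n=1,\dots,N$ yields the claimed decomposition.

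The crux is the claim $\Jr(\socpi,f)\le \Jtr(\opi,f)$, which is where the planning MDP construction earns its keep. The key observation is that $\socpi$ --- the safe rollout of $\ocpi$ via \Cref{algo:safeRollout} --- never triggers the terminal transition of $\SM$: by \Cref{theo:safety} the safety criterion satisfies $\Phi(s_t,a_t,\ct,\Qcn)<d$ at \emph{every} step along this rollout, whether $\ocpi$ is executed while still safe or the agent has fallen back to the safe prior $\bpi$. Since $\tilde p$ and $\tilde r$ coincide with $p$ and $r$ on every non-terminating transition (cf. \Cref{eq:terminalTransition,eq:terminalReward}), the trajectory distribution and accumulated reward of $\socpi$ are identical on $\SM$ and $\CM$, giving $\Jtr(\socpi,f)=\Jr(\socpi,f)$. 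Because $\socpi$ is itself a cost-augmented policy admissible on $\SM$ and $\opi$ maximizes $\Jtr(\cdot,f)$ there, we obtain $\Jtr(\opi,f)\ge \Jtr(\socpi,f)=\Jr(\socpi,f)$, so the middle term is non-positive and $\Jr(\ocpi,f)-\Jtr(\ppi,f)\le \PERa+\PERb$.

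I expect the main obstacle to be establishing the identity $\Jr(\socpi,f)=\Jtr(\socpi,f)$ rigorously, since it hinges on the invariant that once the safety criterion forces a fall-back to $\bpi$, the condition $\Phi<d$ persists for all subsequent steps (so that $\SM$ never absorbs into $s_\dagger$), and that any re-switching between $\ocpi$ and $\bpi$ preserves this invariant. This is precisely the property guaranteed by the inductive safety argument underlying \Cref{theo:safety} (cf. \Cref{slemma:safetyPessValue}), which I would invoke to close the gap; everything else is bookkeeping with the telescoping insertion above.
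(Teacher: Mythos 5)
Your proposal is correct and follows essentially the same route as the paper: lower-bound $\Jr(\sppi,f)$ by $\Jtr(\ppi,f)$ via \Cref{lemma:switchingRegret}, insert $\Jr(\socpi,f)$, and dispose of the residual term using the identity $\Jr(\socpi,f)=\Jtr(\socpi,f)$ (since the safe rollout never triggers $s_\dagger$) together with the optimality of $\opi$ on $\SM$. Your added remark that the identity rests on the inductive invariant of \Cref{slemma:safetyPessValue} is a slightly more explicit justification of a step the paper states without elaboration, but it is the same argument.
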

\begin{proof}\label{proof:rlemmaTwo}
Starting with the cumulative regret definition in \Cref{eq:objective} for \(\CM\), we upper bound \(R(N)\) using 
\begin{align}
    R(N) & \labelrel={step:bound_with_lem3} \sum_{n=1}^N ( \Jr(\ocpi,f) - \Jr(\sppi,f) ) \leq \sum_{n=1}^N ( \Jr(\ocpi,f) - \Jtr(\ppi,f) )\\\label{eq:passiveZero}
& \labelrel={step:intro_zer_term} \sum_{n=1}^N (\Jr(\ocpi,f) - \Jr(\socpi,f)
+ \Jr(\socpi,f) - \Jtr(\ppi,f)),
\end{align}
where step \ref{step:bound_with_lem3} follows applying \Cref{lemma:switchingRegret} and \ref{step:intro_zer_term} introducing a zero term by adding and subtracting \(\Jr(\socpi,f)\).
Since \(\CM\) and \(\SM\) are identical for safe policies (e.g. \(\socpi\)) by not transitioning to \(s_\dagger\) (see \Cref{eq:terminalTransition,eq:terminalReward}) and \(\opi\) is the optimal policy on \(\SM\) with \(\model_n\) in episode \(n\)
\begin{align}\label{eq:optiPiPlanning}
 \Jr(\socpi,f) = \Jtr(\socpi,f) \leq \Jtr(\opi,f).
\end{align}
Thus, we obtain the following upper bound of \(R(N)\) by plugging \Cref{eq:optiPiPlanning} into \Cref{eq:passiveZero}:
\begin{equation}
R(N) \le \sum_{n=1}^N ( \Jr(\ocpi,f) - \Jr(\socpi,f) )
+ \sum_{n=1}^N ( \Jtr(\opi,f) - \Jtr(\ppi,f) ).
\end{equation}
\end{proof}

\paragraph{\(\boldsymbol{\PERa}\) regret bound.}
Recall from \Cref{lemma:regretDecomposition} that the per episode regret of executing \(\ocpi\) with \(\Phi\) for \(\model_n\) in episode \(n\) on the planning MDP \(\CM\) is \(\PERa = \Jr(\ocpi) - \Jr(\socpi)\). We rewrite \(\PERa\) in terms of the distance between the optimal policy \(\ocpi\) and the safely rolled-out policy \(\socpi\) in \Cref{lemma:regretPolicyDistance}.

\begin{lemma}\label{lemma:regretPolicyDistance}
    Suppose \Cref{assume:RKHS,assume:noise,assume:regularity,assume:safeGap} hold and the model \(\model_n\) is calibrated according to \Cref{def:callib}. Then the per-episode regret \(\PERa = \Jr(\ocpi,f) - \Jr(\socpi,f)\) admits the bound
    \begin{align*}
        \PERa & \leq \E_{\sopi_c,s_0}\bigg[ \left(L_r + \frac{\bar RL_f}{\sigma}\frac{\gamma}{1-\gamma}\right)\sum_{t=0}^{\infty} \gamma^t \,\E_{\substack{\oca \sim \ocpi(\cdot|s_t)\\\soca \sim \socpi(\cdot|s_t)}}\left[\|\oca-\soca\|\right]\bigg],\\
        & s_{t+1} = f(s_t,a_t) + \omega_t, \quad a_t \sim \socpi(\cdot|s_t)
    \end{align*}
    with the Lipschitz constants \(L_r,L_f\) for the reward and dynamics, and \(\bar R=\max\{R_{max},k_{max}\}\).
\end{lemma}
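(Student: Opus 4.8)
The plan is to read $\PERa = \Jr(\ocpi,f) - \Jr(\socpi,f) = \E_{s_0\sim\rho_0}[V_r^{\ocpi}(s_0) - V_r^{\socpi}(s_0)]$ as a performance difference between two policies acting on the \emph{same} true dynamics $f$, and to telescope it along the trajectory generated by the safe rollout $\socpi$. The key structural observation is that $\socpi$ is, by construction of \Cref{algo:safeRollout}, obtained from $\ocpi$ by sampling $\oca\sim\ocpi(\cdot|s_t)$, keeping it whenever the safety check $\Phi<d$ passes, and falling back to $\bpi(s_t)$ otherwise. This yields a \emph{natural coupling} of $\ocpi(\cdot|s_t)$ and $\socpi(\cdot|s_t)$ with matching marginals, under which $\oca=\soca$ on the safe event and $\soca=\bpi(s_t)$ on the fallback event. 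Hence the coupled quantity $\E_{\oca,\soca}[\|\oca-\soca\|]$ in the statement is exactly the expected action gap, supported only where the fallback fires.

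First I would expand $V_r^{\ocpi}$ and $V_r^{\socpi}$ via their Bellman equations at a state $s$, subtract, and insert the mixed term $\gamma\,\E_{\soca}\E_\omega[V_r^{\ocpi}(f(s,\soca)+\omega)]$. Writing $\Delta(s)\eqdef V_r^{\ocpi}(s)-V_r^{\socpi}(s)$, this splits $\Delta(s)$ into a reward difference $\E_{\oca}[r(s,\oca)]-\E_{\soca}[r(s,\soca)]$, a transition difference $\gamma\big(\E_{\oca}\E_\omega V_r^{\ocpi}(f(s,\oca)+\omega)-\E_{\soca}\E_\omega V_r^{\ocpi}(f(s,\soca)+\omega)\big)$, and a recursive remainder $\gamma\,\E_{\soca}\E_\omega[\Delta(f(s,\soca)+\omega)]$. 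Since the remainder propagates the next state under $\socpi$'s own action, unrolling it gives exactly $\Delta(s_0)=\E_{\socpi,s_0}\big[\sum_{t\geq0}\gamma^t(\text{reward diff}_t+\gamma\,\text{transition diff}_t)\big]$ along the trajectory $a_t\sim\socpi$, $s_{t+1}=f(s_t,a_t)+\omega_t$, matching the expectation in the claim; I then bound each per-step contribution by its absolute value.

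Next I would bound the two per-step terms under the coupling. For the reward difference, $L_r$-Lipschitzness of $r$ (\Cref{assume:regularity}) gives $|r(s_t,\oca)-r(s_t,\soca)|\leq L_r\|\oca-\soca\|$. For the transition difference, I would use Lipschitzness of $f$ to get $\|f(s_t,\oca)-f(s_t,\soca)\|\leq L_f\|\oca-\soca\|$, and then control the Gaussian-smoothed value $\bar V(\mu)\eqdef\E_\omega[V_r^{\ocpi}(\mu+\omega)]$ between the two shifted means. Writing $\bar V(\mu_1)-\bar V(\mu_2)=\int V_r^{\ocpi}(x)(p_{\mu_1}(x)-p_{\mu_2}(x))\,dx$ with $p_{\mu_i}=\mathcal{N}(\mu_i,\sigma^2 I)$ (\Cref{assume:noise}) and bounding by $\|V_r^{\ocpi}\|_\infty\int|p_{\mu_1}-p_{\mu_2}|$, the total variation between two isotropic Gaussians of equal covariance satisfies $\int|p_{\mu_1}-p_{\mu_2}|\leq\|\mu_1-\mu_2\|/\sigma$ \emph{independently of dimension}, giving $|\bar V(\mu_1)-\bar V(\mu_2)|\leq\frac{\|V_r^{\ocpi}\|_\infty}{\sigma}\|\mu_1-\mu_2\|$. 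Using $\|V_r^{\ocpi}\|_\infty\leq R_{\text{max}}/(1-\gamma)\leq\bar R/(1-\gamma)$ and the $L_f$ step, the transition difference is at most $\frac{\bar R L_f}{(1-\gamma)\sigma}\|\oca-\soca\|$; the extra $\gamma$ from the recursion turns this into $\frac{\bar R L_f}{\sigma}\frac{\gamma}{1-\gamma}$. Adding the reward coefficient $L_r$, factoring the common coefficient out of the discounted sum, and taking $\E_{s_0\sim\rho_0}$ yields the stated bound.

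The hard part will be the transition-difference step: bounding the difference of a value function that is bounded but not a priori Lipschitz, evaluated under two Gaussian transition kernels. The clean route, which avoids any dimension dependence, is the total-variation bound between isotropic Gaussians with a shifted mean, which is exactly where the $1/\sigma$ factor enters and is conceptually the same ``difference of Gaussians'' device as \citet[Lemma~C.2]{KakadeInfo} used elsewhere in the analysis. The only other delicate point is the coupling: one must check it is admissible (its marginals are $\ocpi$ and $\socpi$) and that $\|\oca-\soca\|$ vanishes precisely on the safe event, so the bound is genuinely driven by how often \Cref{algo:safeRollout} invokes $\bpi$ — the quantity subsequently tied to model uncertainty in \Cref{lemma:fromPolicyToUncertainty}.
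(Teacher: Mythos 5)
Your proposal is correct and follows essentially the same route as the paper: the Bellman telescoping along the $\socpi$-trajectory is exactly the advantage decomposition the paper imports from \citet[Lemma~A.2]{as2025actsafe}, and your total-variation bound between shifted isotropic Gaussians (paired with $\|V_r\|_\infty \leq \bar R/(1-\gamma)$) is the same ``difference of Gaussians'' device the paper invokes via \citet[Lemma~C.2]{KakadeInfo}, yielding the identical coefficient $L_r + \frac{\bar R L_f}{\sigma}\frac{\gamma}{1-\gamma}$. Your explicit coupling of $\oca$ and $\soca$ (equal on the safe event, fallback otherwise) is also the reading the paper implicitly relies on when it later bounds this expectation by $D_{\set{A}}$ times the fallback probability in \Cref{lemma:fromPolicyToUncertainty}.
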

\begin{proof}\label{proof:rlemmaThree}
    We bound the episodic regret with the advantage formulation \cite[Lemma~A.2]{as2025actsafe}
\begin{align}
    \PERa = & \E_{\socpi,s_0}\left[\sum_{t=0}^{\infty} \gamma^t \, \E_{\oca\sim\ocpi(\cdot|s_t)} \left[A_{r}(\socpi,s_t,\oca)\right]\right],
\end{align}
with \(A_{r,t}(\pi,s_t,a_t) = r(s_t,a_t) + \gamma \E_{\omega}[V_{r}^\pi(s_{t+1}) - V_{r}^\pi(s_t)]\) and \(s_{t+1} = f(s_t,a_t) + \omega_t\).
\begin{align}
    \PERa \labelrel={step:def_adv} 
    & \E_{\socpi,s_0}\Bigg[\sum_{t=0}^{\infty} \gamma^t\, \E_{\oca\sim \ocpi(\cdot|s_t)}\bigg[\Big(r(s_t,\oca) \label{eq:advantageDef} \\ & + \gamma\left(\E_{s_{t+1}|s_t,\ocpi,\omega} \left[V^{\socpi}_r(s_{t+1})\right]- V^{\socpi}_r(s_t)\right)\Big)\bigg]\Bigg]\nonumber\\
    \labelrel={step:def_value} & \E_{\socpi,s_0}\left[\sum_{t=0}^{\infty} \gamma^t \, \E_{\substack{\oca \sim \opi(\cdot|s_t)\\\soca \sim \socpi(\cdot|s_t)}} \left[(r(s_t,\oca) - r(s_t,\soca))\right]\right] \label{eq:advantageExtension}\\ \nonumber& + \E_{\socpi,s_0}\bigg[\sum_{t=0}^{\infty} \gamma^{t+1} \big(\E_{s_{t+1}|s_t,\ocpi,\omega} [V^{\socpi}_r(s_{t+1})]- \E_{s_{t+1}|s_t,\socpi,\omega} [V^{\socpi}_r(s_{t+1})]\big)\bigg]\\
    \labelrel\leq{step:advantageKakade} & \E_{\socpi,s_0}\bigg[ \sum_{t=0}^{\infty} \gamma^t \E_{\substack{\oca \sim \opi(\cdot|s_t)\\\soca \sim \socpi(\cdot|s_t)}}\left[\min\{L_r\|\oca-\soca\|,R_{max}\}\right] \\ \nonumber& + \sum_{t=0}^\infty \gamma \sqrt{\max\{\E_{s_{t+1}|s_t,\ocpi,\omega_t}[R(s_{t+1})],\E_{s_{t+1}|s_t,\socpi,\omega_t}[R(s_{t+1})]\}} \\ \nonumber&\times \gamma^t \E_{\substack{\oca \sim \opi(\cdot|s_t)\\\soca \sim \socpi(\cdot|s_t)}} \left[\min\bigg\{\frac{L_f\|\oca-\soca\|}{\sigma},1\bigg\}\right]\bigg], 
\end{align}
where step \ref{step:def_adv} follows from the advantage definition and \ref{step:def_value} uses the value function to derive the dependence on the reward difference in timestep \(t\). We rewrite the advantage in terms of the expected action difference in step \ref{step:advantageKakade} using the expectation difference under two Gaussians \citep[lemma~C.2]{KakadeInfo} with \(R(s) = \left(V^{\socpi}_r(s)\right)^2\). Step \ref{step:advantageKakade} uses the Lipschitz constants \(L_r,L_f\) for the reward and the dynamics (\Cref{assume:regularity}).
Further, we bound this by:
\begin{align}
    \PERa \labelrel\leq{step:advantageRmax} & \E_{\socpi,s_0}\bigg[ \sum_{t=0}^{\infty} \gamma^t \E_{\substack{\oca \sim \opi(\cdot|s_t)\\\soca \sim \socpi(\cdot|s_t)}} \left[\min\{L_r\|\oca-\soca\|,R_{max}\}\right] \\\nonumber & + \bar R\frac{\gamma}{1-\gamma} \sum_{t=0}^\infty \gamma^t \E_{\substack{\oca \sim \opi(\cdot|s_t)\\\soca \sim \socpi(\cdot|s_t)}} \left[\min\bigg\{\frac{L_f\|\oca-\soca\|}{\sigma},1\bigg\}\right]\bigg]\\
    \labelrel\leq{step:advantageMin} & \E_{\socpi,s_0}\bigg[ \left(L_r + \frac{\bar RL_f}{\sigma}\frac{\gamma}{1-\gamma}\right)\sum_{t=0}^{\infty} \gamma^t \E_{\substack{\oca \sim \opi(\cdot|s_t)\\\soca \sim \socpi(\cdot|s_t)}} \left[\|\oca-\soca\|\right]\bigg]. 
\end{align}
Step \ref{step:advantageRmax} uses \(\sqrt{\max\{\E_{\omega_t,s_{t+1}|s_t,\ocpi,\omega}[R(s_{t+1})],\E_{\omega_t,s_{t+1}|s_t,\socpi,\omega}[R(s_{t+1})]\}} \leq \bar R \frac{1}{1-\gamma}\) with \(\bar R = \max\{R_{\text{max}},k_{\text{max}}\}\) due to \(r\geq0\) and in \ref{step:advantageMin} we drop the \(\min\) operator due to non-negativity.
\end{proof}

\Cref{lemma:regretPolicyDistance} expresses the regret in terms of the difference between actions taken by an optimal policy \(\ocpi\) and those taken by $\socpi$ in episode \(n\) with model \(\model_n\). In the following, we use \Cref{lemma:regretPolicyDistance} to relate $\PERa$ to model uncertainties ``collected'' along trajectories induced by these policies. Bounding these uncertainties ensures that the learner improves its model of the dynamics after each episode.
\begin{lemma}\label{lemma:fromPolicyToUncertainty}
Suppose \Cref{assume:RKHS,assume:noise,assume:regularity,assume:safeGap} hold and the model \(\model_n\) is well calibrated according to \Cref{def:callib}. 
Given \(\Phi\) is defined as in \Cref{theo:safety}, \(\PERa\) depends on the accumulated and discounted uncertainty \(\Sigma_t\) along the trajectories of action \(a_t\) and the safe \(\bpi\), and the diameter of the bounded action space \(D_{\set{A}}\). Given \(\nu_n = (1+\lambda_{\text{pessimism}})\lambda_{\text{pessimism}}\) with \(\lambda_{\text{pessimism}}\) being defined as in \Cref{theo:safety}, 
\begin{align*}
    & \PERa \leq \E_{\socpi,s_0}\bigg[ \left(L_r + \frac{R_{max}L_f}{\sigma}\frac{\gamma}{1-\gamma}\right)\sum_{t=0}^{\infty} \gamma^t  D_{\set{A}} \min\left\{\E_{a_t\sim\ocpi(\cdot | s_t),\omega_t}\left[\frac{\nu_n\Sigma_t(a_t,f)}{\delta_c}\right],1\right\} \bigg],\\  
    &\Sigma_t(a_t,f) = \gamma^t\|\sigma_n(s_t,a_t)\| +\sum_{\tau=t+1}^\infty \E[\gamma^\tau_c\|\sigma_n( s_\tau,\bpi(s_\tau))\|], \quad s_{\tau+1} = f(s_\tau,\bpi(s_\tau)) + \omega_\tau\\
    &\text{and } \delta_c = \min_t (\E_{a_t\sim\socpi(\cdot | s_t)}[d - (c_{<t} + \gamma^t c(s_t,a_t)+ \gamma^{t+1} \E_{\omega_t}[V_c^{\bpi}(s_{t+1})])])\in\R^+.
\end{align*}
\end{lemma}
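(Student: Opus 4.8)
The plan is to start from \Cref{lemma:regretPolicyDistance}, which already controls $\PERa$ by the discounted expected action discrepancy $\E_{\oca\sim\ocpi,\soca\sim\socpi}[\|\oca-\soca\|]$ along the safe rollout of $\ocpi$. The first observation is that, by construction of \Cref{algo:safeRollout}, $\socpi$ coincides with $\ocpi$ whenever the sampled action $\oca$ passes the safety check $\Phi(s_t,\oca,\ct,\Qcn)<d$, and only falls back to $\bpi(s_t)$ otherwise. Hence $\|\oca-\soca\|$ vanishes off the fallback event and is at most the action-space diameter $D_{\set{A}}$ on it, so I would bound
\[
\E_{\substack{\oca\sim\ocpi(\cdot|s_t)\\\soca\sim\socpi(\cdot|s_t)}}[\|\oca-\soca\|]\;\le\; D_{\set{A}}\,\Pr_{\oca\sim\ocpi(\cdot|s_t)}\!\big[\Phi(s_t,\oca,\ct,\Qcn)\ge d\big].
\]
It then remains to bound this fallback probability by the discounted model uncertainty.

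The crux is to show that a fallback at step $t$ forces a large pessimism-induced overestimate. On the fallback event we have $\ct+\gamma^t\Qcn(s_t,\oca)\ge d$. On the other hand, since $\ocpi$ is strictly feasible and $\bpi$ is safe (\Cref{assume:safeGap}), the \emph{true} discounted cost of taking $\oca$ and then following $\bpi$ stays strictly below the budget; quantifying this uniform margin via the minimum over $t$ is exactly the definition of $\delta_c>0$, so $\ct+\gamma^t c(s_t,\oca)+\gamma^{t+1}\E_{\omega_t}[\Vc(f(s_t,\oca)+\omega_t)]\le d-\delta_c$. Subtracting the two inequalities cancels the common realized prefix $\ct$ and yields
\[
\gamma^t\Qcn(s_t,\oca)-\Big(\gamma^t c(s_t,\oca)+\gamma^{t+1}\E_{\omega_t}[\Vc(f(s_t,\oca)+\omega_t)]\Big)\;\ge\;\delta_c,
\]
i.e. on a fallback the gap between the pessimistic cost-to-go and the true cost-to-go is at least $\delta_c$.

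Next I would upper-bound this gap by the accumulated uncertainty $\Sigma_t(\oca,f)$. Writing $\Qcn$ out via \Cref{eq:pessimistic-cost-value-approx}, the gap splits into (a) the explicit one-step-and-onward penalties $\lambda_{\text{pessimism}}\|\sigma_n\|$ that define $\Qcn$, and (b) the mismatch between the unpenalized \emph{nominal} cost-to-go of $\bpi$ and its true cost-to-go, controlled using \Cref{slemma:intrinsicBound} together with $f\in\model_n$. Both contributions are proportional to $\lambda_{\text{pessimism}}$ times an accumulation of $\|\sigma_n\|$ along the $\bpi$-trajectory generated by the \emph{nominal} dynamics $\mu_n$. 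To express everything along the \emph{true} $\bpi$-trajectory appearing in $\Sigma_t$, I would transport the uncertainty from the nominal to the true rollout via the difference-of-Gaussians estimate \citep[Lemma~C.2]{KakadeInfo} and the Lipschitz continuity of $\sigma_n$ (\Cref{assume:regularity}); this change of trajectory is what supplies the additional factor $(1+\lambda_{\text{pessimism}})$, giving $\nu_n=(1+\lambda_{\text{pessimism}})\lambda_{\text{pessimism}}$ and
\[
\gamma^t\Qcn(s_t,\oca)-\Big(\gamma^t c(s_t,\oca)+\gamma^{t+1}\E_{\omega_t}[\Vc(f(s_t,\oca)+\omega_t)]\Big)\;\le\;\nu_n\,\Sigma_t(\oca,f).
\]

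Combining the last two displays gives $\delta_c\le\nu_n\Sigma_t(\oca,f)$ on every fallback, hence the pointwise indicator bound $\ones\{\Phi\ge d\}\le \nu_n\Sigma_t(\oca,f)/\delta_c$. Taking expectations over $\oca\sim\ocpi$ and $\omega_t$ and using that a probability is at most one yields $\Pr_{\oca}[\Phi\ge d]\le \min\{\E_{\oca,\omega_t}[\nu_n\Sigma_t(\oca,f)/\delta_c],1\}$; substituting this into the action-difference bound and folding in the constant $(L_r+\frac{R_{\max}L_f}{\sigma}\frac{\gamma}{1-\gamma})$ from \Cref{lemma:regretPolicyDistance} produces the claim. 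I expect the main obstacle to be steps (a)--(b): pinning down the precise constant $\nu_n$, since this requires simultaneously tracking the explicit penalty and the nominal-to-true trajectory transport, and ensuring the difference-of-Gaussians accumulation yields exactly the factor $(1+\lambda_{\text{pessimism}})$ rather than something looser.
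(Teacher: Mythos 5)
Your proposal is correct and follows essentially the same route as the paper: bound the action discrepancy from \Cref{lemma:regretPolicyDistance} by \(D_{\set{A}}\) times the fallback probability, show that a fallback forces the pessimism gap between \(\Qcn\) and the true cost-to-go to exceed \(\delta_c\), bound that gap by \(\nu_n\Sigma_t(a_t,f)\) via the nominal-to-true transport of values and uncertainties (which supplies the \((1+\lambda_{\text{pessimism}})\) factor), and conclude with Markov's inequality — your pointwise indicator bound followed by taking expectations is exactly that step, and the paper likewise establishes \(\delta_c>0\) from strict feasibility of \(\ocpi\) together with \Cref{assume:safeGap}.
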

\begin{proof}\label{proof:rlemmaFour}
We can upper bound the expected difference between the policies by the action bound \(D_{\set{A}}\) times the probability \(\sP\) of falling back to the safe prior 
    \begin{align}
    D_{\socpi}(s_t) = & \E_{\substack{\oca \sim \opi(\cdot|s_t)\\\soca \sim \socpi(\cdot|s_t)}} \left[\|\oca-\soca\|\right]\\   \labelrel\leq{step:termProbDef} &D_{\set{A}}\sP_{a_t \sim \ocpi(\cdot|s_t)}\left(c_{<t} + \gamma^t \Qcn(s_t,a_t) \geq d \right)\\
      \labelrel={step:termProbZero1} & D_{\set{A}}\sP_{a_t \sim \ocpi(\cdot|s_t)}\bigg(c_{<t} + \gamma^t c(s_t,a_t) \\\nonumber& + \gamma^{t+1}\Big(\E_{\omega_t}[\tVc(\mu_n(s'_t,a'_t)+\omega_t)]  + \lambda_{\text{pessimism}} \Sigma_t(a_t,\mu_n)\Big) \geq d\bigg)
    \end{align}
    with \(s'_{t+1} = \mu_n(s'_t,a'_t) + \omega_t\), \(\Sigma_t(a_t,\mu_n) = \|\sigma_n(s_t,a_t)\| + \sum_{\tau=t+1}^\infty \E[\gamma^{\tau-t}\|\sigma_n(s'_\tau,\bpi(s'_\tau))\|]\) and \(\tVc(s_0) = \sum_{t=0}^\infty \gamma^t c(s'_t,a'_t)\) following the model dynamics \(\mu_n\).
    Step \ref{step:termProbDef} follows from the switching criterion in \Cref{theo:safety} and step \ref{step:termProbZero1} by the definition in \Cref{eq:pessimistic-cost-value-approx}.
    Next, we relate the uncertainty along  trajectories under the model dynamics \(\mu_n\) to those under the true dynamics \(f\) by
    \begin{align}
      D_{\socpi}(s_t)       \labelrel\leq{step:termProbZero} & D_{\set{A}}\sP_{a_t \sim \ocpi(\cdot|s_t)}\bigg(c_{<t} + \gamma^t c(s_t,a_t) + \gamma^{t+1}\Big(\E_{\omega_t}[\tVc(\mu_n(s'_t,a'_t)+\omega_t)]  \\\nonumber& + \E_{\omega_t}[V^{\bpi}_{c}(f(s_t,a_t)+\omega_t)] - \E_{\omega_t}[V^{\bpi}_{c}(f(s_t,a_t)+\omega_t)] \\ \nonumber & + \lambda_{\text{pessimism}} (\Sigma_t(a_t,\mu_n) + \Sigma_t(a_t,f) - \Sigma_t(a_t,f))\Big) \geq d\bigg)
 \\
      \labelrel\leq{step:termBound} & D_{\set{A}}\sP_{a_t \sim \ocpi(\cdot|s_t)}\bigg(c_{<t} + \gamma^t c(s_t,a_t) + \gamma^{t+1}\Big(\E_{\omega_t}[V^{\bpi}_{c}(f(s_t,a_t)+\omega_t)]  \\\nonumber& + (1 + \lambda_{\text{pessimism}})\lambda_{\text{pessimism}} \Sigma_t(a_t,f)\Big) \geq d\bigg)\\
      \labelrel\leq{step:compacter} & D_{\set{A}}\sP_{a_t \sim \ocpi(\cdot|s_t)}\bigg(c_{<t} + \gamma^t c(s_t,a_t) + \gamma^{t+1}\Big(\E_{\omega_t}[V^{\bpi}_{c}(s_{t+1})] + \nu_n \Sigma_t(a_t,f)\Big) \geq d\bigg)
\end{align}
We first expand the inequality in \ref{step:termProbZero} with two zero terms. In step \ref{step:termBound}, we bound the difference of the value and the uncertainty for the different dynamics using \citet[Lemma~A.5]{sukhija2025optimism}. In \ref{step:compacter} we define \(\nu_n = (1+\lambda_{\text{pessimism}})\lambda_{\text{pessimism}}\)
and sample according to the true dynamics \(s_{t+1} = f(s_t,a_t) + \omega_t\).  

Consider the set of all policies that are safely reachable given $\bpi$
\begin{equation}
    \Pi_{<d}^{\bpi} \coloneqq \{\pi \mid \E_\pi[c_{<t} + \gamma^t c(s_t,a_t)] + \gamma^{t+1}\E_{\omega_t}[V_c^{\bpi}(s_{t+1})] < d, \forall t \}.
\end{equation}
Since $\ocpi$ is strictly feasible we have $\E_{\ocpi}[c_{<t} + \gamma^t c(s_t,a_t)] + \gamma^{t+1}\E_{\omega_t}[V_c^{\ocpi}(s_{t+1})] < d$; in addition, $\Vc(s) \leq V_c^{\ocpi}(s)$  for all initial states (cf. \Cref{assume:safeGap}), therefore $\E_{\ocpi}[c_{<t} + \gamma^t c(s_t,a_t)] + \gamma^{t+1}\E_{\omega_t}[V_c^{\bpi}(s_{t+1})] < d$ implying that $\ocpi \in \Pi_{<d}^{\bpi}$.
We define the smallest (time-wise) safety gap due to switching from $\ocpi$ to $\bpi$ as
\begin{equation}
    \delta_c \eqdef \min_t (\E_{a_t\sim\ocpi(\cdot|s_t)}[d - (c_{<t} + \gamma^t c(s_t,a_t)+ \gamma^{t+1} \E_{\omega_t}[V_c^{\bpi}(s_{t+1})])])\in\R_{>0}.
\end{equation}
We thus rewrite the probability of invoking the safe prior \(\bpi\)
\begin{align}
    D_{\socpi}(s_t) & = D_{\set{A}}\sP_{a_t \sim \ocpi(\cdot|s_t)}\bigg(\nu_n \Sigma_t(a_t,f) \geq d- (c_{<t} + \gamma^t c(s_t,a_t) + \gamma^{t+1} \E_{\omega_t}[V^{\bpi}_{c}(s_{t+1})])\bigg)\\
    & = D_{\set{A}}\sP_{a_t \sim \ocpi(\cdot|s_t)}\bigg(\nu_n \Sigma_t(a_t,f) \geq \delta_c\bigg).
\end{align}
Using \emph{Markov's inequality}, we can bound the probability of invoking the prior by
\begin{align} \label{eq:policyDiff}
    D_{\socpi}(s_t) & \leq D_{\set{A}}\min\Bigg\{\E_{a_t\sim\ocpi(\cdot|s_t),\omega_t}\left[\frac{\nu_n \Sigma_t(a_t,f)}{\delta_c}\right],1\Bigg\}
\end{align}
Finally, we insert \Cref{eq:policyDiff} into \Cref{lemma:regretPolicyDistance} and obtain
\begin{align}
    \label{eq:optimal-policy-prob}
    \PERa \leq & \E_{\socpi,s_0}\bigg[ \left(L_r + \frac{\bar RL_f}{\sigma}\frac{\gamma}{1-\gamma}\right)\sum_{t=0}^{\infty} \gamma^t  D_{\set{A}} \min\left\{\E_{a_t\sim\ocpi(\cdot|s_t),\omega_t}\left[\frac{\nu_n \Sigma_t(a_t,f)}{\delta_c}\right],1\right\} \bigg] .
\end{align}
\end{proof}
\begin{remark}
Indeed, at the beginning of learning, $\ocpi$ may not lie within the set $\{\pi \mid \E_\pi[c_{<t} + \gamma^t c(s_t,a_t)] + \gamma^{t+1}\E_{\omega_t}[\pVc(s_{t+1})] < d, \forall t \}\subseteq \Pi_{<d}^{\bpi}$ as the pessimistic cost value $V_c^{\bpi} \leq \pVc$ for all $s \in \set{S}$ with probability $1 - \delta$ by definition (recall \Cref{eq:pessValue}). Reducing pessimism such that $\pVc$ converges to $V_c^{\bpi}$ effectively \emph{expands} this set until it converges to $\Pi_{<d}^{\bpi}$.
Moreover, let us consider the case where the prior policy does \emph{not} satisfy the assumption that $\smash{\Vc(s) \leq V_c^{\ocpi}(s)}$ for all $s \in \set{S}$, i.e., the pessimistic prior policy $\bpi$ in fact accumulates \emph{more} costs (in expectation) than an optimal policy $\ocpi$ when executed on the true dynamics $f$. In this case, $\ocpi$ is not guaranteed to lie within $\Pi_{<d}^{\bpi}$ and \algname{SOOPER} converges to the optimum within the set of all feasible policies \(\Pi_{<d}^{\bpi}\) that are safely reachable given $\bpi$.
\end{remark}

\begin{wrapfigure}[14]{r}{0.4\textwidth}  
    \vspace{-0.4cm}
    \centering
    \begin{tikzpicture}[scale=0.8]
  \definecolor{lightgreen}{RGB}{15, 133, 84}
  \definecolor{lightred}{RGB}{204,80,62}
  \definecolor{purple}{RGB}{95,70,144}

  \begin{scope}
    \clip (0,0) rectangle (7,3.5);
    \fill[lightgreen, fill opacity=0.3] (0,0) -- (0,3.5) -- plot[smooth, tension=1] coordinates {
       (1.3,3.5) (1.5,3) (2,2.5) (2.5,2) (4,0.25) (7,0) (7,0)
    } -- cycle;
    \fill[lightred, fill opacity=0.3] (7,3.5) -- plot[smooth, tension=1] coordinates {
       (1.3,3.5) (1.5,3) (2,2.5) (2.5,2) (4,0.25) (7,0)
    } -- (7,1) -- cycle;
  \end{scope}

  \draw[black, thick] plot[smooth, tension=1] coordinates {
    (0.5,0) (2,0.75) (3.5,1.75) (4.5,2.875) (5,3.5)
  };

  \draw[lightgreen, thick] plot[smooth, tension=1] coordinates {
    (2,0.75) (2.3,1.5) (1.8,2.1) (0.8,3) (0.5,3.5)
  };

  \draw[lightgreen, dashed, thick] plot[smooth, tension=1] coordinates {
    (2,0.75) (2.4,1.5) (2,2.1) (1.1,3) (0.9,3.5)
  };
  \draw[lightgreen, dashed, thick] plot[smooth, tension=1] coordinates {
    (2,0.75) (2.2,1.5) (1.6,2.1) (0.5,3) (0.1,3.5)
  };
  \fill[lightgreen, opacity=0.5]
    plot[smooth, tension=1] coordinates {(2,0.75) (2.4,1.5) (2,2.1) (1.1,3) (0.9,3.5)}
    -- plot[smooth, tension=1] coordinates {(0.1,3.5) (0.5,3) (1.6,2.1) (2.2,1.5) (2,0.75)} -- cycle;

  \draw[lightred, thick] plot[smooth, tension=1] coordinates {
    (3.5,1.75) (3.6,2.25) (2.6,3) (2.2,3.5)
  };
  \draw[lightred, dashed, thick] plot[smooth, tension=1] coordinates {
    (3.4,1.65) (3.7,1.75) (3.85,2.25) (3,3) (2.7,3.5)
  };
  \draw[lightred, dashed, thick] plot[smooth, tension=1] coordinates {
    (3.4,1.65) (3.3,1.75) (3.35,2.25) (2.2,3) (1.7,3.5)
  };

  \fill[lightred, opacity=0.5]
    plot[smooth, tension=1] coordinates {(3.4,1.65) (3.7,1.75) (3.85,2.25) (3,3) (2.7,3.5)}
    -- plot[smooth, tension=1] coordinates {(1.7,3.5) (2.2,3) (3.35,2.25) (3.3,1.75) (3.4,1.65)} -- cycle;

  \node at (2,0.75) {$\bullet$};      
  \node at (3.5,1.75) {$\bullet$};      

  \node[black, anchor=west] at (1.7,0.45) {$s_t:\Phi(s_t,a_t,\ct,Q_{c,n}^{\hat \pi})\geq d$};
  \node[black, anchor=west] at (3.7,1.75) {$\mu_n(s_t,a_t)$};
  \node[black, anchor=west] at (4.7,3) {$\pi^*_c$};
\end{tikzpicture}
\vspace{-0.5cm}
    \caption{
    Relating the uncertainty of a safe trajectory (green) to a trajectory that executes $\ocpi$ freely at \(t\) (i.e. not under \Cref{algo:safeRollout}, in red) and therefore may be (possibly wrongly) considered unsafe due to model uncertainties.}
    \label{fig:relatedTrajectories}
\end{wrapfigure}
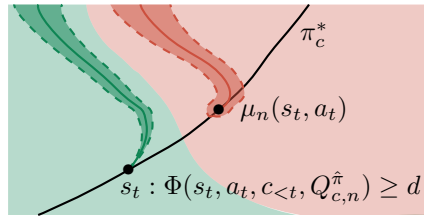
\looseness=-1\Cref{lemma:fromPolicyToUncertainty} establishes an upper bound on the first regret term \(\Delta^1_n\), based on the probability of action \(a_t \sim \ocpi(\cdot|s_t)\) to be unsafe given the model uncertainty. More concretely, even though actions $a_t \sim \ocpi(\cdot|s_t)$ in \Cref{eq:optimal-policy-prob} are determined by a \emph{safe} policy $\ocpi$, they may be regarded unsafe due to limited knowledge during learning, quantified by epistemic model uncertainty. Since such trajectories would trigger $\bpi$ when executing \Cref{algo:safeRollout}, we must relate the bound in \Cref{eq:optimal-policy-prob} to uncertainty induced along trajectories executed by the prior policy $\bpi$. Therefore, we relate in \Cref{lemma:relatingUncertaintyTrajectories} these accumulated uncertainties when executing $\ocpi$ freely in \(t\), to trajectories induced under \Cref{algo:safeRollout}, starting with the safe prior policy at timestep \(t\). This relation is visualized in \Cref{fig:relatedTrajectories}.

\begin{lemma}\label{lemma:relatingUncertaintyTrajectories}
    Suppose \Cref{assume:noise,assume:regularity,assume:RKHS,assume:safeGap} hold and the model \(\model_n\) is well-calibrated according to \Cref{def:callib}, the accumulated uncertainty along a trajectory induced by executing $\ocpi$ freely at \(t\), quantified by \(\E_{a_t\sim\ocpi(\cdot|s_t),\omega_t}[\Sigma_t(a_t,f)]\), can be upper-bounded by the uncertainty along a safe trajectory starting with the safe prior according to \Cref{algo:safeRollout}:
    \begin{equation}
        \E_{a_t \sim \ocpi(\cdot|s_t),\omega_t}[\Sigma_t(a_t,f)] \leq \left(1+\frac{L_\sigma D_{\set{A}}}{\sigma}\right)\sqrt{\frac{\gamma^t}{1-\gamma}}
        \E_{\omega_t}[\Sigma_t(\hat \pi(s_t),f)].
    \end{equation}
\end{lemma}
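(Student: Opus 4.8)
The plan is to exploit that the two trajectories being compared differ \emph{only in their first action}: the trajectory underlying $\E_{a_t\sim\ocpi(\cdot|s_t),\omega_t}[\Sigma_t(a_t,f)]$ plays $a_t\sim\ocpi$ at $s_t$ and then follows the prior $\bpi$, whereas the one underlying $\E_{\omega_t}[\Sigma_t(\hat\pi(s_t),f)]$ plays $\bpi(s_t)$ at $s_t$ and then $\bpi$. First I would introduce the common uncertainty-to-go of the prior under the true dynamics, $U(s)\eqdef\E_{\bpi}[\sum_{\tau\geq0}\gamma^\tau\|\sigma_n(s_\tau,\bpi(s_\tau))\|\mid s_0=s]$, and split $\Sigma_t(a_t,f)=\gamma^t\|\sigma_n(s_t,a_t)\|+\gamma^{t+1}\E_{s_{t+1}\sim\set{N}(f(s_t,a_t),\sigma^2\mathbf{I})}[U(s_{t+1})]$. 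Since conditioning on $s_{t+1}$ makes the remaining discounted uncertainty the same function $U$ regardless of whether $\ocpi$ or $\bpi$ was played at $t$, the Bellman identity gives $\E_{\omega_t}[\Sigma_t(\hat\pi(s_t),f)]=\gamma^tU(s_t)$, so the whole claim reduces to comparing the two next-state laws $\set{N}(f(s_t,a_t),\sigma^2\mathbf{I})$ and $\set{N}(f(s_t,\bpi(s_t)),\sigma^2\mathbf{I})$ together with the single differing action.

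For the immediate term I would invoke Lipschitz-continuity of the uncertainty (\Cref{assume:regularity}, constant $L_\sigma$) and the action-space diameter $D_{\set{A}}$ to write $\|\sigma_n(s_t,a_t)\|\leq\|\sigma_n(s_t,\bpi(s_t))\|+L_\sigma D_{\set{A}}$. For the uncertainty-to-go term, the key device---and the crux of the proof---is to avoid any \emph{pathwise} coupling of the two trajectories, which would let the state deviation grow geometrically in $L_f$ and blow up the bound. Instead I would apply the difference-of-Gaussians inequality \citep[Lemma~C.2]{KakadeInfo} at the \emph{single} step where the trajectories diverge: the two next-state means differ by at most $\|f(s_t,a_t)-f(s_t,\bpi(s_t))\|\leq L_f D_{\set{A}}$ (Lipschitzness of $f$ plus $D_{\set{A}}$), so the difference of $\E[U(s_{t+1})]$ under the two laws is controlled by $\sqrt{\E[U^2]}\,\min\{L_f D_{\set{A}}/\sigma,1\}$, with the factor $1/\sigma$ arising from the Gaussian mean-shift sensitivity. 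Everything downstream of $s_{t+1}$ is absorbed into the fixed function $U$, so no recursion over the horizon is needed.

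It then remains to turn the second-moment term $\sqrt{\E[U(s_{t+1})^2]}$ into the stated horizon factor. Here I would apply Cauchy--Schwarz to the discounted sum defining $U$, using $U(s)^2=\big(\sum_{\tau}\gamma^\tau u_\tau\big)^2\leq\frac{1}{1-\gamma}\sum_\tau\gamma^\tau u_\tau^2$ for the non-negative per-step uncertainties $u_\tau=\|\sigma_n(s_\tau,\bpi(s_\tau))\|$, which are uniformly bounded via the bounded kernel of \Cref{assume:RKHS}. Taking expectations and re-summing over $\tau\geq t$ produces the factor $\sqrt{\sum_{\tau\geq t}\gamma^\tau}=\sqrt{\gamma^t/(1-\gamma)}$ while collapsing the squared quantity back to one dominated by $\E_{\omega_t}[\Sigma_t(\hat\pi(s_t),f)]=\gamma^tU(s_t)$. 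Bundling the immediate-term constant (scaling with $L_\sigma D_{\set{A}}$) and the to-go constant (scaling with $L_f D_{\set{A}}/\sigma$) into the single prefactor $\big(1+L_\sigma D_{\set{A}}/\sigma\big)$ and multiplying by $\sqrt{\gamma^t/(1-\gamma)}$ yields the claim.

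I expect the main obstacle to be exactly the replacement of pathwise trajectory comparison by the one-step distributional argument: one must verify cleanly that conditioning on $s_{t+1}$ renders the residual uncertainty a single function $U$ independent of the action chosen at $t$, so that \citep[Lemma~C.2]{KakadeInfo} applies to $U$ alone; and one must carefully convert the resulting $\sqrt{\E[U^2]}$ into the \emph{linear} quantity $\Sigma_t(\hat\pi(s_t),f)$ rather than a squared one, tracking the boundedness of $\|\sigma_n\|$ so that the constants assemble into the stated prefactor. A secondary point of care is the exact bookkeeping of the discount exponents through the split and the Cauchy--Schwarz step, which is where the precise $\sqrt{\gamma^t/(1-\gamma)}$ factor must be confirmed.
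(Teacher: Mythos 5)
Your overall strategy---isolate the single step at which the two trajectories diverge, apply the difference-of-Gaussians lemma of \citet[Lemma~C.2]{KakadeInfo} once at that step so that no pathwise recursion in $L_f$ is needed, and then use Cauchy--Schwarz on the discounted tail to extract the $\sqrt{\gamma^t/(1-\gamma)}$ factor---is the same skeleton as the paper's proof, and the final Cauchy--Schwarz/interchange step is essentially identical. However, your bookkeeping of the first step does not produce the stated bound, for two concrete reasons. First, you handle the immediate term \emph{additively}, via $\|\sigma_n(s_t,a_t)\|\leq\|\sigma_n(s_t,\bpi(s_t))\|+L_\sigma D_{\set{A}}$. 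The lemma's conclusion is purely \emph{multiplicative} in $\E_{\omega_t}[\Sigma_t(\bpi(s_t),f)]$, and an additive constant $\gamma^t L_\sigma D_{\set{A}}$ cannot be absorbed into the prefactor $\bigl(1+L_\sigma D_{\set{A}}/\sigma\bigr)$ without a lower bound on $\|\sigma_n\|$, which is not assumed. Second, your application of Lemma~C.2 to the uncertainty-to-go uses the mean shift $\|f(s_t,a_t)-f(s_t,\bpi(s_t))\|\leq L_f D_{\set{A}}$, which yields a factor $\min\{L_f D_{\set{A}}/\sigma,1\}$; the stated constant contains $L_\sigma$, not $L_f$, so your constants simply do not assemble into $\bigl(1+L_\sigma D_{\set{A}}/\sigma\bigr)$ as you claim at the end.

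The paper avoids both issues by applying Lemma~C.2 in one shot to the \emph{entire difference} $\E_{\omega_t}[\Sigma_t(a_t,f)]-\E_{\omega_t}[\Sigma_t(\bpi(s_t),f)]$ (immediate term and tail together), with the second-moment factor $\smash{\sqrt{\E_{\omega_t}[\Sigma_t(\bpi(s_t),f)^2]}}$ and the min-argument taken to be $\|\sigma_n(s_t,a_t)-\sigma_n(s_t,\bpi(s_t))\|/\sigma\leq L_\sigma D_{\set{A}}/\sigma$ via the Lipschitz continuity of $\sigma_n$ from \Cref{assume:regularity}. This makes the whole discrepancy multiplicative in the prior trajectory's uncertainty and introduces only $L_\sigma$, after which Jensen ($\E[\Sigma_t]\leq\sqrt{\E[\Sigma_t^2]}$) gives the ``$1+$'' and Cauchy--Schwarz gives $\sqrt{\gamma^t/(1-\gamma)}$. (Your instinct that the Gaussian mean shift should be measured through $f$ is the standard reading of that lemma, so it is worth noting that the paper's choice of min-argument is the load-bearing step here; but as written, your route provably cannot recover the advertised constant, so the proposal does not establish the lemma as stated.)
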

\begin{proof}
    Using the definition of \(\Sigma_t\) from \Cref{lemma:fromPolicyToUncertainty}
    \begin{equation}
        \Sigma_t(a_t,f) = \gamma^t\|\sigma_n(s_t,a_t)\| +\sum_{\tau=t+1}^\infty \E[\gamma^\tau_c\|\sigma_n( s_\tau,\bpi(s_\tau))\|],
    \end{equation}
    we can bound the difference between the accumulated uncertainties of the two trajectories 
    \begin{align}
    \Delta\Sigma_t = &
        \E_{a_t\sim\ocpi(\cdot|s_t)}[\E_{\omega_t}[\Sigma_t(a_t,f)] - \E_{\omega_t}[\Sigma_t(\bpi(s_t),f)]] \\
        \labelrel\leq{step:kakade_equal_dyn_diff_policy} & \sqrt{\E_{\omega_t}\left[\left(\sum_{\tau=t+1}^\infty \E[\gamma^\tau\|\sigma_n( s_\tau,\bpi(s_\tau))\|]\right)^2\right]}\\\nonumber
        & \times \min\left\{\frac{\E_{a_t\sim\ocpi(\cdot|s_t)}[\gamma^t\|\sigma_n(s_t,a_t)-\sigma_n(s_t,\bpi(s_t))\|]}{\sigma},1\right\}\\
        \labelrel\leq{step:action_sigm_Lipschitz} & \frac{L_\sigma D_{\set{A}}}{\sigma}\sqrt{\E_{\omega_t}[\Sigma_t(\bpi(s_t),f)]^2},
    \end{align}
    by using the expectation difference under two Gaussians \cite[Lemma~C.2]{KakadeInfo} in \ref{step:kakade_equal_dyn_diff_policy} and bound the uncertainty difference of under the different actions for the first timestep in \ref{step:action_sigm_Lipschitz} using the closed set of actions with diameter \(D_{\set{A}}\) and the Lipschitz constant \(L_\sigma\) of the uncertainty. Thus we can further bound this term using \(\E_{a_t\sim\ocpi(\cdot|s_t),\omega_t}[\Sigma_t] \leq \sqrt{\E_{\omega_t}[\Sigma_t(\bpi(s_t),f)]^2}\) and obtain
    \begin{align}
        \E_{a_t\sim\ocpi(\cdot|s_t),\omega_t}[\Sigma_t(a_t,f)] &\leq \left(1+\frac{L_\sigma D_{\set{A}}}{\sigma}\right)\sqrt{\E_{\omega_t}[\Sigma_t(\bpi(s_t),f)^2]}\\
        &\labelrel\leq{step:definition_sigma} \left(1+\frac{L_\sigma D_{\set{A}}}{\sigma}\right)\sqrt{\E_{\omega_t}\left[\left(\sum_{\tau=t}^\infty \gamma^\tau \|\sigma_n(s_\tau,\bpi(s_\tau))\|\right)^2\right]}\\
         &\labelrel\leq{step:cauchy_schwarz} \left(1+\frac{L_\sigma D_{\set{A}}}{\sigma}\right)\sqrt{\E_{\omega_t}\left[\left(\sum_{\tau=t}^\infty \gamma^\tau\right) \left(\sum_{\tau=t}^\infty \gamma^\tau \|\sigma_n(s_\tau,\bpi(s_\tau))\|^2\right)\right]}\\
         &\labelrel\leq{step:factor_out_geom_series} \left(1+\frac{L_\sigma D_{\set{A}}}{\sigma}\right)\sqrt{\frac{\gamma^t}{1-\gamma} \left(\sum_{\tau=t}^\infty \E_{\omega_t}[ \gamma^\tau \|\sigma_n(s_\tau,\bpi(s_\tau))\|^2]\right)}\\
         &\labelrel\leq{step:interchange_square_sum} \left(1+\frac{L_\sigma D_{\set{A}}}{\sigma}\right)\sqrt{\frac{\gamma^t}{1-\gamma} \left(\sum_{\tau=t}^\infty \E_{\omega_t}[ \sqrt{\gamma^\tau} \|\sigma_n(s_\tau,\bpi(s_\tau))\|]\right)^2}\\
         &\leq \left(1+\frac{L_\sigma D_{\set{A}}}{\sigma}\right)\sqrt{\frac{\gamma^t}{1-\gamma}}\sum_{\tau=t}^\infty \E_{\omega_t}[ \sqrt{\gamma^\tau} \|\sigma_n(s_\tau,\bpi(s_\tau))\|],
    \end{align}
    where we get step \ref{step:definition_sigma} by restating the definition of \(\Sigma_t(\bpi(s_t))\) from \Cref{lemma:fromPolicyToUncertainty} and thereafter applying the \emph{Cauchy-Schwarz} inequality in \ref{step:cauchy_schwarz}. In step \ref{step:factor_out_geom_series}, we factor out the infinite sum of \(\gamma^\tau\) and can interchange the square with the summation in \ref{step:interchange_square_sum} due to the non-negativity of \(\|\sigma_n\|\). 
\end{proof}
Following from the relation between uncertainty accumulated along trajectories that execute $\ocpi$ freely at \(t\) and thereafter follow \(\bpi\), and those that pessimistically invoke $\bpi$ in \Cref{lemma:relatingUncertaintyTrajectories}, we can bound the first regret term when episodes are collected safely using \Cref{algo:safeRollout}.
\begin{lemma}\label{lemma:firstRegretTerm}
    Suppose \Cref{assume:noise,assume:RKHS,assume:regularity,assume:safeGap} hold and the model is calibrated according to \Cref{def:callib}. Given the Lipschitz constants
     \(L_\sigma\), with bounded actions \( D_{\set{A}}\), the first regret term \(\PERa\) is bounded with probability \(1-\delta\) in terms of the safely rolled out policy \(\socpi\):
    \begin{align*}
    \PERa \leq \E_{\sopi_c}\bigg[ \eta_n \sum_{t=0}^\infty \E[\sqrt{\gamma^t}\|\sigma_n(s_t,a_t)\|]\bigg],
    \quad s_{t+1} = \mu_n(s_t, a_t) + \omega_t, \quad a_t \sim \socpi(\cdot|s_t),
    \end{align*}
where \(\eta_n = \left(L_r + \frac{\bar RL_f}{\sigma}\frac{\gamma}{1-\gamma}\right)\frac{1}{(1-\gamma)^{\frac{3}{2}}}  D_{\set{A}} \left(1+\frac{L_\sigma D_{\set{A}}}{\sigma}\right)\frac{(1+\lambda_{\text{pessimism}})^2\lambda_{\text{pessimism}}}{\delta_c}\).
\end{lemma}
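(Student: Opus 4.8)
The plan is to chain the two preceding lemmas and then carefully collect discount factors and convert the hypothetical prior-rollout uncertainties into realized ones. I would start from the bound on $\PERa$ established in \Cref{lemma:fromPolicyToUncertainty} and first dispose of the truncation using $\min\{x,1\}\le x$, so that the per-step term becomes $\gamma^t D_{\set{A}}\tfrac{\nu_n}{\delta_c}\E_{a_t\sim\ocpi(\cdot|s_t),\omega_t}[\Sigma_t(a_t,f)]$. Next I substitute \Cref{lemma:relatingUncertaintyTrajectories}, which replaces the expected uncertainty of the ``freely executed'' action by the uncertainty accumulated along the safe-prior rollout starting at $s_t$, at the cost of the factor $(1+\tfrac{L_\sigma D_{\set{A}}}{\sigma})\sqrt{\gamma^t/(1-\gamma)}$. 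At this point $\PERa$ is bounded by a double sum: an outer sum over $t$ and, for each $t$, an inner sum over $\tau\ge t$ of the discounted prior-rollout uncertainties $\sqrt{\gamma^\tau}\|\sigma_n(s_\tau,\bpi(s_\tau))\|$.

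The next step is the main computational manipulation. I combine the discount weights $\gamma^t\cdot\sqrt{\gamma^t/(1-\gamma)}\cdot\sqrt{\gamma^\tau}$, writing $\sqrt{\gamma^\tau}=\sqrt{\gamma^t}\sqrt{\gamma^{\tau-t}}$ and reindexing the inner sum by $j=\tau-t$, which collapses the outer weight to $\gamma^{2t}$ and leaves an inner geometric weight $\sqrt{\gamma^{j}}$. Summing the outer factor $\sum_{t}\gamma^{2t}=(1-\gamma^2)^{-1}\le (1-\gamma)^{-1}$ and combining with the $(1-\gamma)^{-1/2}$ already present produces the target power $(1-\gamma)^{-3/2}$. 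The delicate point here is that the inner prior rollout is indexed by the outer time $t$ (a fresh $\bpi$-rollout from each $s_t$), so the double sum is not literally separable; I would handle this by exploiting the switching structure of \Cref{algo:safeRollout} --- once $\socpi$ invokes $\bpi$ it follows it thereafter --- so that the inner prior rollouts can be dominated by a constant multiple of the tail uncertainty accumulated along the realized $\socpi$ trajectory.

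Finally, I convert the uncertainty accumulated along the true dynamics $f$ into uncertainty along the nominal model $\mu_n$ that the agent actually plans with, using calibration (\Cref{def:callib}) together with the Lipschitz assumptions and the value/uncertainty perturbation estimate of \citet[Lemma~A.5]{sukhija2025optimism}. Because replacing $f$ by $\mu_n$ makes the two state trajectories diverge, this step contributes an additional $(1+\lambda_{\text{pessimism}})$ factor, upgrading the $\nu_n=(1+\lambda_{\text{pessimism}})\lambda_{\text{pessimism}}$ coming from \Cref{lemma:fromPolicyToUncertainty} to the $(1+\lambda_{\text{pessimism}})^2\lambda_{\text{pessimism}}$ appearing in $\eta_n$. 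Collecting all multiplicative constants --- the reward/dynamics prefactor $(L_r+\tfrac{\bar{R} L_f}{\sigma}\tfrac{\gamma}{1-\gamma})$, the geometric factor $(1-\gamma)^{-3/2}$, $D_{\set{A}}$, $(1+\tfrac{L_\sigma D_{\set{A}}}{\sigma})$, and $\tfrac{(1+\lambda_{\text{pessimism}})^2\lambda_{\text{pessimism}}}{\delta_c}$ --- yields exactly $\eta_n$, leaving the single discounted sum $\sum_{t}\E[\sqrt{\gamma^t}\|\sigma_n(s_t,a_t)\|]$ along the safe rollout $\socpi$ on $\mu_n$.

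I expect the main obstacle to be the middle step: correctly accounting for the nested, time-indexed prior rollouts when swapping the order of summation, and matching the exact powers of $(1-\gamma)$ and of $(1+\lambda_{\text{pessimism}})$. The discount bookkeeping and the $f\!\to\!\mu_n$ conversion are otherwise routine given the earlier lemmas, but ensuring that no extra $\gamma$-powers are lost --- so the final sum stays $\sqrt{\gamma^t}$-weighted rather than $\gamma^t$-weighted --- requires care.
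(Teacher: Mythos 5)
Your overall route is the paper's route: start from \Cref{lemma:fromPolicyToUncertainty}, substitute \Cref{lemma:relatingUncertaintyTrajectories}, collect the discount factors into $(1-\gamma)^{-3/2}$, and finish with the $f\to\mu_n$ conversion via \citet[Lemma~A.5]{sukhija2025optimism}, which is indeed where the extra $(1+\lambda_{\text{pessimism}})$ factor upgrading $\nu_n$ to $(1+\lambda_{\text{pessimism}})^2\lambda_{\text{pessimism}}$ comes from. All the constants you collect match $\eta_n$.

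The one place you diverge is the middle step, and it is exactly the place you flag as delicate — and there your plan has a genuine gap. You keep the outer sum over $t$ and therefore must dominate, for \emph{every} $t$, the uncertainty of a counterfactual $\bpi$-rollout launched from $s_t$ by the uncertainty of the single realized $\socpi$ trajectory. For $t$ strictly before the actual switching time these counterfactual trajectories genuinely differ from the realized one, so no ``constant multiple of the tail uncertainty'' domination is available; moreover your reindexing $j=\tau-t$ replaces the inner weights $\sqrt{\gamma^\tau}$ by the larger $\sqrt{\gamma^{\tau-t}}$, which makes the domination strictly harder rather than easier. The paper sidesteps this entirely: it first applies $\sum_t\gamma^t x_t\le\frac{1}{1-\gamma}\max_t x_t$ (pulling the $\min\{\cdot,1\}$ outside the max), so that \Cref{lemma:relatingUncertaintyTrajectories} only needs to be invoked for the single maximizing $t$, and then argues that the prior is invoked at that maximal term, so that one counterfactual tail $\sum_{\tau\ge t}\sqrt{\gamma^\tau}\|\sigma_n(s_\tau,\bpi(s_\tau))\|$ can be bounded by the full sum $\sum_{t\ge0}\sqrt{\gamma^t}\|\sigma_n(s_t,a_t)\|$ along the realized rollout (the $\frac{1}{1-\gamma}$ from the max and the $\sqrt{1/(1-\gamma)}$ from \Cref{lemma:relatingUncertaintyTrajectories} together give the $(1-\gamma)^{-3/2}$, with $\sqrt{\gamma^t}\le1$). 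If you want to complete your version you should adopt this $\max_t$ device rather than trying to interchange the double sum.
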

\begin{proof}
    Given the derived regret bound in \Cref{lemma:fromPolicyToUncertainty}, we bound the sum by the largest summation term during the safe rollout (i.e. executing $\ocpi$ under \Cref{algo:safeRollout}) using abbreviation \(C' = \left(L_r + \frac{\bar RL_f}{\sigma}\frac{\gamma}{1-\gamma}\right)\).
    \begin{align}
    \PERa \leq & \E_{\socpi,s_0}\bigg[ C' \sum_{t=0}^{\infty} \gamma^t  D_{\set{A}} \min\left\{\E_{a_t\sim\ocpi(\cdot|s_t),\omega_t}\left[\frac{\nu_n \Sigma_t(a_t,f)}{\delta_c}\right],1\right\} \bigg] \\
    \leq & \E_{\socpi,s_0}\bigg[ C'\frac{1}{1-\gamma}  D_{\set{A}} \max_{t} \left(\min\left\{\E_{a_t\sim\ocpi(\cdot|s_t),\omega_t}\left[\frac{\nu_n \Sigma_t(a_t,f)}{\delta_c}\right],1\right\}\right) \bigg] \\
    \leq & \E_{\socpi,s_0}\bigg[ C'\frac{1}{1-\gamma}  D_{\set{A}} \min\left\{\max_{t} \left(\E_{a_t\sim\ocpi(\cdot|s_t),\omega_t}\left[\frac{\nu_n \Sigma_t(a_t,f)}{\delta_c}\right]\right),1\right\} \bigg]. 
\end{align}
As we invoke the prior for the maximal term we obtain together with \Cref{lemma:relatingUncertaintyTrajectories}
\begin{align}
    \PERa \leq & \E_{\socpi,s_0}\bigg[ C'\frac{1}{1-\gamma}  D_{\set{A}} \min\bigg\{\max_{t} \bigg(\left(1+\frac{L_\sigma D_{\set{A}}}{\sigma}\right)\sqrt{\frac{\gamma^t}{1-\gamma}}\\\nonumber
    &\qquad\qquad \times \frac{\nu_n}{\delta_c} \sum_{\tau=t}^\infty \E_{\omega_t}[ \sqrt{\gamma^\tau} \|\sigma_n(s_\tau,\bpi(s_\tau))\|]\bigg),1\bigg\}\bigg]\\
    \leq& \E_{\socpi,s_0}\bigg[ C'\frac{1}{1-\gamma}  D_{\set{A}} \left(1+\frac{L_\sigma D_{\set{A}}}{\sigma}\right)\sqrt{\frac{1}{1-\gamma}}\frac{\nu_n}{\delta_c}\sum_{t=0}^\infty \E_{\omega_t}[ \sqrt{\gamma^t} \|\sigma_n(s_t,a_t)\|]\bigg],\\
    ~& s_{t+1} = f(s_t,a_t) + \omega_t, \quad a_t\sim \socpi(\cdot|s_t).
\end{align}
Since we now have the uncertainty along the entire safe rollout of \(\socpi\) following the true dynamics \(f\), we can now express the entire rollout in terms of the model dynamics \(\mu_n\), using \cite[Lemma~A.5]{sukhija2025optimism}:
\begin{align}
    \PERa \labelrel\leq{step:zeroSigmas} & \E_{\socpi,s_0}\bigg[ C'\frac{1}{1-\gamma}  D_{\set{A}} \left(1+\frac{L_\sigma D_{\set{A}}}{\sigma}\right)\sqrt{\frac{1}{1-\gamma}}\frac{\nu_n}{\delta_c}\bigg(\sum_{t=0}^\infty \E_{\omega_t}[ \sqrt{\gamma^t} \|\sigma_n(s_t,a_t)\|]\\\nonumber 
    & + \sum_{t=0}^\infty \E_{\omega_t}[ \sqrt{\gamma^t} \|\sigma_n(s'_t,a'_t)\|] - \sum_{t=0}^\infty \E_{\omega_t}[ \sqrt{\gamma^t} \|\sigma_n(s'_t,a'_t)\|]\bigg] \bigg)\\
    \labelrel\leq{step:SigmaRelation} & \E_{\socpi,s_0}\bigg[ C'\frac{1}{1-\gamma}  D_{\set{A}} \left(1+\frac{L_\sigma D_{\set{A}}}{\sigma}\right)\sqrt{\frac{1}{1-\gamma}}\frac{\nu_n}{\delta_c}\\\nonumber &\times (1+\lambda_{\text{pessimism}})\sum_{t=0}^\infty \E_{\omega_t}[ \sqrt{\gamma^t} \|\sigma_n(s'_t,a'_t)\|]\bigg]\\ 
    ~& s_{t+1} = f(s_t,a_t) + \omega_t, \quad s'_{t+1} = \mu_n(s'_t,a'_t) + \omega_t.
\end{align}
In step \ref{step:zeroSigmas}, we introduce a zero term by adding and subtracting the accumulated uncertainties along the model dynamics. Next, we bound in \ref{step:SigmaRelation} the difference of the uncertainties along the two different dynamics with \cite[Lemma~A.5]{sukhija2025optimism}.
Thus, the proof is complete by plugging back in \(C'\) and \(\nu_n\).
\end{proof}

\paragraph{\(\boldsymbol{\PERb}\) regret bound.}
Let us now consider the second term \(\PERb\) that determines the regret of the policy \(\ppi\) on \(\SM\), given \(\model_n\). Therefore, we first bound the value difference between the pessimistic value estimate \(\pVr\) and the true value of executing \(\bpi\) on \(f\). Leveraging this bound, we can upper-bound the value difference of executing \(\ppi\) on \(\CM\) and \(\SM\) in \Cref{lemma:regretPlanning}. Finally, we bound the second regret term \(\PERb\) in \Cref{lemma:regretSecond}.

\begin{lemma}\label{lemma:pess}
    Suppose \Cref{assume:noise,assume:RKHS,assume:regularity} hold and the model \(\model_n\) is well-calibrated according to \Cref{def:callib}. Given the pessimistic cost value \(\pVc\) from \Cref{eq:pessValue}, we get with probability \(1-\delta\)
    \begin{align*}
        \gamma^k|\Vr(s_k)-\pVr(s_k)| \leq \lambda_n \sum_{t=k}^\infty \E\left[\gamma^t \|\sigma_{n}(\tilde{s}_t,\bpi(\tilde{s}_t))\|\right], \quad \lambda_n = \frac{\bar R\gamma}{1-\gamma} \frac{2(1+\sqrt{d_x})\beta_{n-1}(\delta)}{\sigma},
    \end{align*}
    where \(\bar R = \max\{R_{max},k_{max}\}\). The trajectory can either be along the worst-case dynamics \(\tilde{s}_{t+1} = \mu_n(\tilde{s}_t,\bpi(\tilde{s}_t)) + \alpha (1+\sqrt{d_x})\beta_{n-1}(\delta)\sigma_n(\tilde{s}_t,\bpi(\tilde{s}_t)) + \omega_t\), with \(\alpha \in [-1,1]\), initial state \(\tilde s_k = s_k\), or the true dynamics \(\tilde{s}_{t+1} = f(\tilde{s}_t,\bpi(\tilde{s}_t)) + \omega_t\), yielding the same bound.
\end{lemma}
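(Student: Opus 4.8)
The plan is to recognize that $\pVr$ and $\Vr$ are values of the \emph{same} policy $\bpi$ evaluated under two different dynamics that both lie in the calibrated set $\model_n$, and then to invoke the value-difference machinery of \citet[Lemma~A.5]{sukhija2025optimism} already exploited in \Cref{slemma:intrinsicBound}. First I would use well-calibration (\Cref{def:callib}): with probability $1-\delta$ the true dynamics satisfy $f \in \model_n$, so that $\pVr(s_k) = \min_{\tilde f \in \model_n} V_r^{\bpi}(s_k;\tilde f) \le V_r^{\bpi}(s_k;f) = \Vr(s_k)$. Hence the absolute value collapses to the nonnegative gap $\Vr(s_k) - \pVr(s_k) \ge 0$, and it suffices to upper-bound this quantity.

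Next, let $\tilde f^\star \in \model_n$ realize the minimum defining $\pVr$. Since both $f$ and $\tilde f^\star$ lie in $\model_n$, each is within $\beta_{n-1}(\delta)\sigma_n$ of the nominal model $\mu_n$, so by the triangle inequality their one-step means differ by at most $\|f(s,a) - \tilde f^\star(s,a)\| \le 2\beta_{n-1}(\delta)\sigma_n(s,a)$; this factor $2$, relative to the single-sided $\beta$ appearing in \Cref{slemma:intrinsicBound}, is exactly what produces the $2$ in $\lambda_n$. I would then apply the value-difference bound of \citet[Lemma~A.5]{sukhija2025optimism} to the pair $(f,\tilde f^\star)$ under $\bpi$. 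Internally this telescopes the horizon from step $k$ onward and, at each step, controls the difference of the expected next-state value under the two Gaussian kernels $\gN(f(s_t,a_t),\sigma^2 I)$ and $\gN(\tilde f^\star(s_t,a_t),\sigma^2 I)$ via the difference-of-Gaussians estimate \citet[Lemma~C.2]{KakadeInfo}, contributing the $(1+\sqrt{d_x})/\sigma$ factor, while the value-magnitude bound $\sqrt{\E[(V_r^{\bpi})^2]}\le \bar R/(1-\gamma)$ (using $r\ge 0$ and $\bar R=\max\{R_{max},k_{max}\}$) supplies the $\bar R\gamma/(1-\gamma)$ prefactor. Combining with the per-step discrepancy $2\beta_{n-1}(\delta)\sigma_n$ and multiplying through by $\gamma^k$ yields precisely $\gamma^k(\Vr(s_k)-\pVr(s_k)) \le \lambda_n \sum_{t=k}^\infty \E[\gamma^t\|\sigma_n(\tilde s_t,\bpi(\tilde s_t))\|]$.

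To justify that the accumulated uncertainty may be measured along either the worst-case rollout $\tilde s_{t+1} = \mu_n + \alpha(1+\sqrt{d_x})\beta_{n-1}(\delta)\sigma_n + \omega_t$ or the true rollout $\tilde s_{t+1} = f + \omega_t$, I would observe that both trajectories are generated by dynamics in $\model_n$ under the \emph{same} policy $\bpi$, so the identical telescoping applies verbatim to either; the Lipschitz continuity of $\sigma_n$ (\Cref{assume:regularity}) then ensures the two accumulated-uncertainty expressions agree up to the stated bound, so both forms hold.

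The main obstacle is the step converting a per-step mean shift into an infinite-horizon value gap: correctly telescoping while holding the $\tilde f^\star$-value function fixed, bounding each one-step term by a difference-of-Gaussians estimate, and making the $(1+\sqrt{d_x})/\sigma$ factor, the magnitude bound $\bar R/(1-\gamma)$, and the discount bookkeeping all line up. Since this is exactly the content abstracted by \citet[Lemma~A.5]{sukhija2025optimism}, the real work reduces to verifying its hypotheses—that both dynamics lie in $\model_n$, that their discrepancy is bounded by $2\beta_{n-1}(\delta)\sigma_n$, and that the reward value obeys the stated magnitude bound.
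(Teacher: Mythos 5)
Your proposal is correct and follows essentially the same route as the paper: the paper telescopes the value gap over timesteps and applies the difference-of-Gaussians estimate of \citet[Lemma~C.2]{KakadeInfo} at each step, with the factor $2$ arising exactly as you describe from both the true dynamics and the worst-case perturbation $\alpha(1+\sqrt{d_x})\beta_{n-1}(\delta)\sigma_n$ each deviating from $\mu_n$ by at most $(1+\sqrt{d_x})\beta_{n-1}(\delta)\|\sigma_n\|$. Packaging this telescoping via \citet[Lemma~A.5]{sukhija2025optimism} rather than writing it out is a cosmetic difference, since the paper itself invokes that lemma for the analogous bounds elsewhere.
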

\begin{proof}
\begin{align}
    \gamma^k|\Vr(s_k)-\pVr(s_k)| = & \bigg|\E\bigg[\sum_{t=k}^\infty \gamma^t (\Vr(s_{t+1}) - \Vr(\tilde s_{t+1}))\bigg]\bigg| \\
    \leq & \sum_{t=k}^\infty \gamma^t \E[|\E_{\omega_t} (\pVr(s_{t+1}) - \pVr(\tilde s_{t+1}))|],
\end{align}
where \(\tilde{s}_{t+1}\) is in each step the worst next state given \(\model_n\), leading to the worst value (see \Cref{eq:pessValue}). Thus with \(\tilde{s}_{t+1} = \mu_n(\tilde{s}_t,\bpi(\tilde{s}_t)) + \alpha (1+\sqrt{d_x})\beta_{n-1}(\delta)\sigma_n(\tilde{s}_t,\bpi(\tilde{s}_t)) + \omega_t\), where \(\alpha \in [-1,1]\) and \(\hat R(s) = ({\Vr})^2(s)\) it holds
\begin{align}
     & \gamma^k|\Vr(s_k)-\pVr(s_k)| \labelrel\leq{step:worstDynamicsBoundKakade} \sum_{t=k}^\infty \gamma \E\bigg[\sqrt{\max\{\E_{\omega_t}[\hat R(s_{t+1})],\E_{\omega_t}[\hat R(\tilde s_{t+1})]\}}\\ \nonumber & 
    \times \gamma^t \min\left\{\frac{\|f(s_t,\bpi(s_t)) - \mu_n(\tilde s_t, \bpi(\tilde s_t)) +  \alpha (1+\sqrt{d_x})\beta_{n-1}(\delta)\sigma_n(\tilde{s}_t,\bpi(\tilde{s}_t))\|}{\sigma},1\right\}\bigg]\\
    & \labelrel\leq{step:worstDynamicsUncretainty} \frac{\bar R\gamma}{1-\gamma} \frac{2(1+\sqrt{d_x})\beta_{n-1}(\delta)}{\sigma}\sum_{t=k}^\infty \E\left[\gamma^t \|\sigma_n(s_t,\bpi(s_t))\|\right].
\end{align}
We rewrite the value difference dependent on the worst-case dynamics \(\mu_n + \alpha(1+\sqrt{d_x})\beta_{n-1}( \delta)\sigma_n\) and the true dynamics \(f\) in step \ref{step:worstDynamicsBoundKakade} using \citet[Lemma~C.2]{KakadeInfo}, which can be expressed in terms of the model uncertainty in \ref{step:worstDynamicsUncretainty}  \citep[Corollary 3]{OptiActive}. 
\end{proof}

\begin{lemma}\label{lemma:regretPlanning}
    Suppose \Cref{assume:RKHS,assume:noise,assume:regularity} hold and the model \(\model_n\) is calibrated according to \Cref{def:callib}. We consider the following definitions
    \begin{align*}
    \Jtr(\ppi,f) & = \E_{\pi,s_0}\bigg[\sum_{t=0}^\infty \gamma^t \tilde{r}(s_t,a_t)\bigg], \\
    a_t &\sim \ppi(\cdot |s_t), \quad s_{t+1} = \begin{cases}
        f(s_t,a_t) + \omega_t & \Phi(s_t,a_t,\ct,\Qcn) \geq d ,s_t \neq s_\dagger,\\
        s_\dagger & \text{otherwise,}
    \end{cases}\\
    \Jtr(\ppi,\mu_n) & = \E_{\pi,s_0}\bigg[\sum_{t=0}^\infty \gamma^t \tilde{r}(\tilde s_t,\tilde a_t)\bigg],\\
     \tilde a_t &\sim \ppi(\tilde s_t), \quad \tilde{s}_{t+1} = \begin{cases}\mu_n(\tilde{s}_t,\tilde a_t) + \omega_t & \Phi(\tilde s_t,a_t,\ct,\Qcn) \geq d , \tilde s_t \neq s_\dagger,\\
    s_\dagger & \text{otherwise,}
    \end{cases}\\
    \lambda_n & = \frac{\bar R\gamma}{1-\gamma} \frac{2(1+\sqrt{d_x})\beta_{n-1}(\delta)}{\sigma}.
    \end{align*}
Then we have for all \(n\geq 0\), with probability \(1-\delta\) 
\begin{align*}
    |\Jtr(\ppi,\mu_n) - \Jtr(\ppi,f)| & \leq \lambda_n \sum_{t=0}^\infty  \E_{\sppi,s_0}\left[\gamma^t \|\sigma_{n-1}(\tilde{s}_t,\tilde{a}_t)\|\right]\\
    |\Jtr(\ppi,\mu_n) - \Jtr(\ppi,f)| & \leq \lambda_n \sum_{t=0}^\infty  \E_{\sppi,s_0}\left[\gamma^t \|\sigma_{n-1}(s_t,a_t)\|\right].
\end{align*}
\end{lemma}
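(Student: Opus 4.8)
The plan is to treat this as a value-difference (``simulation'') bound. The planning MDP $\SM$ has a fixed reward $\tilde r$ (including the terminal reward $\pVr$ from \Cref{eq:terminalReward}) and a fixed termination rule $\Phi \geq d$ (from \Cref{eq:terminalTransition}); the \emph{only} object that differs between $\Jtr(\ppi,f)$ and $\Jtr(\ppi,\mu_n)$ is the transition mean used at non-terminal steps, $f(s_t,a_t)$ versus $\mu_n(s_t,a_t)$, both carrying the common Gaussian noise $\omega_t$ of variance $\sigma^2$ (\Cref{assume:noise}). First I would fold the absorbing state $s_\dagger$ into the value function, writing $\tilde V \eqdef V^{\ppi}_{\tilde r}$ on $\SM$, so that a terminated trajectory contributes exactly $\gamma^k\pVr(s_k)$ and nothing thereafter. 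Then I would apply the standard telescoping performance-difference identity, swapping $f$ for $\mu_n$ one step at a time:
\begin{align*}
\Jtr(\ppi,f) - \Jtr(\ppi,\mu_n) = \E_{\sppi,s_0}\!\Bigg[\sum_{t=0}^\infty \gamma^{t+1}\Big(\E_{\omega_t}[\tilde V(f(s_t,a_t)+\omega_t)] - \E_{\omega_t}[\tilde V(\mu_n(s_t,a_t)+\omega_t)]\Big)\Bigg],
\end{align*}
where the outer expectation runs over trajectories generated by one dynamics while $\tilde V$ is the value under the other. Because the termination rule and the terminal reward $\pVr$ are identical (dynamics-independent) functions in both MDPs, the only mismatch that survives is this per-step next-state-value gap, accumulated over the non-terminal steps.

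Next I would bound each summand exactly as in the proof of \Cref{lemma:pess}. For a fixed $(s_t,a_t)$ the two next-state laws are Gaussians with common variance $\sigma^2$ and means $f(s_t,a_t)$, $\mu_n(s_t,a_t)$, so \citet[Lemma~C.2]{KakadeInfo} gives
\begin{align*}
\big|\E_{\omega_t}[\tilde V(f(s_t,a_t)+\omega_t)] - \E_{\omega_t}[\tilde V(\mu_n(s_t,a_t)+\omega_t)]\big| \leq \sqrt{\textstyle\max\{\E[\tilde V^2],\E[\tilde V^2]\}}\,\min\Big\{\tfrac{\|f(s_t,a_t)-\mu_n(s_t,a_t)\|}{\sigma},1\Big\}.
\end{align*}
Since $\tilde r$ and the terminal reward $\pVr$ both take values in $[0,R_{max}/(1-\gamma)]$, the value $\tilde V$ is bounded by $\bar R/(1-\gamma)$ with $\bar R=\max\{R_{max},k_{max}\}$, so the $\sqrt{\cdot}$ prefactor is at most $\bar R/(1-\gamma)$. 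I would then convert the dynamics mismatch into model uncertainty via well-calibration (\Cref{def:callib}): since $f\in\model_n$, the mean gap obeys $\|f(s_t,a_t)-\mu_n(s_t,a_t)\| \leq (1+\sqrt{d_{\set{S}}})\beta_{n-1}(\delta)\|\sigma_n(s_t,a_t)\|$. Collecting the $\gamma$ from $\gamma^{t+1}$ reproduces the constant $\lambda_n=\tfrac{\bar R\gamma}{1-\gamma}\tfrac{2(1+\sqrt{d_{\set{S}}})\beta_{n-1}(\delta)}{\sigma}$ of \Cref{lemma:pess}; the factor $2$ is merely slack retained for uniformity (the nominal-versus-true gap needs only $(1+\sqrt{d_{\set{S}}})\beta_{n-1}(\delta)\|\sigma_n\|$, not the worst-case-versus-true gap). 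Summing over $t$ then yields $\lambda_n\sum_t\E[\gamma^t\|\sigma\|]$.

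The two stated forms correspond simply to the two directions in which the telescoping can be run: expanding the $\mu_n$-value along the \emph{true} trajectory (dynamics $f$, states $(s_t,a_t)$) gives the bound with $\|\sigma_{n-1}(s_t,a_t)\|$, whereas expanding the $f$-value along the \emph{model} trajectory (dynamics $\mu_n$, states $(\tilde s_t,\tilde a_t)$) gives the bound with $\|\sigma_{n-1}(\tilde s_t,\tilde a_t)\|$; in the few places where an uncertainty accumulated along one dynamics must be re-expressed along the other, I would invoke \citet[Lemma~A.5]{sukhija2025optimism} as in the proof of \Cref{lemma:firstRegretTerm}. I expect the main obstacle to be handling the termination structure rigorously in the telescoping step: one must verify that truncating a trajectory at $s_\dagger$ and assigning the fixed terminal value $\pVr(s_k)$ introduces no extra dynamics-dependent term beyond the next-state-value differences---equivalently, that the difference in \emph{where} the two dynamics terminate is already captured by $\tilde V$'s dependence on the terminal state. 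Once $s_\dagger$ is absorbed into $\tilde V$ and the termination rule is recognized as a dynamics-independent function of $(s_t,a_t,\ct)$, the remaining steps are routine and mirror \Cref{lemma:pess}, including the $\beta_{n-1}$/$\sigma_n$ index bookkeeping.
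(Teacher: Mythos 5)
Your proposal is correct and follows the same core machinery as the paper's proof: a one-step-at-a-time telescoping of the dynamics swap, the Gaussian mean-shift bound of \citet[Lemma~C.2]{KakadeInfo} applied to the $\SM$-value, calibration to convert $\|f-\mu_n\|$ into $\beta_{n-1}\|\sigma_n\|$, and the observation that running the telescoping in either direction yields the two stated forms. The one genuine divergence is in how termination is handled. The paper splits the sum at the switching time $k$ and inserts an explicit extra term $\gamma^k|V_r^{\bpi}(\tilde s_k)-\pVr(\tilde s_k)|$, which it then bounds via \Cref{lemma:pess}; that detour through the worst-case-versus-true value gap is precisely what produces the factor $2$ in $\lambda_n$. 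You instead observe that, under the lemma's own definitions, \emph{both} $\Jtr(\ppi,f)$ and $\Jtr(\ppi,\mu_n)$ assign the same terminal reward $\pVr$ at the same (common-prefix) state $(\tilde s_k,\tilde a_k)$ and share the dynamics-independent rule $\Phi\geq d$, so the termination step contributes nothing beyond the next-state-value gaps already present in the telescoping, and the factor $2$ is pure slack. I find your reading consistent with the definitions as stated, and your bound is at least as tight; the paper's extra term appears to be conservative bookkeeping (it would be needed if one of the two quantities were the real-system return $\Jr(\sppi,f)$, where the continuation value is $V_r^{\bpi}$ rather than $\pVr$, as in \Cref{lemma:switchingRegret}). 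One point worth making explicit that neither you nor the paper fully spells out: because $\Phi$ depends on the accumulated cost $\ct$, the $\SM$-value is Markovian only in the augmented state $(s,\ct,t)$; the telescoping and Lemma~C.2 still go through because the two branches at each step share $(\tilde s_t,\tilde a_t,\ct,t)$ and differ only in $s_{t+1}$, but this should be stated when you absorb $s_\dagger$ into $\tilde V$.
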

\begin{proof}
    We prove the first inequality \(|\Jtr(\ppi,\mu_n) - \Jtr(\ppi,f)| \leq \lambda_n \sum_{t=0}^\infty  \E_{\ppi,s_0}\left[\gamma^t \|\sigma_{n-1}(\tilde{s}_t,\tilde{a}_t)\|\right]\) and the second one holds for a analogous argument. We extend the argument of \citet[Lemma~A.5]{sukhija2025optimism} to account for the termination state \(s_{\dagger}\).
    \begin{align}
    |\Jtr(\ppi,\mu_n) - \Jtr(\ppi,f)| = \left|\E_{\ppi,s_0,\mu_n} \left[ \sum_{t=0}^\infty\gamma^{t+1}(V^{\ppi}_{r,f}(\tilde{s}_{t+1})-V^{\ppi}_{r,f}(s_{t+1}))\right]\right|, 
\end{align}
where \(V_{r,f}^\pi\) is the value  following \(\pi\) under dynamics \(f\), and we define the states \(\tilde{s}_{t+1}\) and \(s_{t+1}\) as
\begin{align}
    s_{t+1} & = \begin{cases}
        f(\tilde{s}_t,\tilde a_t) + \omega_t & \Phi(\tilde s_t,\tilde a_t,\ct,\Qcn) \geq d, \tilde s_t \neq s_\dagger, \\
        s_\dagger & \text{otherwise,}
    \end{cases}, \quad \tilde a_t \sim \ppi(\cdot | \tilde s_t),\\ \label{eq:modelPlanningDynamics}
     \tilde{s}_{t+1} & = \begin{cases}\mu_n(\tilde{s}_t,\tilde a_t) + \omega_t & \Phi(\tilde s_t,\tilde a_t,\ct,\Qcn) \geq d, \tilde s_t \neq s_\dagger, \\
    s_\dagger & \text{otherwise.}
    \end{cases}, \quad \tilde a_t \sim \ppi(\cdot | \tilde s_t).
\end{align}
Let \(\frac{(1+\sqrt{d_s})\beta_n(\delta)}{\sigma}R(s) = \left(V^\pi_{r,f}(s)\right)^2\) and with \(R(s) \leq \lambda_n\), if \(\ppi\) does not invoke the safe prior \(\bpi\), we obtain the following bound following \citet[Lemma~A.5]{sukhija2025optimism}
\begin{align}\label{eq:regretSafe}
     |\Jtr(\ppi,\mu_n) - \Jtr(\ppi,f)| \leq  & \sum_{t=0}^\infty \gamma \E_{\ppi,s_0}\bigg[\sqrt{\max\{\E_{\omega_t}[R(\tilde{s}_{t+1})],\E_{\omega_t}[R(s_{t+1})]\}} \\ \nonumber & \times \gamma^t \min\left\{\frac{\|s_{t+1} - \tilde{s}_{t+1}\|}{\sigma},1\right\}\bigg].\\
     \leq & \sum_{t=0}^\infty \gamma \E_{\ppi,s_0}\bigg[\sqrt{\max\{\E_{\omega_t}[R(\tilde{s}_{t+1})],\E_{\omega_t}[R(s_{t+1})]\}}\\ \nonumber & \times \gamma^t \min\left\{\frac{\|f(\tilde{s}_t,\tilde a_t)-\mu_n(\tilde{s}_t,\tilde a_t)\|}{\sigma},1\right\}\bigg]\\
     \leq & \frac{\bar R\gamma}{1-\gamma} \frac{(1+\sqrt{d_x})\beta_{n-1}(\delta)}{\sigma}\sum_{t=0}^\infty  \E_{\ppi,s_0}\left[\gamma^t \|\sigma_n(\tilde{s}_t,\tilde a_t)\|\right].
\end{align}
In case the agent invokes \(\bpi\) at some timestep \(t=k\), we formulate the regret as
\begin{align}
    |\Jtr(\ppi,\mu_n) - \Jtr(\ppi,f)| \labelrel\leq{step:decomposeSum}  & \sum_{t=0}^{k-1} \gamma \E_{\ppi,s_0}\bigg[\sqrt{\max\{\E_{\omega_t}[R(\tilde{s}_{t+1})],\E_{\omega_t}[R(s_{t+1})]\}} \\ \nonumber & \times \gamma^t \min\left\{\frac{\|s_{t+1} - \tilde{s}_{t+1}\|}{\sigma},1\right\}\bigg] + \gamma^k |V_r^{\bpi}(\tilde s_k) - \underline{V}_r^{\bpi}(\tilde s_k)| \\ \nonumber & + \sum_{t=k+1}^\infty \sqrt{\E_{\omega_t}[R(s_\dagger)]} \gamma^t \min\bigg\{\frac{\|s_\dagger - s_\dagger\|}{\sigma},1\bigg\}\\ 
    \labelrel={step:zeroDiff}  \label{eq:regretSwitch}& \sum_{t=0}^{k-1} \gamma \E_{\ppi,s_0}\bigg[\sqrt{\max\{\E_{\omega_t}[R(\tilde{s}_{t+1})],\E_{\omega_t}[R(s_{t+1})]\}} \\ \nonumber & \times \gamma^t \min\left\{\frac{\|s_{t+1} - \tilde{s}_{t+1}\|}{\sigma},1\right\}\bigg] + \gamma^k |V_r^{\bpi}(\tilde s_k) - \underline{V}_r^{\bpi}(\tilde s_k)|,\\
    \labelrel\leq{step:lemmaDiffVale} &\sum_{t=0}^{k-1} \gamma \E_{\ppi,s_0}\bigg[\sqrt{\max\{\E_{\omega_t}[R(\tilde{s}_{t+1})],\E_{\omega_t}[R(s_{t+1})]\}}\\\nonumber & \times \gamma^t \min\left\{\frac{\|f(\tilde{s}_t,\pi(\tilde{s}_t))-\mu_n(\tilde{s}_t,\pi(\tilde{s}_t))\|}{\sigma},1\right\}\bigg]\\\nonumber &
    +\frac{\bar R\gamma}{1-\gamma} \frac{2(1+\sqrt{d_x})\beta_{n-1}(\delta)}{\sigma}\sum_{t=k}^\infty \E_{\bpi,\tilde s_k}\left[\gamma^t \|\sigma_n(\tilde{s}_t,\bpi(\tilde{s}_t))\|\right]\\
    \labelrel\leq{step:relateToUncertainty} & \frac{\bar R\gamma}{1-\gamma} \frac{(1+\sqrt{d_x})\beta_{n-1}(\delta)}{\sigma} \sum_{t=0}^{k-1} \E_{\ppi,s_0}[\gamma^t|\sigma_n(\tilde{s}_t,\tilde{a}_t)\|] \\\nonumber & + \frac{\bar R\gamma}{1-\gamma} \frac{2(1+\sqrt{d_x})\beta_{n-1}(\delta)}{\sigma}\sum_{t=k}^\infty \E_{\bpi,\tilde s_k}\left[\gamma^t \|\sigma_n(\tilde{s}_t,\bpi(\tilde{s}_t))\|\right]\\
    \labelrel\leq{step:joinPolicies} &  \frac{\bar R\gamma}{1-\gamma} \frac{2(1+\sqrt{d_x})\beta_{n-1}(\delta)}{\sigma}\sum_{t=0}^\infty  \E_{\sppi,s_0}\left[\gamma^t \|\sigma_n(\tilde{s}_t,\tilde{a}_t)\|\right]. \label{eq:regretInter}
\end{align}
In step \ref{step:decomposeSum}, we split the infinite sum along the trajectory, when the policy prior \(\bpi\) is invoked. We therefore follow the trajectory along the model dynamics in \Cref{eq:modelPlanningDynamics}. Further, we eliminate the difference term for the terminal state \(s_\dagger\) in \ref{step:zeroDiff}. In step \ref{step:lemmaDiffVale}, we use \Cref{lemma:pess} to bound the value difference between the true value \(\Vr\) along \(f\) and the pessimistic value \(\pVr\). Using \citet[Corollary~3]{OptiActive} we bound the difference between the next states in \ref{step:relateToUncertainty}. Finally, in step \ref{step:joinPolicies}, we formulate the bound over \(\sppi\), allowing us to upper bound both terms with the uncertainty using the safely rolled out policy \(\sppi\) and abbreviate \(\lambda_n = \frac{\bar R\gamma}{1-\gamma} \frac{2(1+\sqrt{d_x})\beta_{n-1}(\delta)}{\sigma}\).
\end{proof}
\begin{lemma}\label{lemma:regretSecond}
    Suppose \Cref{assume:RKHS,assume:regularity,assume:noise} hold and the model is well-calibrated according to \cref{def:callib}. Given the optimal policy \(\opi\), then we obtain the  following per-episode regret on \(\SM\) in episode \(n\), 
    \(\PERb = \Jtr(\opi,f) - \Jtr(\pi,f)\) with \(\lambda_n = \frac{\bar R\gamma}{1-\gamma} \frac{2(1+\sqrt{d_x})\beta_{n-1}(\delta)}{\sigma}\) with probability \((1-\delta)\)
    \begin{align*}
        \PERb & =  \Jtr(\opi,f) - \Jtr(\ppi,f) \leq \Jtr(\opi,\mu_n) + \lambda_n \sum_{t=0}^\infty  \E_{\sopi,s_0}\left[\gamma^t \|\sigma_n(s_t,a_t)\|\right] - \Jtr(\ppi,f),
    \end{align*}
    where \(s_{t+1} = \mu_n(s_t,a_t)+\omega_t\)
\end{lemma}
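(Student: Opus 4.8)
The plan is to obtain \Cref{lemma:regretSecond} as an essentially immediate corollary of the simulation-style bound already established in \Cref{lemma:regretPlanning}. The target inequality merely replaces the true-dynamics value $\Jtr(\opi,f)$ by its nominal-model counterpart $\Jtr(\opi,\mu_n)$ at the cost of an additive model-uncertainty penalty, while leaving the $-\Jtr(\ppi,f)$ term untouched. Hence the only substantive work is to bound $\Jtr(\opi,f)-\Jtr(\opi,\mu_n)$ by the discounted accumulated uncertainty collected along rollouts of $\opi$ on $\SM$.

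First I would write the trivial decomposition
\[
\PERb = \Jtr(\opi,f) - \Jtr(\ppi,f) = \big(\Jtr(\opi,f) - \Jtr(\opi,\mu_n)\big) + \Jtr(\opi,\mu_n) - \Jtr(\ppi,f),
\]
and upper-bound the first parenthesized term by its absolute value $|\Jtr(\opi,\mu_n)-\Jtr(\opi,f)|$. The key observation is that \Cref{lemma:regretPlanning} was proved \emph{generically} for a policy rolled out on the planning MDP $\SM$: its derivation invokes only the telescoping simulation argument of \citet{sukhija2025optimism}, the termination structure of $\SM$ (\Cref{eq:terminalTransition,eq:terminalReward} with the pessimistic terminal value $\pVr$), and the calibration of $\model_n$ (\Cref{def:callib}). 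None of these ingredients is specific to the behavioral policy $\ppi$, so the same bound holds verbatim with $\opi$ in place of $\ppi$ and $\sopi$ in place of $\sppi$, yielding
\[
|\Jtr(\opi,\mu_n) - \Jtr(\opi,f)| \leq \lambda_n \sum_{t=0}^\infty \E_{\sopi,s_0}\left[\gamma^t \|\sigma_n(s_t,a_t)\|\right], \quad s_{t+1}=\mu_n(s_t,a_t)+\omega_t.
\]
Substituting this into the decomposition gives exactly
\[
\PERb \leq \Jtr(\opi,\mu_n) + \lambda_n \sum_{t=0}^\infty \E_{\sopi,s_0}\left[\gamma^t \|\sigma_n(s_t,a_t)\|\right] - \Jtr(\ppi,f),
\]
which is the claimed bound.

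I do not expect a genuine obstacle here, since the heavy lifting resides entirely in \Cref{lemma:regretPlanning}. The only point demanding care is the bookkeeping: confirming that the rollout distribution of $\opi$ on $\SM$ under $\mu_n$ matches the ``$\sopi$'' trajectory measure appearing in the statement, and matching the uncertainty index ($\sigma_n$ versus the $\sigma_{n-1}$ written in \Cref{lemma:regretPlanning}, which stems from the calibration constant $\beta_{n-1}$). Because \Cref{lemma:regretPlanning} already supplies both the $\tilde s_t$- and the $s_t$-indexed forms of the bound, I would simply invoke whichever version aligns with the $\E_{\sopi,s_0}[\cdot]$ expectation under $\mu_n$ required by the statement.
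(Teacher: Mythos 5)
Your proposal is correct and coincides with the paper's own (one-line) proof, which likewise obtains the bound by applying \Cref{lemma:regretPlanning} to the optimal policy \(\opi\) on \(\SM\); the add-and-subtract decomposition you write out is exactly the implicit content of that step. The indexing mismatch you flag (\(\sigma_n\) versus \(\sigma_{n-1}\)) is an inconsistency already present in the paper's statement of \Cref{lemma:regretPlanning} itself, not a gap introduced by your argument.
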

\begin{proof}
    We can show this by applying \Cref{lemma:regretPlanning} to an optimal policy \(\opi\) on \(\SM\).
\end{proof}

\paragraph{Intrinsic rewards for exploration and expansion.}
Next, we show with \Cref{lemma:optPlanningBound,lemma:optPlanningUncertainty} that the policy \(\ppi\) in \Cref{eq:optiPlanning} achieves both optimism and expansion by upper-bounding the sum of both regret terms \(\Delta_n = \PERa + \PERb\).
\begin{lemma}\label{lemma:optPlanningBound}
    Given \Cref{assume:noise,assume:regularity,assume:RKHS,assume:safeGap} hold and the model \(\model_n\) is well-calibrated according to \Cref{def:callib}, the policy \(\ppi\) as defined in \Cref{eq:optiPlanning} 
    \begin{align*} \label{eq:optiPlanning}
    \begin{aligned}
            \ppi = & \arg\max_\pi \E_{\pi}\left[\sum_{t=0}^\infty \left(\gamma^t\tilde{r}(s_t,a_t) + (\gamma^t\lambda_{\text{explore}}+\sqrt{\gamma^t}\lambda_{\text{expand}})\|\sigma_n(s_t,a_t)\|\right)\right],\\
        &~s_{t+1} = \mu_n(s_t,a_t ) + \omega_t, \quad a_t \sim \pi(\cdot | s_t), \quad s_0 \sim \rho_0(\cdot),
    \end{aligned}
    \end{align*}
    with \(\lambda_{\text{explore}} = 3\lambda_n\) (\Cref{lemma:regretSecond}) and 
    \(\lambda_{\text{expand}} = 3\eta_n\) (\Cref{lemma:firstRegretTerm}), satisfies with probability \(1-\delta\):
    \begin{align*}
     \Delta_n = &\PERa + \PERb \leq \eta_n \sum_{t=0}^\infty \E_{\ocpi,s_0}[\sqrt{\gamma^t}\|\sigma_n(s_t,a_t)\|] + \Jtr(\opi,\mu_n)\\& + \lambda_n \sum_{t=0}^\infty  \E_{\sopi,s_0}\left[\gamma^t \|\sigma_n(s_t,a_t)\|\right] - \Jtr(\ppi,f)  \\
    \leq & \sum_{t=0}^\infty \E_{\ppi,s_0}[(\gamma^t\lambda_{\text{explore}} + \sqrt{\gamma^t}\lambda_{\text{expand}})|\sigma_n(s_t,a_t)\|] + \Jtr(\ppi,\mu_n)- \Jtr(\ppi,f).
    \end{align*}
\end{lemma}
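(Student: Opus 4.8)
The plan is to treat this as a standard optimism argument, where the only real content beyond bookkeeping is that the behavioral policy $\ppi$ is, by construction in \Cref{eq:optiPlanning}, the maximizer of the optimistic planning objective. First I would substitute the two per-episode bounds already established---\Cref{lemma:firstRegretTerm} for $\PERa$ and \Cref{lemma:regretSecond} for $\PERb$---directly into $\Delta_n = \PERa + \PERb$. This is purely mechanical and yields exactly the first displayed inequality: the three surviving ``comparator'' quantities are the expansion-uncertainty accumulated along the safe rollout of $\ocpi$ (weight $\eta_n$), the nominal-model performance $\Jtr(\opi,\mu_n)$ of the optimal planning policy, and the exploration-uncertainty along the safe rollout of $\opi$ (weight $\lambda_n$), all minus the common term $\Jtr(\ppi,f)$.

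The second (and actual) step is to show that these three comparator quantities are jointly dominated by $J^{\mathrm{opt}}(\ppi) \eqdef \Jtr(\ppi,\mu_n) + \sum_{t}\E_{\ppi}[(\gamma^t\lambda_{\text{explore}} + \sqrt{\gamma^t}\lambda_{\text{expand}})\|\sigma_n(s_t,a_t)\|]$, after which subtracting $\Jtr(\ppi,f)$ from both sides gives the claim. Here I would invoke optimality of $\ppi$: since $\ppi$ maximizes the objective in \Cref{eq:optiPlanning}, we have $J^{\mathrm{opt}}(\ppi) \geq J^{\mathrm{opt}}(\pi^c)$ for every comparator $\pi^c$. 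The nominal-model term $\Jtr(\opi,\mu_n)$ is absorbed because it is at most $J^{\mathrm{opt}}(\opi)$ (the bonuses being nonnegative), and the two uncertainty terms are absorbed because the planning objective weights the exploration and expansion bonuses by $\lambda_{\text{explore}} = 3\lambda_n$ and $\lambda_{\text{expand}} = 3\eta_n$---three times the weights $\lambda_n$ and $\eta_n$ attached to the comparator terms---leaving slack to cover them.

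The main obstacle is making this absorption rigorous when the three comparator terms live along three different trajectory distributions (the safe rollout $\socpi$ of the CMDP-optimal policy, the planning-optimal $\opi$, and its safe rollout $\sopi$), whereas $J^{\mathrm{opt}}(\ppi)$ accumulates uncertainty only along $\ppi$. I would handle this by relating each comparator trajectory to a common reference: for the exploration term, note that $\opi$ is already feasible on $\SM$, so its safe rollout essentially coincides with it, and for the expansion term I would relate the uncertainty along $\socpi$ to that along the planning-optimal trajectory using a difference-of-trajectories argument analogous to \Cref{lemma:relatingUncertaintyTrajectories}. The factor $3$ in $\lambda_{\text{explore}}$ and $\lambda_{\text{expand}}$ is precisely what buys enough room for this: one unit makes the bonus optimistic (covering the nominal-versus-true-dynamics gap from \Cref{lemma:regretPlanning}), and the remaining units dominate the comparator uncertainties once they are transported onto $\ppi$ via the $\arg\max$ property. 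Getting these constants to close with coefficient exactly one on $J^{\mathrm{opt}}(\ppi)$, rather than an inflated multiple, is the delicate part of the argument.
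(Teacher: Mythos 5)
Your first step (mechanically substituting \Cref{lemma:firstRegretTerm} and \Cref{lemma:regretSecond} into $\Delta_n$) matches the paper and is fine, and you correctly identify the $\arg\max$ property of $\ppi$ as the engine of the second step. However, the mechanism you propose for the absorption has a genuine gap. You plan to handle the fact that the three comparator quantities live on different trajectory distributions by \emph{transporting} the uncertainty accumulated along $\socpi$ onto $\opi$ or $\ppi$ via ``a difference-of-trajectories argument analogous to \Cref{lemma:relatingUncertaintyTrajectories}.'' That lemma only relates two trajectories that coincide up to time $t$, differ in a single action at $t$, and both follow $\bpi$ thereafter; it gives you no handle on the relation between uncertainties accumulated by two unrelated policies such as $\socpi$ and $\opi$ (or $\ppi$), and no such relation is true in general. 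Likewise, using $J^{\mathrm{opt}}(\ppi)\ge J^{\mathrm{opt}}(\opi)$ alone recovers $\Jtr(\opi,\mu_n)$ and the $\lambda_n$-weighted uncertainty along $\opi$, but it can never produce the $\eta_n$-weighted uncertainty along $\socpi$, since that term is accumulated by a different policy.

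The paper avoids any transport by applying the $\arg\max$ inequality to \emph{both} comparators --- $J(\ppi)\ge J(\socpi)$ and $J(\ppi)\ge J(\opi)$ --- and \emph{adding} them, so each comparator's own trajectory carries its own $3\eta_n$- and $3\lambda_n$-weighted bonuses and nothing needs to be moved between distributions. The price of averaging is that each value term appears with coefficient $\tfrac12$, so one must show $\Jtr(\socpi,\mu_n)\ge \Jtr(\opi,\mu_n)-(\text{uncertainty terms})$; this is done by the chain $\Jtr(\opi,\mu_n)\le \Jtr(\opi,f)+\cdots\le \Jr(\sopi,f)+\cdots\le \Jr(\ocpi,f)+\cdots\le \Jr(\socpi,f)+\cdots$, using \Cref{lemma:regretPlanning}, the definition of $\SM$, optimality of $\ocpi$ on $\CM$, and \Cref{lemma:firstRegretTerm}. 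The factor $3$ is consumed precisely by this averaging-plus-chain bookkeeping (ending with coefficient at least $1$ on each surviving uncertainty term after dividing by $2$), not by covering a trajectory-transport cost. Without the two-comparator addition and the value-term chain, your argument does not close.
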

\begin{proof}\label{proof:rlemmaSeven}
  Recall that by construction \(\ppi\) maximizes
  \begin{equation}
    J(\pi)=\E_\pi \left[\sum_{t=0}^\infty \gamma^t\,\tilde r\bigl(s_t,a_t)\bigr)+\sum_{t=0}^\infty\Bigl(\sqrt{\gamma^t}3\eta_n+\gamma^t3\lambda_n\Bigr)\|\sigma_n(s_t,a_t))\|\right].
  \end{equation}
  Hence
  \begin{equation}\label{eq:opt-J}
    J(\ppi)\geq J(\pi)\quad\forall\pi.
  \end{equation}
  Apply \Cref{eq:opt-J} first to the policy
  \(\socpi\) (i.e.\ the policy induced by using \(\ocpi\) and then invoking the safe prior policy \(\bpi\)). By the linearity of expectations and
 non-negativity of all terms,
  \begin{align}
     J(\ppi) \geq&  \Jtr(\socpi,\mu_n) + \E_{\socpi,s_0}\left[3\eta_n \sum_{t=0}^{\infty} \sqrt{\gamma^t} \|\sigma_n(s_t,a_t)\| + 3\lambda_n \sum_{t=0}^{\infty} \gamma^t \|\sigma_n(s_t,a_t)\|\right]. 
    \label{eq:bound-hybrid}
  \end{align}
  Next apply \Cref{eq:opt-J} to the optimal policy \(\opi\) on \(\model_n\), yielding
  \begin{equation}\label{eq:bound-planner}
    J(\ppi)\ge \Jtr(\opi,\mu_n)+ \E_{\opi,s_0}\left[3\eta_n\sum_{t=0}^\infty\sqrt{\gamma^t}\|\sigma_n(s_t,a_t)\|+3\lambda_n\sum_{t=0}^\infty\gamma^t\|\sigma_n(s_t,a_t))\|\right].
  \end{equation}
  Adding \Cref{eq:bound-hybrid} and \Cref{eq:bound-planner} results in
  \begin{align}\label{eq:addedTerms}
       2 J(\ppi) \geq &  \Jtr(\socpi,\mu_n) + \E_{\socpi,s_0}\left[3\eta_n \sum_{t=0}^{\infty} \sqrt{\gamma^t} \|\sigma_n(s_t,a_t)\| + 3\lambda_n \sum_{t=0}^{\infty} \gamma^t \|\sigma_n(s_t,a_t)\|\right] \\ \nonumber
         & + \Jtr(\opi,\mu_n)+ \E_{\opi,s_0}\left[3\eta_n\sum_{t=0}^\infty\sqrt{\gamma^t}\|\sigma_n(s_t,a_t)\|+3\lambda_n\sum_{t=0}^\infty\gamma^t\|\sigma_n(s_t,a_t))\|\right].
  \end{align}
  Let us derive a bound for \(\Jtr(\socpi,\mu_n)\), that we can relate to the optimal policy \(\opi\) for \(\SM\).
  \begin{align}
        \Jtr(\opi,\mu_n) \labelrel\leq{step:planning} & \Jtr(\opi,f) + \lambda_n \E_{\opi,s_0}\left[\sum_{t=0}^{\infty} \gamma^t \|\sigma_n(s_t,a_t)\|\right] \\
        \labelrel\leq{step:SMdef} &\Jr(\sopi,f) + \lambda_n \E_{\sopi,s_0}\left[\sum_{t=0}^{\infty} \gamma^t \|\sigma_n(s_t,a_t)\|\right] \\
        \labelrel\leq{step:opti} &\Jr(\ocpi,f) + \lambda_n \E_{\sopi,s_0}\left[\sum_{t=0}^{\infty} \gamma^t \|\sigma_n(s_t,a_t)\|\right] \\
        \labelrel\leq{step:firstRegret} &\Jr(\socpi,f) + \lambda_n \E_{\sopi,s_0}\left[\sum_{t=0}^{\infty} \gamma^t \|\sigma_n(s_t,a_t)\|\right] \\ \nonumber& + \eta_n \E_{\socpi,s_0}\left[\sum_{t=0}^{\infty} \sqrt{\gamma^t} \|\sigma_n(s_t,a_t)\|\right].
  \end{align}
  Step \ref{step:planning}  follows from \Cref{lemma:regretPlanning} and \ref{step:SMdef} is derived from the definition of \(\SM\) in \Cref{eq:terminalTransition,eq:terminalReward}. Next, step \ref{step:opti} uses the optimality of \(\ocpi\) and finally step \ref{step:firstRegret} uses the upper bound of the first regret term \(\PERa\) in \Cref{lemma:firstRegretTerm}. Hence, we can lower bound parts of the term by \(\Jtr(\opi,\mu_n)\).
  Therefore, \(2J(\ppi)\) in \Cref{eq:addedTerms} is lower bounded by
  \begin{align}
        2 J(\ppi) \geq & 2\Jtr(\opi,\mu_n) + 2\eta_n \E_{\socpi,s_0}\left[\sum_{t=0}^{\infty} \sqrt{\gamma^t} \|\sigma_n(s_t,a_t)\|\right]\\\nonumber & + 3\lambda_n \E_{\socpi,s_0}\left[\sum_{t=0}^{\infty} \gamma^t \|\sigma_n(s_t,a_t)\|\right] + 3\eta_n \E_{\sopi,s_0}\left[\sum_{t=0}^{\infty} \sqrt{\gamma^t} \|\sigma_n(s_t,a_t)\|\right]\\\nonumber & + \E_{\sopi,s_0}\left[ 2\lambda_n \sum_{t=0}^\infty \gamma^t \|\sigma_n(s_t,a_t)\|\right] .
    \end{align}
    Since all terms are non-negative, we can drop the terms \(3(\lambda_n + \eta_n) \E_{\socpi,s_0}\left[\sum_{t=0}^{\infty} \gamma^t \|\sigma_n(s_t,a_t)\|\right]\) and obtain by dividing both sides by two:
    \begin{align}
        2 J(\ppi) \geq & 2 \Jtr(\opi,\mu_n) + 2 \lambda_n \E_{\sopi,s_0}\left[\sum_{t=0}^{\infty} \gamma^t \|\sigma_n(s_t,a_t)\|\right] \\\nonumber  & + \E_{\socpi,s_0}\left[ 2\eta_n \sum_{t=0}^\infty \sqrt{\gamma^t} \|\sigma_n(s_t,a_t)\|\right]\\
        J(\ppi) \geq \label{eq:optiPolicyBound}&  \Jtr(\opi,\mu_n) +  \lambda_n \E_{\sopi,s_0}\left[\sum_{t=0}^{\infty} \gamma^t \|\sigma_n(s_t,a_t)\|\right] \\\nonumber  & + \E_{\socpi,s_0}\left[\eta_n \sum_{t=0}^\infty \sqrt{\gamma^t} \|\sigma_n(s_t,a_t)\|\right].
    \end{align}
    By definition of \(J(\ppi)\) and \Cref{eq:optiPolicyBound}, subtracting \(\Jtr(\ppi,f)\) completes the proof.
\end{proof}
In \Cref{lemma:optPlanningBound} we show that the objective upper bounds the performance and the per-episode regret on the \emph{model dynamics}. Next, we show in \Cref{lemma:optPlanningUncertainty} that we can also bound the regret on the \emph{true dynamics} \(f\) using \(\sppi\).
\begin{lemma}\label{lemma:optPlanningUncertainty}
    Given \Cref{assume:RKHS,assume:noise,assume:regularity,assume:safeGap} hold and the model \(\model_n\) is well-calibrated according to \Cref{def:callib}, the per-episode regret in \(n\), \(\Delta_n = \PERa + \PERb\), is upper-bounded with probability of at least \(1-\delta\) for all \(n>0\) by
    \begin{equation*}
        \Delta_n \leq (3\lambda_n^2 + 4\lambda_n + 3\eta_n^2 + 3\eta_n)) \sum_{t=0}^\infty \E_{\sppi,s_0}[\sqrt{\gamma^t} \|\sigma_n(s_t,a_t)\|],
    \end{equation*}
    for \(s_t = f(s_t,a_t) + \omega_t\) following the true dynamics with \(a_t \sim \sppi(\cdot|s_t,\ct,\Qcn)\).
\end{lemma}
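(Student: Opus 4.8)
The plan is to start from the bound already established in \Cref{lemma:optPlanningBound}, namely
\[
\Delta_n \leq \sum_{t=0}^\infty \E_{\ppi,s_0}\big[(\gamma^t\lambda_{\text{explore}} + \sqrt{\gamma^t}\lambda_{\text{expand}})\|\sigma_n(s_t,a_t)\|\big] + \Jtr(\ppi,\mu_n) - \Jtr(\ppi,f),
\]
with \(\lambda_{\text{explore}} = 3\lambda_n\) and \(\lambda_{\text{expand}} = 3\eta_n\). The right-hand side has two conceptually distinct pieces: an \emph{intrinsic-reward term} accumulated along \(\ppi\) under the \emph{nominal} model dynamics \(\mu_n\) inside \(\SM\), and the \emph{planning gap} \(\Jtr(\ppi,\mu_n) - \Jtr(\ppi,f)\). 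Both are currently expressed with respect to \(\mu_n\) and/or \(\ppi\), whereas the target bound must be stated entirely in terms of the uncertainty accumulated along the \emph{safely rolled-out} policy \(\sppi\) under the \emph{true} dynamics \(f\). The whole proof is therefore a change-of-measure exercise: transport every term onto the \(\sppi\)-on-\(f\) trajectory while tracking the constants.

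First I would dispatch the planning gap by invoking the true-dynamics version of \Cref{lemma:regretPlanning}, which gives \(\Jtr(\ppi,\mu_n) - \Jtr(\ppi,f) \leq \lambda_n \sum_{t=0}^\infty \E_{\sppi,s_0}[\gamma^t\|\sigma_n(s_t,a_t)\|]\) already along \(f\); this contributes a single \(\lambda_n\). For the intrinsic-reward term I would proceed in two sub-steps. (a) Swap the nominal dynamics \(\mu_n\) for the true dynamics \(f\) along the rollout by applying \citet[Lemma~A.5]{sukhija2025optimism} to the accumulated-uncertainty functionals (one for the \(\gamma^t\)-weighted exploration bonus, one for the \(\sqrt{\gamma^t}\)-weighted expansion bonus). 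Because the ``reward'' driving each functional is \(\|\sigma_n\|\) itself, the correction produced by swapping the dynamics is again proportional to accumulated uncertainty, rescaling the exploration bonus by a factor \((1+\lambda_n)\) and the expansion bonus by \((1+\eta_n)\), which is precisely what generates the quadratic coefficients \(3\lambda_n^2\) and \(3\eta_n^2\). (b) Relate the \(\ppi\)-on-\(\SM\) trajectory to the \(\sppi\)-on-\(f\) trajectory: the two agree on all actions until the switching time, after which \(\SM\) transitions to the absorbing state \(s_\dagger\) (contributing no further uncertainty) while \(\sppi\) continues to execute \(\bpi\) and keeps accumulating nonnegative uncertainty. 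Hence the truncated accumulation along \(\ppi\) in \(\SM\) is dominated by the full accumulation along \(\sppi\) on \(f\), so this comparison goes in the favorable direction and introduces no new constants.

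Finally I would fold every \(\gamma^t\)-weighted term into a \(\sqrt{\gamma^t}\)-weighted one using \(\gamma^t \leq \sqrt{\gamma^t}\) for \(\gamma\in(0,1)\), and collect coefficients: the exploration contribution \(3\lambda_n(1+\lambda_n)\), the planning gap \(\lambda_n\), and the expansion contribution \(3\eta_n(1+\eta_n)\) sum to \(3\lambda_n^2 + 4\lambda_n + 3\eta_n^2 + 3\eta_n\), all multiplying \(\sum_{t=0}^\infty \E_{\sppi,s_0}[\sqrt{\gamma^t}\|\sigma_n(s_t,a_t)\|]\) along \(f\), as claimed. The main obstacle is step~(a): getting the dynamics-transfer bookkeeping to yield \emph{exactly} the quadratic terms \(3\lambda_n^2\) and \(3\eta_n^2\) — in particular ensuring the expansion bonus picks up an \((1+\eta_n)\) factor rather than an \((1+\lambda_n)\) factor — while simultaneously handling the terminal-state truncation of \(\SM\) in step~(b) so that all trajectory and uncertainty comparisons point in the direction that preserves the inequality.
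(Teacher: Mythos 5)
Your proposal follows essentially the same route as the paper's proof: start from \Cref{lemma:optPlanningBound}, bound the planning gap via \Cref{lemma:regretPlanning} (contributing the extra \(\lambda_n\)), transport the intrinsic-reward term from the nominal to the true dynamics via the add-and-subtract/dynamics-transfer argument (yielding the quadratic terms \(3\lambda_n^2\) and \(3\eta_n^2\)), dominate the \(s_\dagger\)-truncated accumulation along \(\ppi\) by the full accumulation along \(\sppi\), and fold \(\gamma^t\le\sqrt{\gamma^t}\). The only difference is the order in which the two transport steps are performed, and your coefficient accounting \(3\lambda_n(1+\lambda_n)+\lambda_n+3\eta_n(1+\eta_n)\) reproduces the paper's constant exactly.
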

\begin{proof}\label{proof:rlemmaEight}
Using the derivation in \Cref{lemma:optPlanningBound}, we can upper bound the regret in terms of \(\ppi\) by
\begin{align}
    \Delta_n \labelrel\leq{step:lemma12} &\sum_{t=0}^\infty \E_{\ppi,s_0}[(\gamma^t\lambda_{\text{explore}} + \sqrt{\gamma^t}\lambda_{\text{expand}})\|\sigma_n(s_t',a_t')\|] + \Jtr(\ppi,\mu_n)- \Jtr(\ppi,f)\\
    \labelrel\leq{step:uncertaintyRelate} & 3(\eta_n+\lambda_n) \E_{\sppi,s_0}\left[\sum_{t=0}^\infty \sqrt{\gamma^t} \|\sigma_n(s_t',a_t')\|\right] + \Jtr(\ppi,\mu_n) - \Jtr(\ppi,f),
\end{align}
where \(s_t'=\mu_n(s_t',a_t')+\omega_t\) follows the true dynamics with \(a_t'\sim \ppi(\cdot|s_t')\) and we insert \Cref{lemma:optPlanningBound} in \ref{step:lemma12}. Step \ref{step:uncertaintyRelate} follows from \(\gamma \in (0,1)\) and the uncertainty along the safely executed policy \(\sppi\) being larger than for \(\ppi\), which terminates in \(s_\dagger\). 
We next use \Cref{lemma:regretPlanning} to bound \(\Jtr(\ppi,\mu_n) - \Jtr(\ppi,f)\) and obtain
\begin{align}
    \Delta_n \leq & 3(\eta_n+\lambda_n) \E_{\sppi,s_0}\left[\sum_{t=0}^\infty \sqrt{\gamma^t} \|\sigma_n(s_t',a_t')\|\right] +  \lambda_n \E_{\sppi,s_0}\left[\sum_{t=0}^\infty \sqrt{\gamma^t} \|\sigma_n(s_t,a_t)\|\right] \\
    \labelrel={step:zeroUncertainties} & 3\eta_n \E_{\sppi,s_0}\left[\sum_{t=0}^\infty \sqrt{\gamma^t} \|\sigma_n(s_t,a_t)\|\right] + 3\eta_n \bigg(\E_{\sppi,s_0}\left[\sum_{t=0}^\infty \sqrt{\gamma^t} \|\sigma_n(s_t',a_t')\|\right] \label{eq:extended} \\ \nonumber  &- \E_{\sppi,s_0}\left[\sum_{t=0}^\infty \sqrt{\gamma^t} \|\sigma_n(s_t,a_t)\|\right]\bigg) 
    + 3\lambda_n\bigg( \E_{\sppi,s_0}\left[\sum_{t=0}^\infty \sqrt{\gamma^t} \|\sigma_n(s_t',a_t')\|\right] \\ \nonumber  & - \E_{\sppi,s_0}\left[\sum_{t=0}^\infty \sqrt{\gamma^t} \|\sigma_n(s_t,a_t)\|\right]\bigg) +  4\lambda_n \E_{\sppi,s_0}\left[\sum_{t=0}^\infty \sqrt{\gamma^t} \|\sigma_n(s_t,a_t)\|\right],
\end{align}
where step \ref{step:zeroUncertainties} introduces zero terms by adding and subtracting accumulated uncertainties. 
We further bound the per episode regret using non-negativity and \Cref{lemma:regretPlanning} 
\begin{align}
    \Delta_n
     \leq & (3\lambda_n^2 + 4\lambda_n + 3\eta_n^2 + 3\eta_n) \E_{\sppi,s_0}\left[\sum_{t=0}^\infty \sqrt{\gamma^t} \|\sigma_n(s_t,a_t)\|\right].
\end{align}
\end{proof}

In \Cref{lemma:optPlanningUncertainty}, we derived an upper bound on the per episode regret \(\Delta_n\), based on the policy \(\sppi\) executed on the true dynamics \(s_{t+1} = f(s_t,a_t) + \omega_t\).
Using \citet{Chowdhury}, we obtain a sublinear cumulative regret for our safe exploration algorithm \algname{SOOPER}.
\rthrm*
\begin{proof}\label{proof:rthrm}
Given the per episode regret bound in \Cref{lemma:optPlanningUncertainty}, we sum over all episodes \(n=1,\dots,N\)
\begin{align}
R(N) = \sum_{n=1}^N \PERa + \PERb \leq \sum_{n=1}^N (3\lambda_n^2 + 4\lambda_n + 3\eta_n^2 + 3\eta_n) \sum_{t=0}^\infty \E_{\sppi,s_0}\left[\sqrt{\gamma^t} \|\sigma_n(s_t, a_t)\|\right].
\end{align}
As by definition \cite{Chowdhury}, for the well-calibrated model \(\model_n\), \(\beta_n \leq \beta_N\) for all \(n=1,\dots,N\), we rewrite
\begin{align}
R(N) \leq & (4\lambda_N + 3(\lambda_N^2 +\eta_N^2 +\eta_N))  \sum_{n=1}^N \sum_{t=0}^\infty \E_{\sppi,s_0}\left[\sqrt{\gamma^t} \|\sigma_n(s_t, a_t)\|\right]\\
\labelrel\leq{step:CauchySchwarz1} & ((4\lambda_N + 3(\lambda_N^2 +\eta_N^2 +\eta_N)) \sqrt{N} \sqrt{\sum_{n=1}^N \E_{s_0}\bigg[\bigg(\sum_{t=0}^\infty \E_{\sppi}\left[\sqrt{\gamma^t} \|\sigma_n(s_t, a_t)\|\right]\bigg)^2\bigg]}\\
\labelrel\leq{step:CauchySchwarz2} & (4\lambda_N + 3(\lambda_N^2 +\eta_N^2 +\eta_N)) \sqrt{\frac{N}{1-\sqrt{\gamma}}} \sqrt{\sum_{n=1}^N \E_{s_0}\bigg[\sum_{t=0}^\infty \E_{\sppi}\left[\sqrt{\gamma^t} \|\sigma_n(s_t, a_t)\|^2\right]\bigg]}.
\end{align}
 We use \emph{Cauchy-Schwarz} in step \ref{step:CauchySchwarz1} for the outer sum over episodes \(n=1,\dots,N\) and in \ref{step:CauchySchwarz2} for the inner sum over timesteps \(t\). Further, we can bound the regret term for episodes \(n=1,\dots, N\), using the truncated horizon \(T_n = -\frac{\log(n)}{\log(\sqrt{\gamma})}\), by decomposing the term into
 \begin{align}
     & \sum_{n=1}^N \E_{s_0}\bigg[\sum_{t=0}^\infty \E_{\sppi}\left[\sqrt{\gamma^t} \|\sigma_n(s_t, a_t)\|^2\right]\bigg]\\
     & = \sum_{n=1}^N \left(\sum_{t=0}^{T_n-1} \E_{\sppi}\left[\sqrt{\gamma^t} \|\sigma_n(s_t, a_t)\|^2\right] +\sum_{t=T_n}^{\infty} \E_{\sppi}\left[\sqrt{\gamma^t} \|\sigma_n(s_t, a_t)\|^2\right]\right)\\
     & \leq \sum_{n=1}^N \sum_{t=0}^{T_n-1} \E_{\sppi}\left[\sqrt{\gamma^t} \|\sigma_n(s_t, a_t)\|^2\right] +\sum_{t=1}^{N} \gamma^{\frac{T_n}{2}}\frac{k_{max}^2}{1-\gamma}\\
     & \labelrel\leq{step:exponent-log-trick} \sum_{n=1}^N \sum_{t=0}^{T_n-1} \E_{\sppi}\left[\sqrt{\gamma^t} \|\sigma_n(s_t, a_t)\|^2\right] +\sum_{n=1}^{N} \frac{1}{n}  \frac{k_{max}^2}{1-\gamma} \\
     & \labelrel\leq{step:theorem931} \sum_{n=1}^N \sum_{t=0}^{T_n-1} \E_{\sppi}\left[\sqrt{\gamma^t} \|\sigma_n(s_t, a_t)\|^2\right] + \frac{k_{max}^2}{1-\gamma}(\log(N) + 1).
 \end{align}
Step \ref{step:exponent-log-trick} follows from \(\smash{\frac{T_n}{2}=-\frac{\log(n)}{\log(\gamma)}}\), with \(\smash{\frac{\log(n)}{\log(\gamma)}} = \log_\gamma(n)\) and hence \(\gamma^{-\log_\gamma(n)} = n^{-1}\). In \ref{step:theorem931}, we use the fact that \(\frac{1}{n}\) is non-decreasing in \(n\) and therefore due to \citet[Theorem~3.9.1]{Lehman2017mathematics} we bound \(\sum_{n=1}^N \frac{1}{n} \leq \log(N) + 1\). Further, we bound the first term \(\sum_{n=1}^N \sum_{t=0}^{T_n-1} \E_{\sppi}\left[\sqrt{\gamma^t} \|\sigma_n(s_t, a_t)\|^2\right]\) as in \citet[theorem~5.7]{sukhija2025optimism}:
\begin{align}
\leq & (4\lambda_N + 3(\lambda_N^2 +\eta_N^2 +\eta_N)) \sqrt{\frac{R_\gamma N \Gamma_{N\log(N)}}{1-\sqrt{\gamma}}+ \frac{R_\gamma \sigma_{max}^2N(\log(N)+1)}{1-\gamma}},
\end{align}
where \(\smash{R_\gamma = \frac{s_{max}}{\log(1+s_{max})}}\), with \(\smash{s_{max} = \frac{\sigma^{-2}d_x\sigma_{max}^2}{1-\sqrt{\gamma}}}\).
As \(T_n\) is non-decreasing, we fix \(T_N\) and since \(\lambda_N\propto \frac{\beta_N}{1-\gamma}\), \(\eta_n \propto \frac{\beta_N^3}{1-\gamma}\), we obtain from the definition of \(\beta_N\) from \citet{Chowdhury} that 
\begin{equation}
    R_N \leq \mathcal{O}\left(\Gamma^{7/2}_{N\log(N)}\sqrt{N}\right).
\end{equation}
\end{proof}

\section{Theoretical Extension to World Models} \label{sec:extension-world-models}
For our theoretical analysis in \Cref{sec:inter,sec:regret-analysis}, we assume, for simplicity in presentation, that the reward and cost functions are known. All arguments can be extended to the more general setting of unknown costs and rewards, which are learned as part of a ``world model'', in addition to the unknown dynamics $f$. In practice, we consider this general setting in our experiments (\Cref{sec:experiments}). Below we briefly illustrate how one can extend the analysis above to this more general case.

First, one needs to impose an additional RKHS assumption on the cost and reward functions, analogous to \Cref{assume:RKHS} for the unknown dynamics \(f\). Accordingly, we jointly learn with the unknown dynamics \(f\) a calibrated model for the reward and cost function with nominal predictions \(\mu_n^r, \mu_n^c\) and epistemic uncertainty \(\sigma_n^r, \sigma_n^c\). Hence, it holds \(\forall (s,a) \in \set{S}\times\set{A}: r(s,a) \in \{r' \mid | r' - \mu^r_n| \leq \beta_n\sigma_n^r\}\) and \(\forall (s,a) \in \set{S}\times\set{A} : c(s,a) \in \{c' \mid | c' - \mu^c_n| \leq \beta_n\sigma_n^c\}\). 

Given only access to the calibrated model predictions, we modify the reward and cost structure on the planning MDP \(\SM\) as follows:
\begin{align}
    \tilde{r}(s_t,a_t) &\eqdef \begin{cases}
        \underline{V}^{\bpi}_r(s_t) & \text{if } \Phi(a_t,s_t,c_{<t},\Qcn) \geq d,\\
        0 & \text{if } s_t =  s_\dagger, \\
        \mu_n^r(s_t,a_t) + \beta_n \sigma_n^r(s_t,a_t) & \text{otherwise},
    \end{cases}\label{eq:modified_reward}\\
    \tilde{c}(s_t,a_t) &\eqdef \mu_n^c(s_t,a_t) + \beta_n \sigma_n^c(s_t,a_t). \label{eq:modified_cost}
\end{align}
By utilizing the above cost and reward, we demonstrate safe exploration and achieve sublinear cumulative regret, even in the presence of unknown cost and reward functions. 

\paragraph{Safety guarantee for unknown costs.}
Under the modified cost $\tilde c$ in \Cref{eq:modified_cost}, the update of the pessimistic Q-value is given by:
\begin{align} \label{eq:modified-cost-value}
    \Qtcn(s, a) \eqdef \E_{\bpi} \left[\sum_{t=0}^\infty\gamma^t (\tilde c(s_t,a_t) + \lambda_{\text{pessimism}} \|\sigma_n(s_t,a_t)\|) ~\bigg|~ s_0=s, a_0=a\right].
\end{align}
For this modified update using \(\tilde c\), it is possible to show that \Cref{slemma:intrinsicBound} holds using uncertain predictions. One can derive it by bounding the difference between the value \(\tVc(s)\) following the model dynamics with the true cost function, and the value \(\tVtc(s)\) following the model dynamics with the mean predictions
\begin{align}
    |\tVc(s) - \tVtc(s)| & = \left|\E_{\bpi} \left[\sum_{t=0}^\infty \gamma^t (c(s_t,a_t)-\mu_n^c(s_t,a_t))\right] \mid s_0=s\right| \\
    & \leq \E_{\bpi} \left[\sum_{t=0}^\infty \gamma^t \left|c(s_t,a_t)-\mu_n^c(s_t,a_t)\right|\mid s_0=s\right] \\
    & \leq \E_{\bpi} \left[\sum_{t=0}^\infty \gamma^t \beta_n \sigma_n^c(s_t,a_t)\mid s_0=s\right],\\
    ~& s_{t+1} = \mu_n(s_t,a_t) + \omega_t, \quad s_0 = s.
\end{align}
Hence, by bringing the actually observable value \(\tVtc(s)\) to the right side, it follows that
\begin{align}
    \tVc(s) &\leq \tVtc(s) + \E_{\bpi} \left[\sum_{t=0}^\infty \gamma^t \beta_n \sigma_n^c(s_t,a_t)\mid s_0=s \right]\\
    & = \E_{\bpi} \left[\sum_{t=0}^\infty \gamma^t( \mu_n^c(s_t,a_t) + \beta_n \sigma_n^c(s_t,a_t))\mid s_0=s\right]\\
    & = \E_{\bpi} \left[\sum_{t=0}^\infty \gamma^t\tilde c(s_t,a_t) \mid s_0=s \right], \label{eq:cost-bound}\\
    ~& s_{t+1} = \mu_n(s_t,a_t) + \omega_t, \quad a_t = \bpi(s_t).
\end{align}
Using the derived upper bound and plugging it into \Cref{eq:upperBoundCVp1} yields
\begin{align}
    \pVc(s) &\leq \E_{\bpi}\left[\sum_{t=0}^\infty \gamma^t\left(\tilde c(s_t,a_t) + \lambda_{\text{pessimism}} \|\sigma_n(s_t,a_t)\|\right) \mid s_0=s \right],\\
        ~  s_{t+1}&=\mu_n(s_t,a_t) + \omega_t, \quad a_t = \bpi(s_t).
\end{align}
Hence, the proof of \Cref{theo:safety} holds using \(\Qtcn(s,a)\) from \Cref{eq:modified-cost-value} in \Cref{eq:Qcn-bound}. 
We can also upper-bound this in a more compact form by assuming the same kernel \(k\) for the cost and the dynamics prediction, so that we have \(\beta_n\sigma_n^c(s_t,a_t) + \lambda_{\text{pessimism}}\|\sigma_n(s_t,a_t)\| \leq  (\lambda_{\text{pessimism}} + \beta_n) \|\sigma_n^{f,c}(s_t,a_t)\|\). This follows from the non-negativity of \(\sigma\) and with \(\sigma_n^{f,c}\) being defined as the concatenation of the prediction uncertainties for the dynamics \(\sigma_n\) and the cost \(\sigma_n^c\).

\paragraph{Regret bound for unknown reward and cost function.}
Similar to the modified cost update in \Cref{eq:modified-cost-value}, we use \(\tilde r\) in \Cref{eq:modified_reward}, and optimize for
\begin{align}\label{eq:modeified-objective}
    \ppi = & \arg\max_\pi \E_{\tilde{p}, \pi, s_0}\left[\sum_{t=0}^\infty \left(\gamma^t\tilde{r}(s_t,a_t) + (\gamma^t\lambda_{\text{explore}}+\sqrt{\gamma^t}\lambda_{\text{expand}})\|\sigma_n(s_t,a_t)\|\right)\right],\\
        &~s_{t+1} = \mu_n(s_t,a_t ) + \omega_t, \quad a_t \sim \pi(\cdot | s_t).
\end{align}
For the derivation of the regret bound, the first regret term is affected by enlarging the pessimistic additive cost with \(\lambda_{\text{pessimism}}+\beta_n\). This contributes an additive \(\beta_n\) term, which does not change the proportionality of \(\eta_n \propto \frac{\beta_n^3}{1-\gamma}\). For the second regret term, we obtain an additional term that quantifies the uncertainty in the reward prediction, similar to the derivation in \Cref{eq:cost-bound}. Hence, we obtain the additive term in \Cref{lemma:regretSecond} while following the mean dynamics and using the model's mean predictions \(\mu_n^r\).
\begin{align}
    \PERb =&  \Jtr(\opi,f) - \Jtr(\ppi,f) \\ 
    \leq& \Jtr(\opi,\mu_n) + \lambda_n \sum_{t=0}^\infty  \E_{\sopi,s_0}\left[\gamma^t \|\sigma_n(s_t,a_t)\|\right] - \Jtr(\ppi,f)\\
    \leq& \sum_{t=0}^\infty \E_{\sopi} \left[\gamma^t( \mu_n^r(s_t,a_t) + \beta_n \sigma_n^r(s_t,a_t))\mid s_0=s\right]\\ &+ \lambda_n \sum_{t=0}^\infty  \E_{\sopi,s_0}\left[\gamma^t \|\sigma_n(s_t,a_t)\|\right] - \Jtr(\ppi,f).
\end{align}
By definition of the modified objective in \Cref{eq:modeified-objective}, we still upper bound in \Cref{lemma:optPlanningBound} the regret with the behavioral policy \(\ppi\), so that the following proof steps still hold. Further, we can again assume that the dynamics and the reward are part of the same kernel, resulting in the bound \(\beta_n\sigma_n^r(s_t,a_t) + \lambda_{n}\|\sigma_n(s_t,a_t)\| \leq  (\lambda_{n} + \beta_n) \|\sigma_n^{f,r}(s_t,a_t)\|\), where \(\sigma_n^{f,r}\) is defined as the concatenation of the prediction uncertainties of the dynamics \(\sigma_n\) and the reward \(\sigma_n^r\). Thereby, also for the second regret term, the proportionality remains \(\lambda_n \propto \frac{\beta_n}{1-\gamma}\). Consequently, the regret bound remains unchanged \(\smash{R_N \leq \mathcal{O}\left(\Gamma^{7/2}_{N\log(N)}\sqrt{N}\right)}\) for unknown reward and cost functions.

\section{Additional Experiments}
\label{sec:experiments-appendix}
We conduct further empirical analysis of \algname{SOOPER}, focusing on the following key aspects: 
\begin{enumerate*}[label=\textbf{(\roman*)}]
    \item demonstrating the role of prior policies in accelerating learning;
    \item leveraging terminations for safe policy improvement;
    \item ablating the role of \Cref{algo:safeRollout};
    \item robustness of \algname{SOOPER} to prior policies with varying pessimism levels;
    \item the necessity of acting pessimistically for safety, and
    \item the influence of optimism on the agent's regret.
\end{enumerate*}
The goal of these ablations is to provide additional empirical evidence to support the different design choices of \algname{SOOPER}.

\paragraph{Safe transfer under task misspecification.}
\looseness=-1
We demonstrate the effectiveness of policy priors compared to ``backup'' policies that are commonly considered in previous works. For this, we reduce the size of the goal in the PointGoal1 task from SafetyGym (see \Cref{fig:safety-gym}) from $0.3$ to $0.15$. This makes learning harder, as additional exploration is required to find the goal. We compare the following methods:
\begin{enumerate*}[label=\textbf{(\roman*)}]
    \item \algname{SOOPER} that uses a policy trained on the original $0.3$ goal task as a safe prior;
    \item \algname{SAILR} that uses a hand-crafted ``backup'' policy that brakes the robot upon triggering, following \algname{SAILR}'s ``advantage-based filtering''.
    The policy $\ppi$ in this case is initialized with the same policy prior that \algname{SOOPER} uses;
    \item \algname{SOOPER} that uses the policy prior only within \Cref{algo:safeRollout}, but learns its policy $\ppi$ completely from scratch;
    \item finally, a baseline that uses the hand-crafted braking backup policy, falling back to it as done in in \algname{SAILR}, while training the policy $\ppi$ from scratch.
\end{enumerate*}
\Cref{fig:task-transfer} demonstrates two results: 
\begin{enumerate*}[label=\textbf{(\alph*)}]
    \item as expected, both baselines that learn from scratch fail to converge within the training budget of 200 episodes.
    \item On the other hand, when finetuning the prior policy, using a hand-crafted braking ``backup'' policy indeed satisfies the constraint, however at the cost of slower learning compared to \algname{SOOPER}.
\end{enumerate*}
This result illustrates the importance of using policy priors not only to maintain safety, but also to guide and accelerate learning.

\begin{figure}[h]
    \centering
    \includegraphics{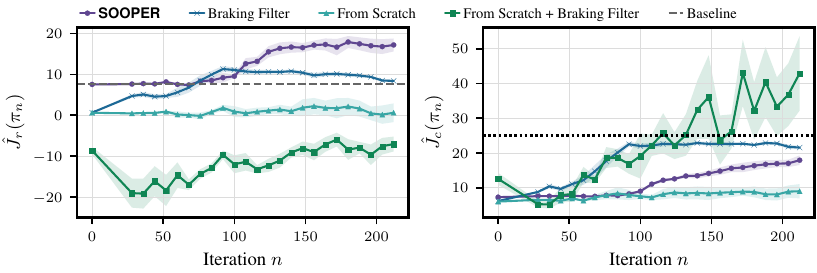}
    \caption{Learning curves of the objective and constraint for PointGoal1 goal size transfer. \algname{SOOPER} performs significantly better using the prior policy trained on a larger goal size, significantly helping in exploration when learning to reach a smaller goal.}
    \label{fig:task-transfer}
\end{figure}

\paragraph{Learning from terminations.}
We study the effect of learning from terminations on the agent's ability to reduce its dependency on the policy prior. To this end, we ablate the use of terminations in \Cref{eq:terminalTransition,eq:terminalReward} and compare \algname{SOOPER} with a variant that does not terminate in $\SM$.
In \Cref{fig:termination_ablations}, we observe a decrease in the incurred costs for \algname{SOOPER} with terminations, while \algname{SOOPER} without terminations still encounters high costs for both CartpoleSwingup and RaceCar. If the optimal unconstrained policy violates the safety budget, the agent trained without terminations fails to achieve the optimal safe performance. This failure occurs because, in the absence of termination signals, the agent receives no feedback about triggering the prior policy during online rollouts.
\begin{figure}[h]
    \centering
    \includegraphics{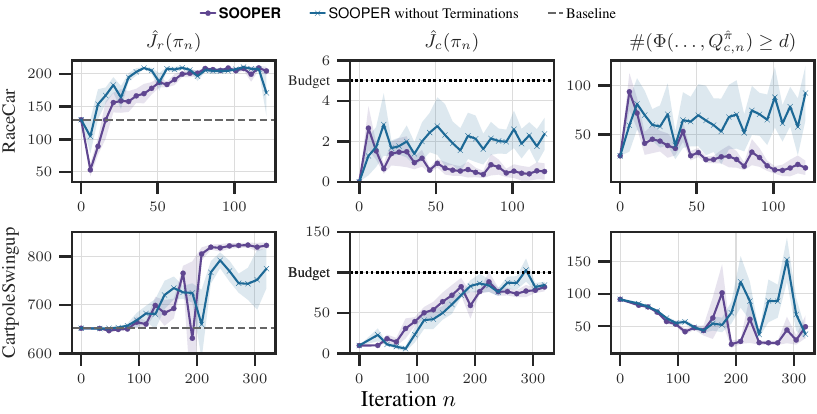}
    \caption{Learning curves of the objective, constraint, and number of times the safe prior is triggered during online rollouts. The number of times \algname{SOOPER} uses the prior shrinks over training.}
    \label{fig:termination_ablations}
\end{figure}

\paragraph{Robustness to policy priors.}
Different degrees of sim-to-real gap or lack of data coverage in the offline setting require increasing levels of conservatism.
We demonstrate that \algname{SOOPER} can effectively learn from policy priors with varying levels of conservatism and initial performance. For this, we train three prior policies on PointGoal2 for different sim-to-real gaps with sufficient pessimism levels and show that \algname{SOOPER} converges to optimal performance while satisfying the constraints using all three policy priors in \Cref{fig:abaltion_pi_prior}. These results highlight \algname{SOOPER}'s ability to safely explore in highly uncertain environments, given conservative policy priors.
\begin{figure}[h]
    \centering
    \includegraphics{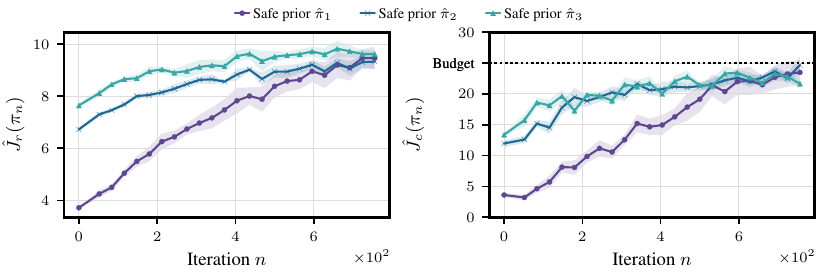}
    \caption{Learning curves of the objective and the constraint, starting from different safe policy priors with various levels of conservatism, differing in initial performances.}
    \label{fig:abaltion_pi_prior}
\end{figure}

\paragraph{Robustness to noise.}
Our problem formulation in \Cref{eq:dynamics} focuses on \emph{additive Gaussian} noise. While this modeling assumption is key to our theoretical development of \algname{SOOPER}---in particular for the proof of \Cref{lemma:relatingUncertaintyTrajectories}---we now demonstrate that \algname{SOOPER} remains safe under substantial disturbances that deviate from the additive Gaussian assumption. In particular, we consider noise of the form $\tilde{a}_t = a_t + \eta_t, \; \eta_t \sim \text{Unif}(-\eta, \eta)$. Notably, this form of noise enters non-linearly to the dynamics $f$. To study this setting, we sweep over the interval of the disturbance distribution with $\eta \in \{0, 0.05, 0.1, 0.15, 0.2, 0.25\}$ for all tasks considered in \Cref{fig:sim-to-sim}. We use the same prior policies of the experiment in \Cref{fig:sim-to-sim}; these policies are trained in simulation using $\eta = 0$.
Note that for all tasks in our experiments $\mathcal{A} = [-1, 1]^{d_{\mathcal{A}}}$, hence $\eta = 0.25$ is a non-negligible noise. We present our results in \Cref{fig:noise-ablation} where we report the performance at the end of training and the largest average accumulated cost measured during training for all tasks. As shown, \algname{SOOPER} maintains safety without suffering performance drops, even under significant noise disturbances.
\begin{figure}[h]
    \centering
        \centering
        \includegraphics{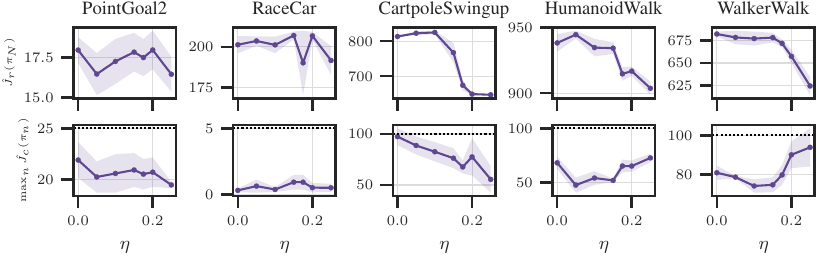}
    \caption{\algname{SOOPER} achieves good performance and satisfies the constraint under non-Gaussian, non additive noise.}
    \label{fig:noise-ablation}
\end{figure}
Crucially, these results show that even though $\pi_0$ is obtained under $\eta = 0$, \algname{SOOPER} can still satisfy the constraint, as long is $\pi_0$ is conservative enough. These results are in line with \Cref{theo:safety}: safety can be guaranteed for a very general class of noise structures as long as $\pi_0$ satisfies the constraint.

\paragraph{Pessimism for safe exploration.}
Given the previous ablation on conservatism of safe policy priors, we now study the scale of pessimism used \emph{during} learning.  As shown for CartpoleSwingup in \Cref{fig:pessimism_scaling}, using increased \(\lambda_{\text{pessimism}}\), the agent consistently satisfies the constraints without significant sacrifice in performance. Moreover, \algname{SOOPER} demonstrates robustness to increased \(\lambda_{\text{pessimism}}\), and showcases higher performance for lower chosen pessimism values. Crucially, this behavior is expected as it reflects an inherent tradeoff between pessimism and exploration.
\begin{figure}[h]
    \centering
        \centering
        \includegraphics{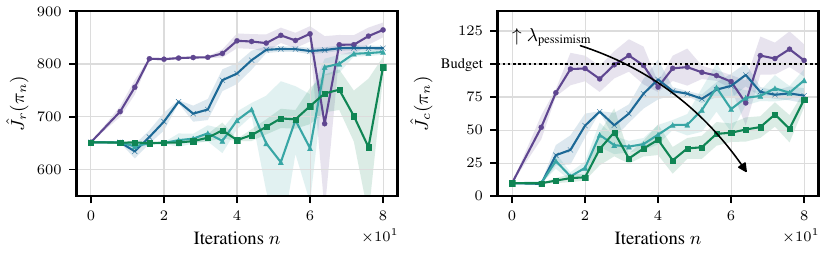}
    \caption{Learning curves of the objective and the constraint for \algname{SOOPER} with increasing \(\lambda_\text{pessimism}\). Pessimism improves robustness of constraint satisfaction.}
    \label{fig:pessimism_scaling}
\end{figure}

\paragraph{Optimistic planning for sublinear regret.}
\looseness=-1
We now study the influence of \algname{SOOPER}'s intrinsic reward on its performance, safety during learning and on the cumulative regret. Specifically, we study how varying $\lambda_{\text{expand}}$ together with $\lambda_{\text{explore}}$ improves \algname{SOOPER}'s performance in the CartpoleSwingup task. Intuitively, high values of $\lambda_{\text{expand}} + \lambda_{\text{explore}}$ may lead to excessive exploration, while small values of $\lambda_{\text{expand}} + \lambda_{\text{explore}}$ may not explore enough. As visible in \Cref{fig:optimism_scaling}, empirically, the cumulative regret \(\frac{1}{N}R(N)\) exhibits a single optimum, since for small optimism scales the agent under-explores and fails to expand the safe region, and for large optimism scale the agent over-explores, which reduces short-term greediness and lowers cumulative reward over the finite horizon. As \(N\) increases, short-term suboptimality due to over-exploration decreases and the cumulative reward becomes less sensitive to the optimism scale, consistent with the sublinear-regret guarantee for sub-linear growth in \(N\).
\begin{figure}[h]
    \centering
    \includegraphics{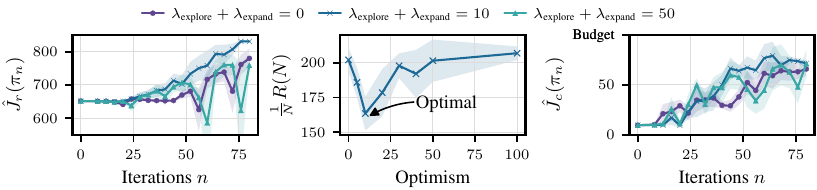}
    \caption{Learning curves of the objective and the constraint, as well as cumulative regret, for different optimism scales. A good value for the optimism value can be identified, where both under- and over-exploration are avoided and regret is minimized.}
    \label{fig:optimism_scaling}
\end{figure}

\paragraph{Safe offline-to-online on real hardware.}
We repeat our hardware experiment in \Cref{sec:experiments} on the race car however now use \algname{SOOPER} using an offline-trained policy. We collect 25K real-world transitions, corresponding to roughly ten minutes of real-time data. To collect this data, we use the simulation-trained policy used in our sim-to-real experiment as baseline. This dataset is then used to train a pessimistic policy using MOPO \citep{yu2020mopo} with a primal-dual solver for the constraint. We introduce pessimism by penalizing the reward and cost with model uncertainty \citep[see][]{yu2020mopo}. We repeat our hardware experiment with five random seeds, reporting the mean and standard error of the accumulated rewards and costs of each iteration $n$ in \Cref{fig:hardware-offline}. As shown, \algname{SOOPER} leverages the offline-trained policy prior to learn near-optimal policy while satisfying the constraint throughout learning.
\begin{figure}[h]
    \centering
    \includegraphics{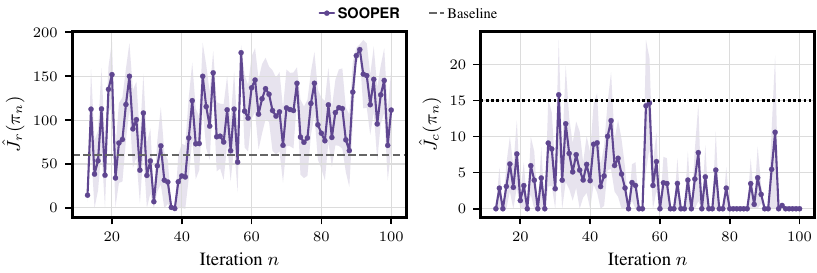}
    \caption{Performance and safety on the real race car given a prior policy that is trained via a fixed offline data. \algname{SOOPER} improves the performance while satisfying the constraint during learning.}
    \label{fig:hardware-offline}
\end{figure}

\paragraph{Learning directly from costs.}
While we demonstrate that \algname{SOOPER} can learn safety from an unconstrained MDP with termination, our method can be extended to learning directly from costs via model-based rollouts with any off-the-shelf CMDP solver. While this variant enjoys the safety guarantee of \Cref{theo:safety} using \Cref{algo:safeRollout}, it is not guaranteed to learn an optimal policy. We instantiate this variant with \algname{CRPO} and refer to this baseline as \algname{SafeCRPO}. The evaluations across various tasks show that \algname{SOOPER}'s \Cref{algo:safeRollout} can be paired with alternative CMDP solvers and ensures safe learning. However, \algname{SafeCRPO} consistently learns slower and achieves lower performance than \algname{SOOPER}. This demonstrates the advantage of \algname{SOOPER} using terminations paired with \Cref{algo:safeRollout} to learn safely and guarantee convergence.
\begin{figure}
    \centering
    \includegraphics{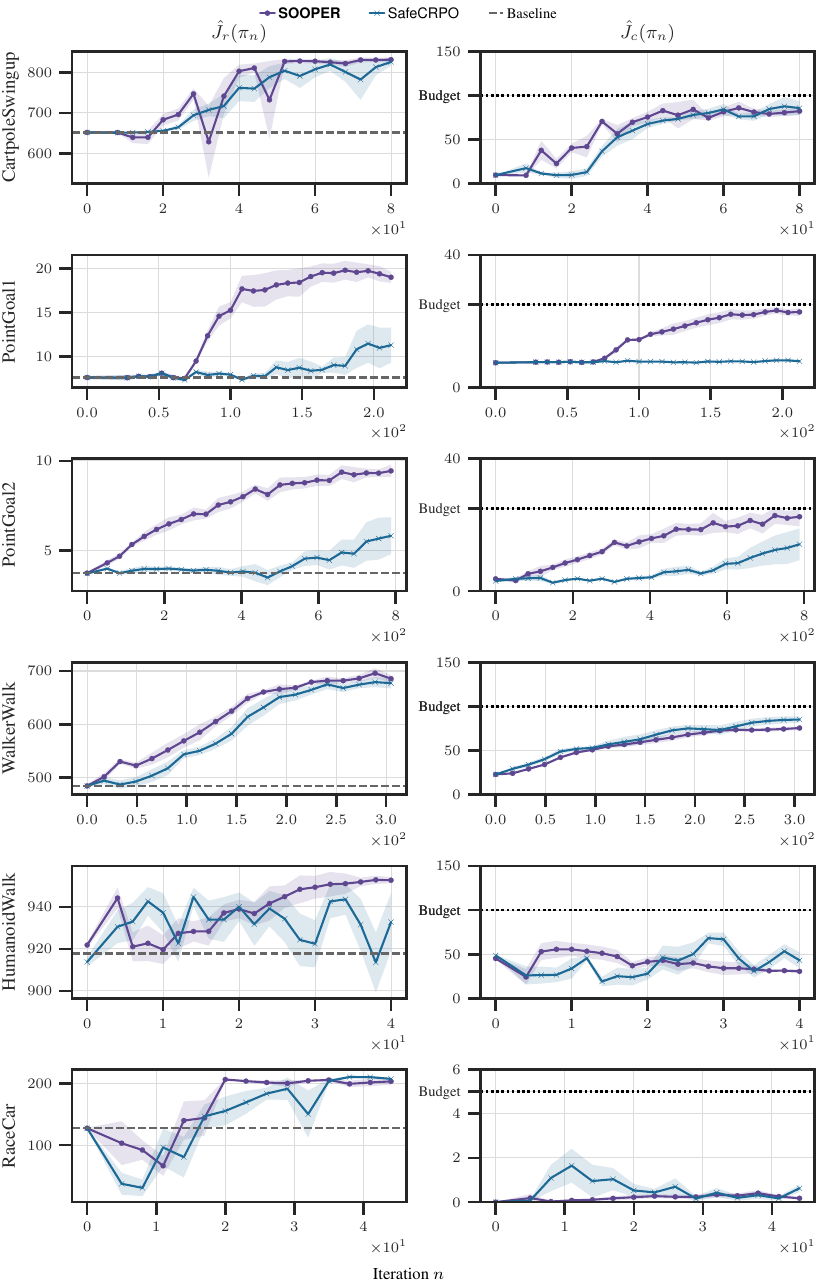}
    \caption{Learning curves of the objective and the constraint for \algname{SOOPER} and \algname{SafeCRPO} across all environments. \algname{SOOPER} consistently outperforms and shows asymptotic convergence.}
    \label{fig:ablation-cmdp}
\end{figure}

\paragraph{Runtime.}
\begin{wrapfigure}[15]{r}{0.36\textwidth}
\vspace{-1.4\baselineskip}
\begin{center}
    \includegraphics{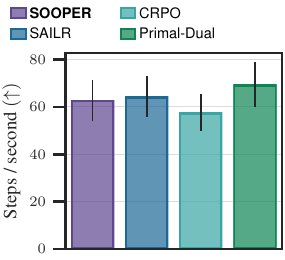}
\end{center}
\caption{
\looseness=-1The number of environment steps per second of \algname{SOOPER} compared to other baselines. 
}
\label{fig:wallclock}
\end{wrapfigure}
\looseness=-1Even though \algname{SOOPER} builds on MBPO \citep{mbpo} and therefore enjoys a relatively simple implementation, its use of \Cref{algo:safeRollout} and post-hoc relabeling of transitions with terminations deviate from the standard ``RL toolbox''.
Here, we study the effect of these components on the runtime of \algname{SOOPER} compared to other model-based baselines. Specifically, we record training throughput, namely, the total number of environment steps divided by the overall wall-clock time for the runs of the experiments in \Cref{fig:sim-to-sim}. In \Cref{fig:wallclock} we aggregate the number of steps per second across all tasks (using five seeds) and report the mean and standard deviation. As shown, even though \algname{SOOPER} has additional algorithmic components, it does not suffer from significant runtime overhead.

\paragraph{Additional learning curves.}
\looseness=-1
\Cref{fig:convergent-learning-behavior} complements \Cref{fig:sim-to-sim} in \Cref{sec:experiments} and shows the learning curves on all sim-to-sim tasks. \algname{SOOPER} consistently achieves near-optimal performance without violating the constraint during learning. In addition to the baselines presented in \Cref{sec:experiments}, we include another CMDP solver that is based on Log Barriers \citep{JMLR:v25:22-0878}. This solver is theoretically guaranteed to maintain feasibility during learning by penalizing iterates that approach the boundary of the feasible set. For all of our sim-to-sim experiments, we train the safe policy priors following SPiDR \citep{as2025spidrsimpleapproachzeroshot}. SPiDR leverages domain randomization to introduce the conservatism required for constraint satisfaction when deploying RL policies under distribution shifts. In particular, to obtain pessimistic policies, SPiDR penalizes the cost according to the disagreement in next-state predictions of the randomized dynamics.
\begin{figure}
    \centering
    \includegraphics{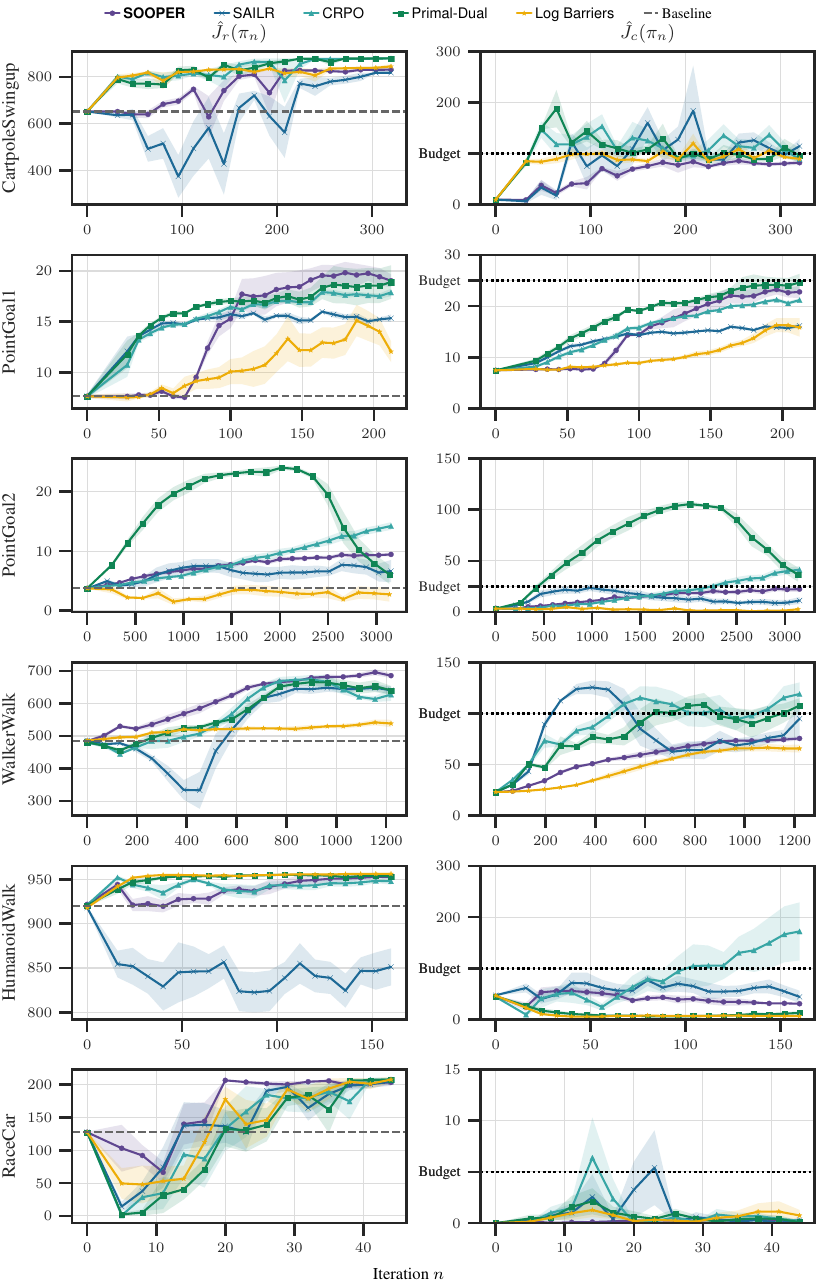}
    \caption{\looseness=-1Learning curves of the objective and the constraint for \algname{SOOPER} and comparison baselines. \algname{SOOPER} maintains safety throughout training in all tasks while outperforming the baselines in terms of performance.}
    \label{fig:convergent-learning-behavior}
\end{figure}

\section{SafetyGym}
\label{sec:safety-gym}
\paragraph{PointGoal environments.} In these tasks, the agent must navigate to a target location while avoiding hazards which include free-moving vases and designated hazard zones. The environment is depicted in \Cref{fig:safety-gym}. The initial positions of the agent, goal, vases, and hazards are randomized at the beginning of each episode. The reward function is defined as the change in Euclidean distance to the goal between successive steps
\begin{equation*}
    r_t(s_t, a_t) \eqdef d_{t-1} - d_t + \1[d_t \leq \epsilon],
\end{equation*}
where $d_t = \|\mathbf{x}_t - \mathbf{x}_{\text{goal}}\|_2$ is the Euclidean distance from the robot to the goal. The term $\1[d_t \leq \epsilon]$ is an indicator that gives a reward bonus when the agent reaches the goal, i.e. when it is within $\epsilon = 0.3$ of the center of the goal. The goal position is resampled to another free position in the environment once reached. PointGoal1 and PointGoal2 differ in the amount of obstacles located in the environment. A cost of 1 is incurred when the agent collides with a vase $v$, when one of the vases crosses a linear velocity threshold (after collision) or when the agent is inside a hazard zone $h$: 
\begin{equation*}
    c_t(s_t, a_t) \eqdef \1[\exists v\in V:\text{collides}(\mathbf{x}_t, \mathbf{x}_{v})] + \1[\exists v\in V:\mathbf{x}'_v \geq \gamma] + \1[\exists h \in H: d_t \leq \rho],
\end{equation*}
where  $\gamma=5e^{-2}$ and $\rho=0.2$. Please see the implementation of \citet{Ray2019} for more specific details.

\paragraph{Simulation gap.}
In our sim-to-sim experiments (in \Cref{fig:sim-to-sim}), before training, we uniformly sample the gear parameters of the actuators that control the linear and angular velocity of the robot. The precise ranges are given in \cref{tab:domain-randomization}. Note that the $z$-joint is a hinge joint that allows the agent to rotate around the $z$-axis and the $x$-joint is a slide joint that allows translation in the $xy$-plane.

\begin{figure}[h]
\centering
\begin{minipage}{0.45\linewidth}
    \centering
    \includegraphics[width=\linewidth]{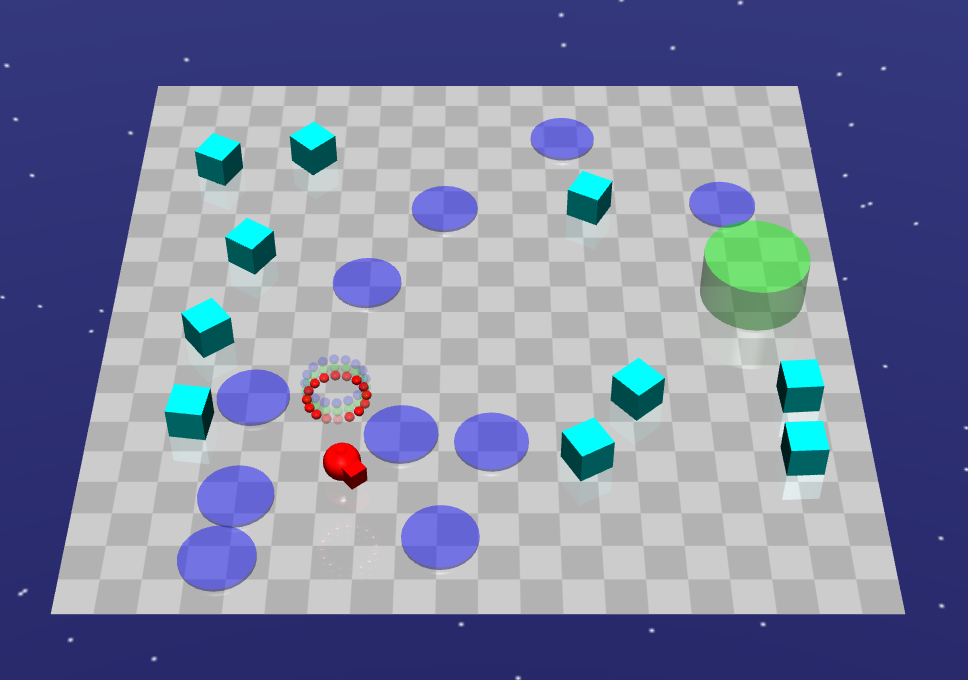}
    \caption{Visualization of a random initialization of the PointGoal2 environment. The red pointmass is the agent, the green transparent cylinder is the goal, the cyan boxes are vases and the blue circles are hazard zones.}
    \label{fig:safety-gym}
\end{minipage}\hfill
\begin{minipage}{0.47\linewidth}
    \centering
    \vspace{-0.4cm}
    {\renewcommand{\arraystretch}{1.5}
    \captionof{table}{Domain randomization parameters and ranges used during training.}
    \label{tab:domain-randomization}
    \begin{tabular}{l l}
        \hline\hline 
        \textbf{Parameter}   & \textbf{Value} \\
        \hline  
        Gear (x) & [–0.2, 0.2] \\
        Gear (z)            & [–0.1, 0.1] \\
        \hline  
    \end{tabular}
    }
\end{minipage}
\end{figure}

\section{RWRL Benchmark}
\label{sec:rwrl}
We use the RWRL benchmark suite \citep{challengesrealworld}, which adds safety constraints and distribution shifts to the tasks from the DeepMind Control benchmark suite \citep{tassa2018deepmind}. 
\begin{figure}[h]
    \centering
    \begin{tabular}{@{\hskip 5pt} c @{\hskip 5pt} c @{\hskip 5pt} c @{\hskip 5pt} c @{\hskip 5pt} c}
        \includegraphics[height=70pt, trim={2.5pt 2.5pt 2.5pt 2.5pt 2.5pt}, clip]{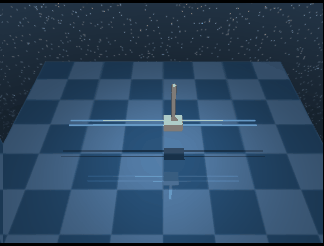} &
        \includegraphics[height=70pt, trim={2.5pt 2.5pt 2.5pt 2.5pt 2.5pt}, clip]{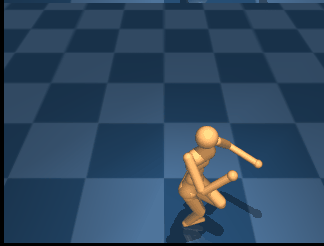} &
        \includegraphics[height=70pt, trim={2.5pt 2.5pt 2.5pt 2.5pt 2.5pt}, clip]{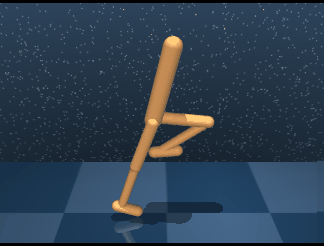}
    \end{tabular}
    \caption{RWRL tasks.}
    \vspace{-0.5cm}
    \label{fig:rwrl-demo}
\end{figure}

\paragraph{Constraints.}
We use the joint position limits constraint for HumanoidWalk, joint velocity limits for WalkerWalk, and slider position limits for CartpoleSwingup. These are the standard constraints proposed by \citet{challengesrealworld}.

\paragraph{Simulation gap.}
In our sim-to-sim experiments from \Cref{fig:sim-to-sim}, we follow a similar experimental setup as \citet{ramu}, introducing variability in the physical properties of the system during training to simulate distribution shifts. In \Cref{tab:domain-randomization-multitask} we provide the specific parameters we perturb in each task.
\begin{table}[h!]
\caption{Additive domain randomization parameters and ranges used during training of the policy prior across tasks from RWLR.}
\vspace{0.1cm}
\centering
{\renewcommand{\arraystretch}{1.5} 
\begin{tabular}{l l}
\hline\hline 
\textbf{Parameter}   & \textbf{Value} \\
\hline  
\multicolumn{2}{c}{\textbf{CartpoleSwingup}} \\
\hline
Pole Length     & [-0.1, 0.1] \\
Gear            & [-1.0, 1.0] \\
Knee Gear       & [-40., 40.0] \\
\hline
\multicolumn{2}{c}{\textbf{HumanoidWalk}} \\
\hline
Friction        & [-0.05, 0.05] \\
Hip Gear (x)    & [-20.0, 20.0] \\
Hip Gear (y)    & [-20.0, 20.0] \\
Hip Gear (z)    & [-60.0, 60.0] \\
\hline
\multicolumn{2}{c}{\textbf{WalkerWalk}} \\
\hline
Torso Length    & [-0.1, 0.1] \\
Gear            & [-5.0, 5.0] \\
\hline
\end{tabular}
}

\label{tab:domain-randomization-multitask}
\end{table}
\section{RaceCar Environment}
\label{sec:racecar}
In this environment the agent is asked to navigate to a goal position while avoiding static obstacles. This environment is implemented both in our sim-to-sim experiments, as well as on the real-world race car experiments.
\paragraph{Reward and cost.}
The reward at timestep $t$ is given by
\begin{equation*}
    r_t(s_t, a_t) \eqdef d_{t-1} - d_t + \1[d_t \leq \epsilon] - \lambda_c \|a_t\|_2 - \lambda_l \|a_t - a_{t-1}\|_2^2,
\end{equation*}
where $d_t$ is the Euclidean distance to the goal, and $a_t \in \mathbb{R}^2$ denotes the action applied at time $t$ (consisting of steering and throttle). The term $\1[d_t \leq \epsilon]$ is an indicator function that gives a reward bonus when the agent is within $\epsilon = 0.3$ of the goal. The penalties $\lambda_c$ and $\lambda_l$ weight the control effort (magnitude of the action) and the change in action between consecutive timesteps, respectively.
The cost function at time $t$ is defined as
\begin{equation*}
    c_t(s_t, a_t) \eqdef \sum_{i=1}^3 \1\left[\|x_t - p_i\| < \rho_i\right] E^k_t + \1[x_t \notin \mathcal{V}],
\end{equation*}
where $x_t \in \mathbb{R}^2$ is the car's position, $p_i$ and $\rho_i$ are the position and radius of the $i$-th obstacle and $E^k_t$ is the kinetic energy of the car at time $t$, simulating a plastic collision between the car and obstacles. This choice of cost function allows us to penalize more severely collisions in which the car smashes with high velocity into obstacles, as opposed to softly touching them. The second term penalizes the agent for leaving the valid area $\mathcal{V}$, which corresponds to a bounded rectangular arena.

\paragraph{Simulation gap.}
In the previous sim-to-sim environments, we model the sim-to-sim gap for the experiments in \Cref{fig:sim-to-sim} by introducing an auxiliary dynamics parameter (e.g., pendulum length) that is not observed during training. In contrast, in the RaceCar environment, the car dynamics in the training environments are governed by a semi-kinematic bicycle model that does not account for interactions between the tire and the ground. On the other hand, in evaluation, we use the dynamical bicycle model of \citet{kabzan2020amz}. We refer the reader to \citet{kabzan2020amz} for the detailed equations of motion.



\end{document}